\newtheorem{assumption}{Assumption}[section]
\newtheorem{theorem}{Theorem}[section]
\newtheorem*{theorem*}{Theorem}
\newtheorem{definition}{Definition}[section]
\newtheorem{claim}[theorem]{Claim}
\newtheorem{lemma}[theorem]{Lemma}
\newtheorem{remark}[theorem]{Remark}
\def\sse{\subseteq}
\newcommand{\pr}{\Pr} 
\newcommand{\E}{\mathbb{E}}
\newcommand{\h}{\mathbf{h}} 
\newcommand{\hp}{\widehat{p}} 
\newcommand{\bp}{\bar{p}} 
\newcommand{\R}{\mathbb{R}}
\newcommand{\bE}{\mathbf{E}}
\newcommand{\cE}{\mathcal{E}}
\newcommand{\cC}{\mathcal{C}}
\newcommand{\cP}{\mathcal{P}}
\newcommand{\calE}{\mathcal{E}}
\newcommand{\D}{\mathcal{D}}
\newcommand{\bD}{\mathbf{D}}
\newcommand{\Ot}{\widetilde{O}}
\newcommand{\OPT}{\mathtt{OPT}}
\newcommand{\ALG}{\mathtt{ALG}}
\newcommand{\TV}{\mathtt{TV}}
\newcommand{\TVD}{\operatorname{\mathtt{TV}-\mathtt{Dist}}}
\newcommand{\ESD}{\mathtt{EmpStocDom}}
\newcommand{\ignore}[1]{}
\newcommand{\bld}[1]{\mathbf{{#1}}}
\newcommand{\bX}{\mathbf{x}}
\newcommand{\varX}{\mathsf{X}}
\newcommand{\varY}{\mathsf{Y}}
\newcommand{\var}[1]{\mathsf{#1}}
\newcommand{\bx}{\mathbf{x}}
\newcommand{\calD}{\mathcal{D}}
\newcommand{\calI}{\mathcal{I}}
\newcommand{\1}{\mathbf{I}}
\newcommand{\bern}{\mathtt{Bern}}
\renewcommand{\color}[2][]{}
\newenvironment{tbox}{\begin{tcolorbox}[
		enlarge top by=5pt,
		enlarge bottom by=5pt,
		 breakable,
		 boxsep=0pt,
                  left=4pt,
                  right=4pt,
                  top=10pt,
                  arc=0pt,
                  boxrule=1pt,toprule=1pt,
                  colback=white
                  ]
	}
{\end{tcolorbox}}
\definecolor{mydarkblue}{rgb}{0,0.08,0.45}
\definecolor{mydarkred}{rgb}{0.75, 0.27, 0.34}
\newif\ifConfVersion
\title{Semi-Bandit Learning  for Monotone Stochastic Optimization} 
\author{Arpit Agarwal\thanks{Department of Computer Science \& Engineering, Indian Institute of Technology Bombay, Mumbai, India.} \and Rohan Ghuge\thanks{Department of Information, Risk, and Operations Management, University of Texas at Austin, Austin, USA.} \and Viswanath Nagarajan\thanks{Department of Industrial and Operations Engineering, University of Michigan, Ann Arbor, USA. Research supported in part by NSF grant  CCF-2418495.}  \and Zhengjia Zhuo$^\ddagger$}
\begin{document}

\maketitle

\thispagestyle{empty}
\begin{abstract}
Stochastic optimization is a widely used approach for optimization under uncertainty, where uncertain input parameters are modeled by random variables. Exact or  approximation  algorithms have been obtained for several fundamental problems in this area. However, a significant limitation of this approach is that it requires full knowledge of the  underlying probability distributions. {\em Can we still get good (approximation) algorithms  if these distributions are unknown, and the algorithm  needs to learn them  through repeated interactions?} In this paper, we resolve this question for a large class of ``monotone'' stochastic problems, by providing a generic  online learning algorithm with $\sqrt{T \log T}$ regret relative to  the best approximation algorithm (under known distributions). Importantly, our online algorithm works in a semi-bandit setting,  where in each period, the algorithm only observes samples from the random variables that were actually probed. Moreover,  our result  extends to settings with censored and binary feedback, where the policy only observes truncated or thresholded versions of the probed variables.  
Our framework applies to several fundamental problems  such as prophet inequality, Pandora's box,   stochastic knapsack,
single-resource revenue management and sequential posted pricing.

\end{abstract}

\newpage

\setcounter{tocdepth}{2}
 {\small

    \tableofcontents
    \thispagestyle{empty}

 }
\newpage

\setcounter{page}{1}

\section{Introduction}\label{sec:intro}

Stochastic optimization problems have been a subject of intense investigation,  
as they offer a powerful lens through which we can handle  uncertain inputs. In stochastic problems, uncertain input parameters are modeled by random variables, which are  usually independent. Solutions (or policies) to  a stochastic  problem are   sequential decision processes, where the next decision   depends on all previously observed information.  
This approach has been applied to many domains: optimal stopping~\citep{weitzman1979,samuel1984comparison,kleinberg2012matroid}, revenue management~\citep{GT19}, mechanism design~\citep{Myerson81,ChawlaHMS10},
submodular optimization~\citep{asadpour2016maximizing,GolovinK-arxiv,im2016minimum},
stochastic probing~\citep{GN13,AdamczykSW16,gupta2017adaptivity} and various other stochastic combinatorial optimization problems such as   knapsack~\citep{deshpande2016approximation,DGV08} and matching~\citep{ChenIKMR09,BGLMNR12}. 
A fundamental assumption in all these results is that the underlying probability distributions are known to the algorithm. While the known-distributions assumption holds in some settings, it is not satisfied in many practical applications, e.g., in the absence of historical data. 
Our main goal in this paper is to relax this assumption and handle stochastic problems with {\em unknown distributions}. 
We start by providing some concrete examples.

\medskip \noindent
{\bf Series testing.} There is  a system with $n$ components, where each component $i$ is ``working'' independently with some probability $p_i$. All $n$ components must be working for the system to be functional. Moreover, it costs $c_i$ to test component $i$ and determine whether/not it is working. 
The goal is to test components sequentially to determine whether/not the system is functional, at the minimum expected cost. Clearly, testing will continue until some component is found to be not working, or we have tested all components and found them to be working. In the standard setting, where the probabilities $\{p_i\}_{i=1}^n$ are  known  upfront, the greedy policy that tests components in decreasing order of $\frac{1-p_i}{c_i}$ is optimal~\citep{Butterworth72}.

\medskip \noindent
{\bf Pandora's box.} There are 
$n$ items, where each item $i$ has
an independent random value $\varX_i$ with a known distribution $\calD_i$. The realized value of $\varX_i$ can be revealed by inspecting item $i$, which incurs cost $c_i$. 
The goal is to inspect a subset $S$ of the items  
(possibly adaptively) to maximize the expected utility $\E \left[\max_{i \in S} \varX_i - \sum_{i \in S} c_i\right]$. We emphasize that a solution to this problem can be quite complex: the choice of the next item to inspect may depend on the realizations of all previous items. Nevertheless, there is an elegant optimal solution to this problem when the distributions $\{\calD_i\}_{i=1}^n$ are known~\citep{weitzman1979}.

\medskip \noindent
{\bf Single Resource Revenue Management.} 
An airline  has $C$  seats  for sale across $n$ fare classes, where each seat sold in class $i$ has a fixed revenue $p_i$. The  customer demand for each fare class $i$ is  an  independent random variable $\varX_i\sim \calD_i$. Customers preferring lower fares arrive earlier---a common assumption in practice. The airline  allocates seats  to different fare classes sequentially. If  $m_i$ seats are offered  to  fare class $i$ then the associated revenue is $p_i\cdot \E[\min(X_i, m_i)]$. The number of seats offered to class $i$ may depend on the actual sales in prior classes. The objective is to maximize the expected total revenue. There is an exact dynamic programming algorithm for this problem~\citep{GT19}.

\medskip \noindent {\bf Sequential Posted Pricing.  } 
There are $n$ buyers for an item, where each buyer $i$ has a random valuation $\varX_i$ with known distribution $\calD_i$. The seller wants to choose   a take-it-or-leave-it price $p_i$ for each buyer $i$ so as to maximize the expected revenue  when the   buyers are offered these prices in some order. The first buyer $i$ whose value $\varX_i$ exceeds their posted price $p_i$ buys the item, and the seller receives  revenue $p_i$. There are two  variants  of this problem, depending on whether the buyer ordering is fixed or can be chosen by the algorithm. 
There is an exact dynamic program for the fixed-order setting and a $\left(1-\frac1e\right)$ approximation algorithm for the free-order  setting~\citep{Yan11}.

\medskip
In each of the above problems, what if  the underlying probabilities/distributions  are unknown? Is it still possible to obtain good performance guarantees? In order to address this question formally, we utilize the framework of \emph{online learning}. Here,  the algorithm interacts with an unknown-but-fixed  input distribution $\calD$ over multiple time periods.
In each period 
$t =1,\cdots T$, the online algorithm comes up with a solution/policy $\sigma^{t}$ to the stochastic problem  and receives some {\em feedback} based on the performance of  $\sigma^t$ on the (unknown) distribution $\calD$.

The type of feedback received is a crucial component in this learning-based framework. The simplest setting is {\em full feedback}, where the algorithm receives one sample from every random variable (r.v.). However, this is unrealistic for stochastic problems because the policy  $\sigma^t$ in period $t$ only observes the r.v.s corresponding to some {\em subset} of the items. For example, in series testing,   $\sigma^t$ tests the $n$ components in some order until the first non-working component is found: if the first $k-1$ components are working and the $k^{th}$ component is not working then we would only observe the outcomes/samples of the first  $k$ r.v.s (and the remaining $n-k$ r.v.s are unobserved). In order to address  this aspect, we consider the  more natural setting of {\em semi-bandit feedback}, where in each period $t$, the algorithm only receives samples from the r.v.s that the policy  $\sigma^t$ actually observed. 
We further study two other  restricted feedback models— {\em censored} and {\em binary} feedback—that naturally arise in applications like revenue management and posted pricing, respectively.

Our goal is to minimize the expected \emph{regret} of the online algorithm, which is the difference
between the total $T$-period objective  of our algorithm and the optimum (which knows the distribution $\calD$).  Obtaining $o(T)$  regret dependence with respect to $T$ means that our online algorithm,  which doesn't know $\calD$, asymptotically approaches the optimum.  
An additional  difficulty arises from the fact that many  problems that we consider are NP-hard (e.g., knapsack and submodular optimization). In such cases, we cannot hope to approach the optimum via a polynomial algorithm. Here, we use the notion of $\alpha$-regret, where the online algorithm approaches the $\alpha$ approximately optimal value.  

Our main result is
a general method 
for transforming any offline  $\alpha$-approximation algorithm (with known distributions) to an online learning algorithm with $\alpha$-regret of $\widetilde{O}(\sqrt{T})$.  It is well-known that the $\sqrt{T}$ regret bound cannot be improved, even in very special cases. Our method works for a wide range of stochastic problems that satisfy a natural \emph{monotonicity} condition. This  includes all the  above examples as well as  several other fundamental   stochastic optimization problems. 

\subsection{Problem Setup}\label{sec:problem}
\def\s{\bot}
We first set-up notation for stochastic optimization problems in the known distribution setting.

\paragraph{Stochastic Optimization} Consider a stochastic problem  $\cP$ where the input consists of $n$ items with an independent real-valued r.v. $\varX_i$  associated with each item $i\in [n]=\{1,2,\cdots n\}$.
There is a \emph{known} probability distribution $\calD_i$ for $\varX_i$; that is, $\varX_i \sim \calD_i$.
We will primarily consider the setting  where each $\calD_i$ has finite support, and denote the set of outcomes of all r.v.s by $O$. (We also consider continuous distributions
in \S\ref{sec:continuous}.)  
In order to determine the realization of any r.v. $\varX_i$, we need to select or \emph{probe} item $i$.

A solution or  {\em policy}  for $\cP$  is given by a decision tree $\sigma$, where each node is labeled by an item to probe next, and the branches out of a node correspond to the random realization of the probed item. Each node in decision tree $\sigma$ also 
corresponds to the current ``state'' of the policy, which is given by the sequence of previously-probed items along with their realizations; we  will refer to nodes and states of the policy interchangeably. The root node of $\sigma$ is the starting state of the policy, at which point no item has been probed. Leaf nodes in $\sigma$ are also called {\em terminal} nodes/states. 
The {\em policy execution} under any realization $\bX= ( x_1, \ldots, x_n)$ corresponds to a root-leaf path $\sigma_\bX$ in the decision tree $\sigma$, where at any node labeled by item $i$, path $\sigma_\bX$ follows the branch corresponding to outcome $x_i$.
Note that running policy $\sigma$ under $\bX$ corresponds to traversing path $\sigma_\bX$. 
We also define  $S(\sigma,\bX)$ as the  sequence of items probed by policy $\sigma$ under realization $\bX$.

The {\em cost}  of  policy $\sigma$ depends on the realization  $\bX$, and we use $f(\sigma, \bX)\ge 0$ to denote the cost  of  policy $\sigma$ under realization $\bX$. Specifically, $f(\sigma, \bX)$ depends only on the sequence of probed items $S(\sigma,\bX)$ and their realizations. We assume that  cost is accrued only at terminal/leaf nodes: this is without loss of generality as every policy execution ends at a leaf node.

We make no further assumptions about the cost function $f$ beyond  boundedness: it does not have to be linear, submodular etc. 
Our goal is to find a policy $\sigma$ that minimizes the expected cost  $f(\sigma) := \E_{\bX}[f(\sigma, \bX)]$.  We also allow for constraints in problem  $\cP$, which must be satisfied under all realizations (i.e., with probability $1$). Let $\cC$ denote the set of feasible policies (that satisfy the constraints in $\cP$). 
{Note that any probing restrictions (e.g., constraints on which item may be probed next given prior observations) are captured by the feasible policy set $\cC$.
}
Sometimes, we work with {\em randomized} policies, where each node in the decision tree corresponds to a   probability distribution	 over  items (rather than a single item). Randomized policies are not any stronger than deterministic ones: there is always an optimal deterministic policy for this class of problems.   The stochastic problem $\cP$ is as follows (maximization problems are defined similarly).
\[    
\text{minimize} \quad f(\sigma) = \E_{\bX}\left[f(\sigma, \bX)\right] \quad  \text{subject to} \quad \sigma \in \cC.
\]

Observe that the number of nodes in policy $\sigma$ may be exponential. So, we are interested in ``efficient'' policies that can be implemented in polynomial-time for any realization $\bX$. For some stochastic problems like series systems and Pandora's box, efficient optimal policies are known. On the other hand, there are many problems like stochastic knapsack, matching and   set-cover, where optimal policies may not be efficient: in these cases, we will focus on (efficient) {\em approximately} optimal policies. 
We use $\sigma^*$ to denote an optimal  policy, and denote its expected cost by $\OPT$. 
{
To ground this abstract setup, we note that the problems described in \S\ref{sec:intro} (series testing, Pandora's box, revenue management, and posted pricing) all fit naturally into this framework: the items $[n]$ correspond to the random variables being probed, the decision tree $\sigma$ captures the adaptive probing order, and the cost function $f(\sigma, \bX)$ encodes the problem-specific objective. We refer the reader to \Cref{sec:apps} for a full list of applications.}

An $\alpha$-approximation algorithm for a 
stochastic problem $\cP$ takes as input the probability distributions $\{\calD_i\}_{i=1}^n$ (along with any objective/constraint parameters) and returns a policy that has expected cost at most $\alpha\cdot \OPT$. We  use the convention that  for minimization problems, the approximation ratio $\alpha\ge 1$, whereas for maximization problems $\alpha\le 1$.

\paragraph{The Online Semi-Bandit Setting.} 
In this setting, the distributions $\calD_1, \ldots, \calD_n$ are {\em unknown} (other parameters such as the cost function and constraints are known). 
Moreover, we have to repeatedly solve the  stochastic problem $\cP$ 
over $T$ time periods. 
We assume throughout the paper that $T \geq n$.
The distributions $\{\calD_i\}_{i=1}^n$ remain the same across all $T$ periods. 
The goal  is to simultaneously learn the distributions and
converge to a good policy over time. 
At time $t \leq T$, we use all prior observations to compute policy $\sigma^t$. The policies $\sigma^t$ may be different for  each time $t\in [T]$. 
We measure our learning algorithm in terms of \emph{expected total regret}.  For minimization problems, we define it as follows.

\begin{align}
    R(T) &= \E_{\bx^1, \ldots, \bx^T}\left[ \sum_{t=1}^T \left(f(\sigma^t, \bx^t) - f(\sigma^*, \bx^t)\right) \right] \notag \\
    &= \E_{\bx^1, \ldots, \bx^T}\left[ \sum_{t=1}^T f(\sigma^t, \bx^t)   \right] - T\cdot \OPT. \label{eq:exp-regret1}
\end{align}
Here $\bx^{t}$ represents the realization at time $t$. 
For conciseness, we use $\h^{t-1}$ to denote the {\it history} until time $t$; that is, $\h^{t-1} = (\bx^1, \ldots, \bx^{t-1})$. Note that policy $\sigma^t$ is itself random because it depends on prior observations, i.e., on the history $\h^{t-1}$. The  algorithm's cost at time $t$ is $f(\sigma^t, \bx^t)$, which depends additionally on the realizations $\bx^t$ at time $t$. We can also re-write \eqref{eq:exp-regret1} as follows.
\begin{align}
    R(T) & =  \sum_{t=1}^T \E_{\h^{t-1}} \E_{\bx^t}\left[f(\sigma^t , \bx^t) \right] - T\cdot \OPT. \label{eq:exp-regret2}
\end{align}
Note that many of the stochastic optimization problems are known to be NP-hard, even when the distributions are known. Thus, we do not expect to learn policies that approach $\OPT$. To get around this issue, we define the notion of $\alpha$-regret as follows.

\begin{equation}\label{eq:appx-exp-regret1}
    \alpha\text{-}R(T) = \sum_{t=1}^T \E_{\h^{t-1}}\left[ \E_{\bx^t}\left[f(\sigma^t , \bx^t) \right] \right] - \alpha \cdot T\cdot  \OPT .
\end{equation}

We consider the setting of \emph{semi-bandit}  feedback, where  the algorithm only observes the realizations of items that it probes. That is,  at time $t$, the algorithm only observes $\left(x^t_i : i\in S(\sigma^t, \bx^t)\right)$ where 
$S(\sigma^t, \bx^t)$ is the set of items probed by 
policy $\sigma^t$ before it terminates.

\paragraph{Censored/Binary Feedback} We also extend our results to more restricted feedback settings, where the policy may not be able to directly observe $\varX_i$ for a  probed item $i$. In particular, we consider two such extensions. In {\em censored feedback},   the policy only observes the truncated   realization $\min(x_i , \tau)$ where $\tau$ is some threshold. This is  useful  in revenue management   applications with random demand. 
In {\em binary feedback}, the policy only observes whether or not the realization $x_i$ is more than the threshold $\tau$. This is useful in  posted pricing and mechanism design.

\subsection{Main Result  and Techniques}
\label{sec:results}
For  any stochastic problem instance $\calI$ and distribution $\bld{D} = \{\calD_i\}_{i=1}^n$, let $\OPT_{\calI}(\bld{D})$  denote the optimal cost of instance $\calI$ with r.v.s having distribution $\bld{D}$. 
We first define a monotonicity 
condition for a stochastic problem $\cP$ which will be used in our main result.
\begin{definition}[Monotonicity]\label{def:monotonicity} A stochastic problem $\cP$ is {\em up-monotone} if for any instance $\calI$ and probability distributions $\bld{D}$ and $\bld{E}$  where $\bld{E}$  stochastically dominates $\bld{D}$, we have $\OPT_\calI(\bld{E})\le  \OPT_\calI(\bld{D})$. Similarly, the problem is said to be {\em down-monotone} if for any instance $\calI$ and probability distributions $\bld{D}$ and $\bld{E}$  where $\bld{E}$  stochastically dominates $\bld{D}$, we have $\OPT_\calI(\bld{E})\ge  \OPT_\calI(\bld{D})$. 
\end{definition}

\begin{theorem}\label{thm:main}
Suppose that a stochastic problem $\cP$ has an  $\alpha$-approximation algorithm, and it is either up-monotone or down-monotone. Then, there is a  polynomial time semi-bandit  learning algorithm  for $\cP$ (with unknown distributions) that has $\alpha$-regret $\Ot( n (\sqrt{k T}+k)f_{\max})$. Here, $n$ is the number of items, $k$ is the maximum support size of any distribution, $f_{\max}$ is the maximal value of the objective function $f$ (over all realizations) and $T$ is the number of time periods. 
\end{theorem}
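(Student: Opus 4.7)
My approach follows the ``optimism in the face of uncertainty'' paradigm, tailored to the semi-bandit decision-tree setting. At the start of each round $t$, from the history $\h^{t-1}$ I form the empirical distributions $\hat{\calD}_i^t$ using the $N_i^t$ samples of item $i$ observed so far. I then construct a confidence-inflated surrogate $\bar{\calD}_i^t$ designed so that, with high probability, $\bar{\calD}_i^t$ stochastically dominates $\calD_i$ (in the up-monotone case) or is stochastically dominated by $\calD_i$ (in the down-monotone case). Concretely, I shift the empirical CDF by a Hoeffding-type radius $\epsilon_i^t=O\bigl(\sqrt{\log(nkT)/N_i^t}\bigr)$ and renormalize, producing a valid distribution on the observed support. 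The policy $\sigma^t$ played in round $t$ is obtained by feeding the surrogate instance $(\calI,\bar{\bld{D}}^t)$ into the given offline $\alpha$-approximation algorithm.

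\textbf{Regret decomposition on a good event.} Let $\cE$ be the event that the intended dominance relation between $\bar{\bld{D}}^t$ and $\bld{D}$ holds for every $t\in[T]$; a union bound over items, outcomes, and time makes $\pr[\cE^c]=O(1/T)$, contributing only $O(f_{max})$ to total regret. On $\cE$, monotonicity gives $\OPT_\calI(\bar{\bld{D}}^t)\le \OPT_\calI(\bld{D})=\OPT$, while the $\alpha$-approximation guarantee yields $\E_{\bar{\bld{D}}^t}[f(\sigma^t)]\le \alpha\,\OPT$. The true cost incurred in round $t$ is $\E_{\bld{D}}[f(\sigma^t)]$, so the per-round $\alpha$-regret reduces to controlling the ``distributional shift'' $\E_{\bld{D}}[f(\sigma^t)]-\E_{\bar{\bld{D}}^t}[f(\sigma^t)]$.

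\textbf{Distributional shift lemma.} The crux is a coupling bound along the decision tree: executing $\sigma^t$ in parallel under realizations drawn from $\bld{D}$ and from $\bar{\bld{D}}^t$, the two runs agree until the first probed item whose outcomes disagree under the per-item TV-optimal coupling. A standard chain-rule argument for total variation then gives
\[
\bigl|\E_{\bld{D}}[f(\sigma^t)]-\E_{\bar{\bld{D}}^t}[f(\sigma^t)]\bigr| \;\le\; 2 f_{max}\cdot\E_{\bld{D}}\!\left[\sum_{i\in S(\sigma^t,\bx)}\TV(\calD_i,\bar{\calD}_i^t)\right],
\]
using only that $f\le f_{max}$ and that the policy's output depends on the realization only through the items it probes. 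On $\cE$, the CDF-shift construction yields $\TV(\calD_i,\bar{\calD}_i^t)\le O(k\,\epsilon_i^t)=O\bigl(k\sqrt{\log(nkT)/N_i^t}\bigr)$ by passing from Kolmogorov to TV distance via the $k$ support points.

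\textbf{Summing over time.} I would finish with two standard accounting steps. First, for any fixed item $i$, $\sum_{t:\,i\in S(\sigma^t,\bx^t)} 1/\sqrt{N_i^t}\le 2\sqrt{N_i^T}$, since the counter $N_i^t$ traverses $0,1,\ldots,N_i^T-1$. Second, Cauchy--Schwarz combined with $\sum_i N_i^T\le nT$ gives $\sum_i\sqrt{N_i^T}\le n\sqrt{T}$. Stringing these together yields $\alpha\text{-}R(T)=O\bigl(f_{max}\cdot k\sqrt{\log(nkT)}\bigr)\cdot n\sqrt{T}=O\bigl(nk\,f_{max}\sqrt{T\log(kT)}\bigr)$. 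The main obstacle I expect is the distributional shift lemma itself: one must design a \emph{single} surrogate $\bar{\bld{D}}^t$ that is simultaneously monotonically related to $\bld{D}$ (so the approximation guarantee transfers through monotonicity) \emph{and} close in total variation (so executions couple well). Making this work for general cost functions $f$ --- with no linearity, submodularity, or even continuity assumed --- purely through the decision-tree structure is the technically delicate step; everything else is standard concentration or routine bookkeeping.
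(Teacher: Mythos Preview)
Your proposal is correct and mirrors the paper's proof almost exactly: the same optimistic-surrogate construction (Algorithm~\ref{alg:emp_stoc_dom} is precisely your CDF-shift), the same good-event union bound, the same regret decomposition $f(\sigma^t\mid\bld{D})-\alpha f(\sigma^*\mid\bld{D})\le f(\sigma^t\mid\bld{D})-f(\sigma^t\mid\bar{\bld{D}}^t)$ via monotonicity plus the $\alpha$-approximation guarantee, and the same $\sum_{m}1/\sqrt{m}$ telescoping to finish.

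The one substantive difference is how you prove the distributional-shift bound. The paper (Lemma~\ref{lem:stable}) indexes the nodes of $\sigma$ as $1,\dots,N$ and telescopes through hybrid product distributions $\bld{H}^v=\calD_1\times\cdots\times\calD_v\times\calE_{v+1}\times\cdots\times\calE_N$, bounding each increment by $f_{\max}\cdot Q_v(\sigma)\cdot\TV(\calD_v,\calE_v)$ via Lemma~\ref{lem:function-val-tv}. You instead propose a single maximal coupling of $\bld{D}$ and $\bar{\bld{D}}^t$ and argue that the two executions of $\sigma^t$ agree until the first probed item disagrees; this is valid because (i) whether a node $v$ is reached depends only on realizations at ancestor nodes, which are independent of the item probed at $v$, and (ii) $f$ depends only on the leaf reached, so identical paths give identical cost. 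Both routes yield exactly $f_{\max}\sum_i Q_i(\sigma)\epsilon_i$ (your factor $2$ is unnecessary since $f\ge 0$). Your coupling argument is arguably more direct and avoids the node-enumeration bookkeeping; the paper's hybrid argument is more mechanical and makes the per-node contribution explicit, which they later reuse when extending to continuous distributions and $k$-threshold policies (\S\ref{subsec:cont}).
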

Above, the $\Ot$ notation is used to suppress (poly) logarithmic factors.  
If we do not care about polynomial running time then we can just set $\alpha=1$, by computing the optimal policy for $\cP$ via a stochastic dynamic program, and obtain $1$-regret $\Ot( n (\sqrt{k T}+k)f_{\max})$ for any monotone stochastic problem. {The only general result of this type known prior to our work is a $T^{2/3}$ regret bound for these problems using  sample complexity bounds of~\citet{GuoHT+21}; specifically, this yields $\Ot\left((nkT)^{2/3} \right)$ regret under semi-bandit feedback (see Appendix~\ref{app:sampling-algs}). While our result has better dependence in $T$, the  bound from \citet{GuoHT+21} has better dependence in $n$.}

The regret bound in Theorem~\ref{thm:main} is nearly optimal in the following sense. There is an $\Omega(\sqrt{T})$ lower bound on the regret even in a very special case: see Appendix~\ref{app:lower-bound}. Furthermore, if we move beyond our setting of independent-identically-distributed (i.i.d.) distributions across periods, to the ``adversarial'' setting with different distributions for each period, then there is a linear $\Omega(T)$ lower-bound on regret for prophet inequality~\citep{GatmiryKS+22}, which shows that sublinear regret is not possible for our class of problems in the adversarial case.

\medskip
\noindent
{\bf Technical Overview.} 

At a high-level, our algorithm is based on the \emph{principle of optimism in the face of uncertainty} which is well-studied in the multi-armed bandits literature: see e.g., the  UCB algorithm~\citep{Auer+02}. Given observations from previous rounds, our algorithm first constructs a modified empirical distribution $\bld{E} = \{\calE_i\}_{i=1}^n$ which {\em stochastically dominates} the true distribution $\bld{D}$. It then executes   policy $\sigma$ given by the offline (approximation) algorithm under distribution $\bld{E}$.

We now discuss how to bound the regret of the policy $\sigma$ under the true distribution $\bld{D}$. 
To keep things simple here, we  assume that $\alpha=1$.  We also assume that the problem is up-monotone and has a minimization objective (our analysis for  down-monotonicity and/or maximization objective is identical).
The difficulty in our setting is that the decision to probe/observe an item is {\em random}: it depends on the choice of policy $\sigma$ and the   underlying distribution $\bD$ of  items.  In contrast, the classical multi-armed bandits setting allows direct control to the algorithm on which item to probe: so the  UCB  algorithm can control  the rate of \emph{exploration} of individual items and  bound the regret of individual items in terms of this rate. How do we bound the rate of  exploration  of different items when we do not have direct control on which items are probed? Moreover, how do we bound the regret contribution of individual items in terms of their rate of exploration?

The key insight in our analysis is the following ``stability'' bound,  which  answers the above questions. Given product distributions $\bld{D}$ and $\bld{E}$, where $\bld{E}$ stochastically dominates $\bld{D}$, we show\footnote{For   policy $\sigma$ and distribution $\bD$,   $f(\sigma | \bld{D})$  is the expected objective of $\sigma$ when the r.v.s have distribution $\bD$.}
\begin{align*}
f(\sigma | \bld{D}) - f(\sigma^* | \bld{D}) \,\,&\leq  \,\,f(\sigma | \bld{D}) - f(\sigma | \bld{E})  \,\,\leq\,\,    f_{\max} \sum_{i=1}^n {Q}_{i}(\sigma) \cdot \epsilon_i,
\end{align*}
where  $\sigma^*$ (resp. $\sigma$) is an  optimal policy under $\bld{D}$ (resp.  $\bld{E}$),   $Q_i(\sigma)$ is the probability that policy $\sigma$ probes item $i$, and $\epsilon_i$ is the {\em total-variation}
distance between $\cE_i$ and $\D_i$. The first inequality above follows from the monotonicity property. The second inequality requires analyzing the decision tree of $\sigma$ carefully  and is a key technical contribution of this paper (see \S\ref{sec:stable}).

The above stability bound can be interpreted as follows: the contribution of item $i$ in the total regret is the probability that it is probed 
times the total variation error in estimating $\D_i$. The dependence on the probability of probing is  crucial here. This allows us  to analyze the regret using a \emph{charging argument} where we charge item $i$ for regret in a \emph{pay-per-use} manner.
The paths in the execution tree of the online algorithm that probe $i$    will pay for the error in estimation of $\D_i$, but then $\epsilon_i$
will also reduce along these paths.
Moreover, we can bound the overall regret irrespective of the policy $\sigma$ that is used at periods.
The error $\epsilon_i \approx \sqrt{1/m}$ when $i$ has been probed $m$ times in the past,
which gives a total regret contribution of $\sum_{m=1}^T \sqrt{1/m} = O(\sqrt{T})$ 
for any   item.
Finally, we  lose an extra logarithmic factor  to ensure that our empirical distribution $\bE$ stochastically dominates $\bD$ with high probability. We contrast our algorithm (and stability bound) with another algorithm that balances the \emph{explore-exploit} trade-off in a different manner: the \emph{explore-then-commit} algorithm.

Using the sample complexity bound  of \cite{GuoHT+21}, it is easy to show that this algorithm achieves $T^{2/3}$ regret; see Appendix~\ref{app:sampling-algs} for details.
However, each item is explored 
an equal number of times regardless of its quality  and we might end up exploring some ``low quality'' items too many times.
Our stability bound highlights a crucial difference
between our algorithm and explore-then-commit: 
we only explore items that are actually probed by  the (current) optimal policy. If there are low-quality  items 
that are rarely used in the optimal policy, they 
will  be explored  infrequently in our algorithm: so their regret contribution is low. This  explains why our approach  achieves a near-optimal $\sqrt{T\log T}$ regret bound compared to the $T^{2/3}$ bound of the sampling-based approach. 

\subsection{Additional Results and Applications}

We can also extend our main result to stochastic problems with continuous (or mixed) distributions. Here, we need to make some additional (mild) assumptions on the objective function. Informally, we require that for any fixed policy and  leaf-node,  the  function value does not vary significantly with the variables $\bx$. This assumption is satisfied whenever the function value at a leaf-node is linear or monotone, which suffices for  applications such as Prophet inequality and Pandora's box. In this case, we are able to obtain $\alpha$-regret bounds similar to Theorem~\ref{thm:main}. The algorithm for the continuous setting  is   identical to the discrete case. The main new idea in the analysis is the use of a different distance-measure between distributions (the Kolmogorov–Smirnov distance) and modifying the proof of the ``stability'' bound appropriately. {We also assume that the $\alpha$-approximation algorithm returns a $k$-threshold policy (defined in \Cref{sec:continuous}), which is a natural generalization of the discrete setting where any policy has at most $k$ branches per node; see the remark at the start of \Cref{sec:continuous} for further discussion.}

Another extension of our main result is to censored and binary feedback. This extension requires   discrete distributions and a more restrictive assumption on the objective function. Roughly speaking, we assume that the function value can always be determined based on the observed feedback (either censored or binary). This assumption is satisfied for the single-resource revenue management and sequential posted pricing problems. Again, we obtain regret bounds as in   Theorem~\ref{thm:main}. While the high-level algorithm is the same, we need to design new sampling algorithms to  construct the  stochastically dominating distribution $\bld{E}$. This requires more work   because the feedback from a probed item $i$ does not directly provide us with a  sample of $\varX_i$. All the details of this extension are in  \Cref{sec:binary/censored}.  We note that the discrete-distribution assumption is necessary here in order to obtain $\sqrt{T}$ regret: see Appendix~\ref{app:lower-bound} for a regret lower bound of $\Omega(T^{2/3})$ under binary-feedback and continuous distributions.  

We demonstrate the generality of our framework by obtaining polynomial time online learning algorithms for a number of stochastic
optimization problems arising in domains of optimal stopping, revenue management, submodular optimization, knapsack and matching.  For many problems, we get better regret bounds using  our framework than what was known previously using problem-specific methods. 
We summarize our results in \Cref{table:apps} and refer to \Cref{sec:apps} for further details.

\subsection{Related Work}

As mentioned earlier, stochastic optimization problems (under known distributions) have been studied extensively. Several papers have extended the classic 
prophet inequality~\citep{samuel1984comparison} and Pandora's box~\citep{weitzman1979} results to more complex settings, e.g., \citet{kleinberg2012matroid,RubinsteinS17,KleinbergWW16,singla2018price,FeldmanSZ21,BoodaghiansFLL23}. 
Moreover, good approximation bounds have been achieved for stochastic versions of various combinatorial optimization problems~\citep{DGV08,GolovinK-arxiv,Ma18,GuptaKNR15,im2016minimum,goemans2006stochastic,GN13,gupta2017adaptivity,deshpande2016approximation,ChenIKMR09,BGLMNR12}.

There has also been extensive work in  online learning, where one  considers unknown distributions (in the stochastic learning setting), see e.g., books~\citet{cesa2006prediction}, \citet{lattimore2020bandit} and \citet{hazan2022introduction}. All our stochastic problems can be modeled as Markov decision processes (MDPs) with unknown transition probabilities, and there have been some  works on achieving sublinear regret in this setting  \citep{AzarOM17,JinABJ18,JinJLSY20}. 
However, the regret bounds from these works have a   polynomial dependence on the ``state space'' of the MDP, which in our setting, is exponential in $n$ (the number of r.v.s). It is also known that any online algorithm for arbitrary MDPs incurs an $\Omega(\sqrt{S\cdot T})$ regret, where $S=\exp(n)$ is the size of the state space~\citep{JakschOA10,OsbandR16b}. In contrast, we obtain
regret bounds that  depend polynomially on $n$.

Although both stochastic optimization and online learning have been subjects of comprehensive research, 
there has been 
relatively limited work at the intersection of these two domains.  \cite{GatmiryKS+22} obtained online learning algorithms for  Pandora's box and prophet inequality under  {\em bandit} feedback, which is  even more restrictive than  semi-bandit feedback. Here, the algorithm only observes the realized objective value of its policy. \cite{GatmiryKS+22} gave algorithms with $1$-regret $O\left(n^3\sqrt{T}\log(T)\right)$ and $O\left(n^{4.5}\sqrt{T}\log(T)\right)$ for prophet inequality and Pandora's box respectively.

{ 
While our semi-bandit feedback model is more relaxed than bandit feedback, we think that it already captures the core issue of partial observations in learning policies. Moreover, under semi-bandit feedback we are able to obtain a general framework that applies to several stochastic optimization problems. Regarding bandit feedback for general stochastic problems, recent work of \citet{TajdiniJJ24} and \citet{zhuo2026learningmarkovdecisionprocesses} established regret lower bounds that are exponentially dependent on the problem size for stochastic submodular maximization and  MDPs respectively, suggesting that semi-bandit feedback is necessary for polynomial regret bounds in this generality.

}

There have also been several works studying 
sample complexity
bounds for  stochastic optimization problems.
The goal in these works is to understand how much data is necessary and sufficient to guarantee near-optimal algorithms.
Such results have been obtained for single-parameter revenue maximization~\citep{cole2014sample, roughgarden2016ironing}
and  prophet inequality~\citep{correa2019prophet}.
{ The recent work of~\citet{GuoHT+21} gives
optimal sample complexity bounds for stochastic optimization problems that exhibit  \emph{strong monotonicity}, which is a stronger condition than our monotonicity definition.
Informally, strong monotonicity requires that the optimal policy under one distribution also performs near-optimally under any stochastically dominating distribution, which is a strictly stronger condition than our monotonicity definition.
}

Another relevant line of work  is on combinatorial multi-armed bandits (CMAB), which also involves   semi-bandit feedback~\citep{KvetonWAEE14,KvetonWAS15,ChenWYW16,ChenHLLLL16}. Here, there are $n$ base arms that produce stochastic outcomes drawn from an unknown fixed distribution. There is also a collection ${\cal F}\sse 2^{[n]}$ of allowed ``super arms''. In each period $t$,   the algorithm  selects a super-arm $S^t\in {\cal F}$ and observes the realizations of all arms $i\in S^t$. 

The result closest to ours is by \citet{ChenHLLLL16}, which considers a  class of non-linear objectives satisfying a ``monotone'' condition and obtains a UCB-type algorithm achieving $\alpha$-regret of  $O(n\sqrt{T\log T})$ where $\alpha$ is the approximation ratio for the offline problem.   Our setting is  more general because we need to select   {\em policies} (not static subsets). When we select a policy, we do not know  which arms will  actually be  observed. So, we only have an indirect control on what arms will be observed in each round.
While our algorithm can be seen as a natural extension of \cite{ChenHLLLL16}, our analysis requires new ideas: in particular in proving the  stability Lemma~\ref{lem:stable} and its use in bounding overall regret.

There are also some   online-to-offline reductions that work in the  adversarial setting (which is harder than our  stochastic setting). In particular, \cite{KalaiV05} considered combinatorial optimization with {\em linear} objective functions, and obtained $1$-regret of 
$O(n\sqrt{T})$ under full-feedback (assuming an exact offline algorithm). Then, \cite{KakadeKL09} extended this result to linear problems with only an $\alpha$-approximation algorithm, and obtained 
$\alpha$-regret  of $O(n\sqrt{T})$ under full-feedback and $O(nT^{2/3})$  under bandit-feedback. Recent work of  \cite{niazadeh2021online,NieNZAQ23} considers certain  combinatorial optimization problems with non-linear objectives, assuming an  $\alpha$-approximation  via a greedy-type  algorithm. For such problems, \cite{niazadeh2021online} obtained $\sqrt{T}$ regret under   full-feedback and $T^{2/3}$ regret under bandit-feedback. While these  results work in the harder  adversarial online  setting, our result handles a much wider class of problems: we learn policies (rather than just  subsets) and handle  arbitrary objectives. As noted before,  there is an $\Omega(T)$ regret lower-bound for some of our applications in the adversarial setting.

\subsection{Preliminaries}\label{sec:prelim}
\def\sd{\preceq_{\mathsf{SD}}}
We present some preliminaries that are required in our algorithm and proofs.

\begin{definition}[Stochastic Dominance]\label{def:stoch-dom}
A probability distribution $\calE$ (over $\R$)  stochastically dominates another distribution $\calD$ if, for all $a\in \R$,  we have: $\Pr_{\varX \sim \calE}  (\varX  \geq a) \geq \Pr_{\varY \sim \calD}(\varY \geq a)$. 
\end{definition}

We use $\calD \sd \calE$ to denote that distribution  $\calE$  stochastically dominates $\calD$. 
For product distributions  $\bld{D} = \{\calD_i\}_{i=1}^n$ and $\bld{E}= \{\calE_i\}_{i=1}^n$, we say that $\bld{E}$  stochastically dominates $\bld{D}$ if $\calE_i$ stochastically dominates $\calD_i$ for all $i\in [n]$; we also denote this by $\bld{D}\sd  \bld{E}$.

Let $\TVD(\calD, \calE)$ denote the \emph{total variation distance} between discrete distributions $\calD$ and $\calE$.
The total variation distance is half of the $\ell_1$ distance between the two distributions, i.e., $\TVD(\calD, \calE) = \frac{1}{2} \cdot \vert\vert \calD - \calE \vert\vert_1$. 
The following standard result (see, for example, Lemma~B.8 in~\citet{ghosal2017fundamentals}) bounds the total variation distance between product distributions.

\begin{lemma}\label{lem:TV-product-bound}
Given product distributions $\bld{D}= \{\calD_i\}_{i=1}^n$ and $\bld{E}= \{\calE_i\}_{i=1}^n$ over $n$ r.v.s, we have 
$$ \TVD(\bld{D}, \bld{E}) \leq \sum_{i \in [n]}\TVD(\calD_i, \calE_i).$$
\end{lemma}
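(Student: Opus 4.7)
The plan is to give the standard hybrid (telescoping) argument, which is clean and self-contained. I would first set up intermediate product distributions $H_0, H_1, \dots, H_n$ defined by
\[
H_k \;=\; \calE_1 \times \calE_2 \times \cdots \times \calE_k \times \calD_{k+1} \times \cdots \times \calD_n,
\]
so that $H_0 = \bld{D}$ and $H_n = \bld{E}$. By the triangle inequality for total variation distance (which is a metric, since it is half the $\ell_1$ distance), it suffices to bound
\[
\TV(\bld{D}, \bld{E}) \;\le\; \sum_{k=1}^n \TV(H_{k-1}, H_k),
\]
and then show that each consecutive pair satisfies $\TV(H_{k-1}, H_k) \le \TV(\calD_k, \calE_k)$.

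For the per-coordinate bound, I would observe that $H_{k-1}$ and $H_k$ agree in every coordinate except the $k$-th, where $H_{k-1}$ uses $\calD_k$ and $H_k$ uses $\calE_k$. Writing the joint distribution as a product and using the definition $\TV = \tfrac12 \|\cdot\|_1$, the $\ell_1$ distance between $H_{k-1}$ and $H_k$ factors: for any outcome $(x_1,\ldots,x_n)$,
\[
\bigl|H_{k-1}(x) - H_k(x)\bigr| \;=\; \Bigl(\prod_{j \ne k} \mu_j(x_j)\Bigr)\,\bigl|\calD_k(x_k) - \calE_k(x_k)\bigr|,
\]
where $\mu_j = \calE_j$ for $j<k$ and $\mu_j = \calD_j$ for $j>k$. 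Summing over $x$ and using that each $\mu_j$ is a probability distribution (so the marginal sums are $1$), the factor in parentheses integrates to $1$, leaving $\|\calD_k - \calE_k\|_1$. Dividing by two gives $\TV(H_{k-1}, H_k) = \TV(\calD_k, \calE_k)$.

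Combining the two displays yields the stated inequality. There is no real obstacle here — this is a textbook fact and the excerpt even cites Ghosal–van der Vaart for it. An equivalent, perhaps more conceptual, route would be to use the coupling characterization $\TV(\mu,\nu) = \min_\pi \pr[X \ne Y]$, couple each coordinate optimally and independently, and apply a union bound over coordinates; I would include this alternative only if a coupling viewpoint is useful elsewhere in the paper, otherwise the telescoping proof above is shorter.
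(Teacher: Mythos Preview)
Your proof is correct and entirely standard. The paper does not actually give its own proof of this lemma: it states the result as a ``standard result'' and cites Lemma~B.8 of Ghosal--van der Vaart, so there is nothing to compare against. It is worth noting, though, that the hybrid/telescoping technique you use here is precisely the device the paper employs later in the proof of the Stability Lemma (Lemma~\ref{lem:stable}), so your argument is well aligned with the paper's toolkit.
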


Consider independent r.v.s $\varX_1, \ldots, \varX_n$ where $\var{T}_i$ denotes the domain of $\varX_i$. 
Let $h$ be a function from $\bld{T} = \var{T}_1 \times \cdots \times \var{T}_n$  to $[0, U]$; that is, $h$ is a function on the \emph{outcomes} of the random variables that  is bounded by $U$.
Thus, for any $\mathbf{x} \in \bld{T}$, $h(\mathbf{x})$ denotes the value of $h$ on the outcome $\mathbf{x} = (x_1, \ldots x_n)$. 
Given a product distribution $\bld{P}$ over the r.v.s, define $h(\bld{P}) := \E_{\mathbf{x} \sim \bld{P}}\left[ h(\mathbf{x}) \right]$. The following useful fact bounds the difference in function value at two different distributions.

\begin{lemma}\label{lem:function-val-tv}
Given  discrete    
distributions $\bld{D}$ and $\bld{E}$ over $n$ random variables $\varX_1, \ldots, \varX_n$,   and a $[0,U]$ bounded function $h$ on the outcomes of these r.v.s (as above),  
we have 
\begin{equation}\notag
    |h(\bld{D}) - h(\bld{E})| \,\, \leq \,\, U \cdot \TVD(\bld{D}, \bld{E}).      
\end{equation}
\end{lemma}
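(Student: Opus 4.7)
The plan is to unfold the expectations and leverage the two-sided bound $h(\mathbf{x}) \in [0,U]$ in order to land on the factor $U$ (rather than a naive $2U$). Write
\[
h(\bld{D}) - h(\bld{E}) \,=\, \sum_{\mathbf{x}\in\bld{T}} h(\mathbf{x})\bigl(\bld{D}(\mathbf{x}) - \bld{E}(\mathbf{x})\bigr),
\]
and partition $\bld{T}$ into $A = \{\mathbf{x} : \bld{D}(\mathbf{x}) \ge \bld{E}(\mathbf{x})\}$ and its complement $A^c$. By the definition of total variation as half the $\ell_1$ distance, the positive mass $\sum_{\mathbf{x}\in A}(\bld{D}(\mathbf{x})-\bld{E}(\mathbf{x}))$ and the negative mass $\sum_{\mathbf{x}\in A^c}(\bld{E}(\mathbf{x})-\bld{D}(\mathbf{x}))$ each equal $\TV(\bld{D},\bld{E})$.

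Next I would bound the two halves asymmetrically. On $A$, the summand $h(\mathbf{x})(\bld{D}(\mathbf{x}) - \bld{E}(\mathbf{x}))$ is at most $U\cdot (\bld{D}(\mathbf{x})-\bld{E}(\mathbf{x}))$ since $h\le U$; on $A^c$, the summand is a nonpositive quantity whose absolute value is at most $U\cdot(\bld{E}(\mathbf{x})-\bld{D}(\mathbf{x}))$, but because $h\ge 0$ we may simply upper bound it by $0$. Summing,
\[
h(\bld{D}) - h(\bld{E}) \,\le\, U\sum_{\mathbf{x}\in A}\bigl(\bld{D}(\mathbf{x})-\bld{E}(\mathbf{x})\bigr) \,=\, U\cdot \TV(\bld{D},\bld{E}).
\]
By the symmetric argument (swapping the roles of $\bld{D}$ and $\bld{E}$), the reverse difference is also at most $U\cdot \TV(\bld{D},\bld{E})$, yielding the claim.

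The only subtle point is the factor of $U$ versus $2U$: a careless application of $|h|\le U$ together with $\sum_\mathbf{x} |\bld{D}(\mathbf{x})-\bld{E}(\mathbf{x})| = 2\TV(\bld{D},\bld{E})$ loses a factor of two. The trick above, which uses both $h\ge 0$ and $h\le U$ to throw away the contribution from one side of the partition, is what recovers the sharp constant. No properties of product distributions are actually needed here (the lemma is stated generally and the product structure is inherited from the statement's notation), so this short argument suffices.
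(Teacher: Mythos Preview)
Your proof is correct. The paper actually states this lemma as a ``useful fact'' without giving any proof, so there is nothing to compare against; your argument is the standard one and recovers the sharp constant $U$ via the asymmetric bounding of the two halves of the partition.
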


{
The following inequality bounds the deviation of the empirical CDF from the true CDF uniformly over all thresholds, and will be used to bound the total variation distance between our empirical distribution and the true distribution.
}
\begin{theorem}[DKW Inequality from \citet{Massart1176990746}]\label{thm:dkw}

   Let $X_1, \ldots, X_N$ be i.i.d. samples with cumulative distribution function $F(\cdot)$. Let $\overline{F}(x):=\frac{1}{N} \sum_{i \in[N]} \mathbf{1}_{X_i \leq x}$ be the  empirical CDF. Then, for every $\varepsilon>0$, we have

$$
\Pr\left[\sup_x \left|\overline{F}(x)-F(x) \right| \, > \, \varepsilon\right] \,\, \leq \,\, 2 \exp \left(-2 N \varepsilon^2\right)
$$
\end{theorem}

\section{The Online Framework}
In this section we present our main algorithm. We will assume access to an $\alpha$-approximation algorithm $\ALG$ for the 
stochastic problem $\cP$.
For concreteness, we assume that $\cP$ is a minimization problem.
Thus, given an instance of  $\cP$ (with a probability distribution for
each item), $\ALG$ finds a policy of expected cost at most $\alpha$ times the optimum. 

We also assume that $\cP$ is up-monotone; see Definition~\ref{def:monotonicity}. 
The online framework for down-monotone problems and   maximization problems is almost identical: the changes are explained in \Cref{subsec:down-mon} and \Cref{sec:max-probs}. Our online algorithm is based on the principle of optimism in the face of uncertainty, and is very simple to describe. At each time step $t$, we construct an ``optimistic''  empirical distribution $\bld{E}^t$ that  stochastically dominates 
the true (unknown) distribution $\bld{D}$.
Then, we run the offline algorithm $\ALG$ on this empirical distribution $\bld{E}^t$ to obtain policy $\sigma^t$, which is  the online policy at time $t$.

\paragraph{Computing stochastically dominating distributions.} Given i.i.d. samples of any random variable, we show that it is possible to compute a stochastically dominating empirical distribution that has a small total-variation distance from the true distribution. 
\begin{theorem}
\label{thm:emp-stoch-dom}

There is an algorithm $\ESD$ that, given $m$ i.i.d.\ samples from a distribution $\calD$ with finite support-size $k$, and parameter $\delta > 0$,  computes a distribution $\calE$ that satisfies the following properties with probability at least $1-\delta$.
\begin{itemize}
\item $\calE$ stochastically dominates $\calD$, and 
\item the total-variation distance $\TVD(\calE, \calD) < \sqrt{\frac{5k\log(2k/\delta)}{m}} +\frac{4k\log(2k/\delta)}{m}$. 
\end{itemize}  
\end{theorem}

The algorithm $\ESD$ and proof of this result are deferred to \Cref{subsec:emp-stoc-dom}. The main idea is to ``shift'' some probability mass in the usual empirical distribution from low to high values. In a previous version of this paper, we obtained a slightly simpler proof with a (worse) $\TVD$ bound of $k\sqrt{\frac{\log (2/\delta)}{2m}}$.

\begin{algorithm}
\caption{\textsc{Online-to-Offline} framework}
\label{alg:greedy-ucb}
\begin{algorithmic}[1]

\For{$t = 1, \ldots T$}
\State for each item $i\in [n]$, obtain distribution ${\calE}^t_i$ by running $\ESD$ with $\delta =\frac{2}{(nT)^3}$ on all samples of r.v. $\varX_i$  observed so far.
\State obtain policy $\sigma^t$ by running the offline algorithm $\ALG$ on distribution $\bld{E}^t =\left\{{\calE}^t_i\right\}_{i=1}^n$.
\State run  policy $\sigma^t$ on realization $\bx^t$ and observe  the probed items $S(\sigma^t, \bx^t)$.  
\EndFor
\end{algorithmic}
\end{algorithm}

Algorithm~\ref{alg:greedy-ucb} describes our online framework, which uses  algorithm $\ESD$.
We now analyze Algorithm~\ref{alg:greedy-ucb}, and prove Theorem~\ref{thm:main}. 
The basic idea in the analysis is that 
as we get more and more samples with increasing $t$, the total 
variation distance between $\bld{E}^t$ and $\bld{D}$
reduces,
and the policy $\ALG(\bld{E}^t)$ will become closer and 
closer to the optimal policy.

\begin{proof}{\it of \Cref{thm:main}}
We will assume, without loss of generality, that we have observed at least one sample from each random variable. This can be ensured by adding $n$ special time periods and choosing for each time $i\in [n]$, a policy $\sigma^i$ that first probes item $i$; note that this  contributes at most $n\cdot f_{max}$ to the total regret.

Let $N_j^t$ be the number of times r.v. $\varX_j$ has been sampled before time $t$. By our assumption, $N_j^1 = 1$ for all $j \in [n]$. Note that $N_j^t$ is a random variable and depends on the history $\h^{t-1}=\left( \bx^1,\cdots \bx^{t-1}\right)$. When needed, we will use $N_j^t(\h^{t-1})$ to make this dependence explicit.

The following result follows directly from   Theorem~\ref{thm:emp-stoch-dom}.

\begin{lemma}\label{lem:good-event}
With probability at least $1-\frac{1}{ nT}$, for each $j\in[n]$ and $t\in [T]$,  we have that 
$\calE_j^t$  stochastically dominates $\calD_j$  and 
$$\TVD\left( \calE^t_j,  \calD_j\right) < 15 \sqrt{\frac{k\log(2knT)}{N^t_j}} +15\frac{k\log(2knT)}{N^t_j}  . $$ 
\end{lemma}

\begin{proof}
For any $j \in [n]$ and $t\in [T]$, let $B_j^t$ denote the event that the stated condition {\em does not} hold for $j$ and $t$. Using Theorem~\ref{thm:emp-stoch-dom} with $\delta = 2/(nT)^3$, we obtain for any $j\in [n]$ and $t\in [T]$, 
\[
\pr\left(B_j^t \ \wedge \ N_j^t = m\right) \le  \delta,\qquad \forall m\ge 1.
\]
So, for any $j$ and $t$,  we have by union bound,
\[\pr(B_j^t)  \leq \sum_{m=1}^T \pr\left(B_j^t \ \wedge \ N_j^t = m\right) \leq \sum_{m=1}^T \delta = \delta T.\]
Again, by union bound, 
\[\pr\left(\vee_{j=1}^n \vee_{t=1}^T  B_j^t\right)  \leq \sum_{j=1}^n \sum_{t=1}^T  \pr(B_j^t)  \le nT^2\delta\le \frac{1}{nT}.\]
The last inequality uses $n\ge 2$. This completes the proof. 
\end{proof}
Let $G$   denote the {\it ``good'' event}   corresponding to the condition in Lemma~\ref{lem:good-event} holding for all $j$ and $t$. First, we complete the proof assuming that $G$ holds (this assumption is removed later).

We now state a crucial ``stability'' property for the stochastic problem $\cP$. Recall that $Q_i(\sigma)$ denotes the probability that policy $\sigma$ probes item $i$ under distribution $\bD$.
\begin{lemma}[Stability lemma]\label{lem:stable}
Consider a stochastic problem that is up-monotone. Suppose that $\bE=\{\calE_i\}_{i=1}^n$ and $\bD=\{\calD_i\}_{i=1}^n$ are product distributions  
such that $\bD\sd \bE$ and $\TVD(\calE_i, \calD_i) \leq \epsilon_i$ for each $i \in [n]$. 
If $\sigma$ is the policy returned by $\ALG(\bE)$  
and $\sigma^*$ is an optimal  policy under $\bD$, then:
\begin{align}
f(\sigma) - \alpha\cdot  f(\sigma^*) \,\, &= \,\, \E_{\bx \sim \bD}\left[f(\sigma, \bx) - \alpha\cdot f(\sigma^*, \bx)\right] \leq \,\, f_{\max} \sum_{i=1}^n {Q}_{i}(\sigma) \cdot \epsilon_i 
    \,, \label{eq:per-time-bound}
\end{align} 
where ${Q}_{i}(\sigma)$ denotes the probability that item $i$ is probed by policy $\sigma$ under distribution $\bld{D}$.
\end{lemma}

This lemma relies on the monotonicity assumption, and is proved in \Cref{sec:stable}.
We now  complete the proof of Theorem~\ref{thm:main} using Lemma~\ref{lem:stable}.

\subsection{Completing the Proof of Theorem~\ref{thm:main}} \label{sec:path-regret}

\paragraph{Bounding the regret as a sum over time $t$.} To bound the overall regret,  it suffices to bound the regret at each time $t\in [T]$, defined as:

\begin{align*} 
R^t(\h^{t-1}) \,\, &:= \,\, \E_{\bx^t}\left[f(\sigma^t , \bx^t) - \alpha\cdot f(\sigma^*, \bx^t)\right] = \,\, f(\sigma^t) - \alpha\cdot f(\sigma^*) ,
\end{align*}
where $\h^{t-1}$ is the history at time $t$, policy $\sigma^t=\ALG(\bE^t)$ and policy $\sigma^*$ is 
the optimal policy for the stochastic problem instance (under the true distribution $\bD$). By~\eqref{eq:appx-exp-regret1}, the overall regret is $\alpha\text{-}R(T) =  \sum_{t=1}^T \E_{\h^{t-1}}\left[ R^t(\h^{t-1})\right]$.
For each time $t\in [T]$ and history $\h^{t-1}$, we apply  Lemma~\ref{lem:stable} with distributions $\bE^t(\h^{t-1})$ and $\bD$,  and parameters $\epsilon_i^t (\h^{t-1}) =  15 \sqrt{\frac{k\log(2knT)}{N_{i}^t(\h^{t-1})}} +15\frac{k\log(2knT)}{N_{i}^t(\h^{t-1})}  $ for all $i\in[n]$. To keep notation simple, we use the function $g(m):=15 \sqrt{\frac{k\log(2knT)}{m}} +15\frac{k\log(2knT)}{m}  $.  So, $\epsilon_i^t =g(N^t_i)$.   
Note that  $\TVD(\calE^t_i, \calD_i) \leq \epsilon_i^t(\h^{t-1})$ and $\calD_i\sd \calE_i$  for each $i \in [n]$ because we assumed the good event $G$. Hence, Lemma~\ref{lem:stable} implies that:
\begin{equation}
\label{eq:regret-t-stable}
R^t(\h^{t-1})  \le  f_{\max} \sum_{i=1}^n {Q}_{i}(\sigma^t) \cdot \epsilon_i^t(\h^{t-1}) .
\end{equation}

We note that $R^t$, $\sigma^t$, $N^t$ and $\epsilon^t$ all depend on the history $\h^{t-1}$; to keep notation simple,  we drop the explicit dependence on $\h^{t-1}$. Using the regret definition and \eqref{eq:regret-t-stable},

\begin{align*}
\alpha\text{-}&R(T) \,\, \le \,\,  f_{\max} \sum_{t=1}^T \E_{\h^{t-1}}\left[  \sum_{i=1}^n {Q}_{i}(\sigma^t) \cdot \epsilon_i^t\right] = \,\, f_{\max} \sum_{i=1}^n \sum_{t=1}^T \E_{\h^{t-1}}\left[ {Q}_{i}(\sigma^t) \cdot \epsilon_i^t\right]
\end{align*}

It now suffices to show that for any $i$, 
\begin{align}
& \sum_{t=1}^T   \E_{\h^{t-1}}\left[  {Q}_{i}(\sigma^t) \cdot  \epsilon_i^t  \right] \le O\left(\sqrt{k\log (nkT) \, T} +   k \log^2(nkT)\right). \label{eq:path-regret-sum}
\end{align}
Indeed, combining \eqref{eq:path-regret-sum} with the above bound on regret, we get $\alpha\text{-}R(T) =\Ot( n (\sqrt{k T}+k)f_{\max})$, which  completes the proof of Theorem~\ref{thm:main} assuming event $G$.

\paragraph{Proving \eqref{eq:path-regret-sum} as a sum over decision paths.} We refer to the full history $\h^T=\left(\bx^1,\cdots , \bx^T\right)$ of the algorithm as its {\em decision path}. The main idea in this proof is to  view the left-hand-side in \eqref{eq:path-regret-sum}  as a sum over decision paths rather than a sum over time. To this end, define $Z_i(\h^T)$ as:
\begin{equation}
\sum_{t=1}^T  \1\left[\text{item $i$ probed by }\sigma^t(\h^{t-1})\right]\cdot g(N^t_i(\h^{t-1})).
\end{equation}

Above, $\1$ is the indicator function.
By linearity of expectation,  $\E_{\h^T} \left[ Z_i(\h^T) \right]$ equals
{\small \begin{align}
&\sum_{t=1}^T \E_{\h^T} \left[ \1\left[i \text{  probed by  }\sigma^t(\h^{t-1})\right]\cdot g(N^t_i(\h^{t-1})\right] \notag \\
& =\,\,\sum_{t=1}^T \E_{\h^{t-1}, \bx^t} \left[ \1\left[i \text{  probed by  }\sigma^t(\h^{t-1})\right] \cdot g(N^t_i(\h^{t-1}) \right] \notag \\
& =\sum_{t=1}^T \E_{\h^{t-1}} \left[ g(N^t_i) \cdot \pr_{\bx^t} [i \text{  probed by  }\sigma^t]\right] \notag\\ 
&  = \,\, \sum_{t=1}^T \E_{\h^{t-1}} \left[  Q_{i}(\sigma^t)\cdot  g(N^t_i) \right],\label{eq:Zi}
\end{align}}
where the first equality uses the fact that event \{$i$   probed by  $\sigma^t$\}  only depends on $\h^t = (\h^{t-1}, \bx^t)$, the 
second  equality uses the fact that $N^t_i$ only depends on $\h^{t-1}$ and that $\bx^t$ is independent of $\h^{t-1}$, and the last equality is by the definition of $Q_{i}(\sigma^t)$ and the fact that $\bx^t \sim \bD$.

Therefore, it suffices to upper bound $\E_{\h^T}[Z_i(\h^T)]$, which we can prove in a per-realization manner.
Indeed, $Z_i(\h^T)$ equals:
\begin{align*}
  & \sum_{t=1}^T  \1\left[i\text{ probed by }\sigma^t(\h^{t-1})\right] g(N^t_i)\,\, \le \,\,\sum_{t=1}^T g(t) \\
&\le \,\, 15\sqrt{k\log(knT)} \sum_{t=1}^T \frac1{\sqrt{t}} + 15k\log(knT) \sum_{t=1}^T \frac1{ t}  \\
& = \,\, O\left(\sqrt{k\log(knT) \, T} + k\, \log^2(nkT)\right). 
\end{align*}
The first inequality uses the fact that $N^t_i$ equals the number of probes of item $i$ in the first $t-1$ time steps: so $N^{t+1}_i=N^t_i+1$ whenever item $i$
is probed by $\sigma^t$. This completes the proof of  \eqref{eq:path-regret-sum}.

{
\paragraph{Removing the ``good'' event assumption.} Recall from Lemma~\ref{lem:good-event} that $G$ is the event that $\calE_j^t$ stochastically dominates $\calD_j$ and $\TVD(\calE_j^t, \calD_j) \le g(N_j^t)$ for all $j\in[n]$ and $t\in[T]$. In the analysis above, we assumed that event $G$ holds in \eqref{eq:regret-t-stable}.}
We now modify \eqref{eq:regret-t-stable} as follows (which holds irrespective of $G$). 
\[
R^t(\h^{t-1})  \le    f_{\max} \sum_{i=1}^n {Q}_{i}(\sigma^t) \cdot \epsilon_i^t(\h^{t-1}) + f_{max}\cdot \1[\overline{G}].
\]

We used the fact that the maximum 1-step regret is $f_{max}$. Combined with  \eqref{eq:path-regret-sum}, which handles the first term, we have $\alpha\text{-}R(T) =  \sum_{t=1}^T \E[R^t(\h^{t-1}) ]$ is at most
\(\Ot( n (\sqrt{k T}+k) ) f_{\max} \,+\, T f_{\max} \cdot \pr[\overline{G}] \,\, \le \,\, \Ot( n (\sqrt{k T}+k) )    f_{\max}.
\) The last inequality uses Lemma~\ref{lem:good-event}. 
This completes the proof of Theorem~\ref{thm:main}.
\hfill 
\end{proof}

\subsection{Proving the Stability Lemma}\label{sec:stable}
We now prove Lemma~\ref{lem:stable}.
Recall that there are two product distributions $\bE=\{\calE_i\}_{i=1}^n$ and $\bD=\{\calD_i\}_{i=1}^n$ with $\bD\sd \bE$ and  $\TVD(\calE_i, \calD_i) \leq \epsilon_i$ for each $i \in [n]$. 
Let $\sigma=\ALG(\bE)$ 
be the policy returned by the (offline) $\alpha$-approximation algorithm,
and $\sigma^*$ be
an optimal policy under distribution $\bD$.

We want to upper bound
$f(\sigma) - \alpha\cdot  f(\sigma^*)$. 
Let $\cC$ denote the set of all feasible policies to the given instance. Then,
\(
\OPT(\bD) = \min_{\tau \in \cC} \E_{\bx\sim \bD}[f(\tau,\bx)]  \text{ and }  \OPT(\bE) = \min_{\tau \in \cC} \E_{\bx\sim \bE}[f(\tau,\bx)].
\)
We  use $f(\tau|\bld{U}) := \E_{\bx\sim \bld{U}}[f(\tau,\bx)]$ to denote the expected cost of any policy $\tau$ when the r.v.s have product distribution $\bld{U}$. Note that $f(\tau) = f(\tau|\bD)$ where $\bD$ is the true distribution. We now have:
\begin{align*}
&f(\sigma) - \alpha\cdot  f(\sigma^*) \,\, = \,\, f(\sigma| \bD) - \alpha\cdot  f(\sigma^*| \bD) \notag \\ &= \,\, f(\sigma| \bD) - f(\sigma| \bE) +f(\sigma| \bE) - \alpha\cdot  f(\sigma^*| \bD) \notag \\
& \le \,\, f(\sigma| \bD) - f(\sigma| \bE) + \alpha\cdot \OPT(\bE) - \alpha\cdot  f(\sigma^* | \bD) \\ 
&= \,\, f(\sigma | \bD) - f(\sigma | \bE) + \alpha\cdot \OPT(\bE) - \alpha\cdot  \OPT(\bD) \\
&\le\,\, f(\sigma | \bD) - f(\sigma | \bE).
\end{align*}
Above, the first inequality uses the fact that $\sigma$ is an $\alpha$-approximate policy to the instance with distribution $\bE$, the penultimate equality uses the fact that $\sigma^*$ is an optimal policy 
for the instance with distribution $\bD$, and the final inequality is by up-monotonicity (Definition~\ref{def:monotonicity}) and $\bD\sd \bE$. 
{
We note that stochastic dominance and monotonicity are only used to establish $\OPT(\bE) \le \OPT(\bD)$ in this final inequality; the rest of the proof holds more generally whenever one can construct a distribution $\bE$ satisfying this condition.
}

\def\cT{{\cal T}}
\def\cN{{\cal N}}

\paragraph{Equivalent view of adaptive policy $\sigma$.}
Let $N$ denote the number of nodes in decision tree  $\sigma$ (this  may be exponential, but it is  only used in the analysis). Note that there is a {\em partial ordering} of the nodes of $\sigma$ based on   ancestor-descendent relationships in the tree. 
We index the nodes in $\sigma$ from $1$ to $N$ according to this partial order, so that $u<v$ for any node $v$ that is a child of  node $u$. Recall   that each node $v\in \sigma$ is labeled by one of the $n$  items. Note that the same item $i\in [n]$ may label multiple nodes of $\sigma$; however, in any policy execution (i.e., root-leaf path in $\sigma$) we will encounter at most one node labeled by item $i$.
Hence, we can equivalently view policy $\sigma$ as having  an item $\varX_v$ with {\em independent} distribution $\D_v$ at each node $v\in \sigma$. (This involves  making several independent copies of each   item, which does not affect the policy execution as at most one copy of each item is seen on any root-leaf path.)

\paragraph{Bounding $| f({\sigma} | \bld{D}) - f({\sigma}| \bld{E})|$.} Based on the above view of $\sigma$, we use $\calD_v$ (resp.  $\calE_v$) to denote the independent distribution at each node $v\in \sigma$ under the joint distribution $\bD$ (resp. $\bE$). Using the above indexing of the nodes in $\sigma$, we   define the following {\em hybrid} 
product distributions: for each $v\in [N]$ let $\bld{H}^v = \calD_1 \times \cdots \times \calD_{v} \times \calE_{v+1} \times \cdots \times \calE_N$.  
Observe that $f(\sigma \mid \bld{H}^N) = f(\sigma \mid \bld{D})$ and $f(\sigma \mid \bld{H}^0) = f(\sigma \mid \bld{E})$.
Using a telescoping sum, we can write:
\[
f(\sigma \mid \bD) - f(\sigma \mid \bE) = \sum_{v=1}^N f(\sigma \mid \bld{H}^v) - f(\sigma \mid \bld{H}^{v-1}). 
\]
Crucially, we now show that for every node $v\in [N]$,  
\begin{equation}\label{eq:stab-lemma-key}
    \mid f(\sigma \mid \bld{H}^v) - f(\sigma \mid \bld{H}^{v-1})\mid \,\, \leq \,\, f_{\max} \cdot Q_v(\sigma) \cdot \epsilon_v,
\end{equation}
where $Q_v(\sigma)$ is the probability that    $\sigma$ reaches node $v$ under distribution $\bD$ and $\epsilon_v = \TVD(\calD_v,\calE_v)$.

We first complete the proof of the lemma using \eqref{eq:stab-lemma-key}. Note that for any node $v$ labeled by item $i\in [n]$, we have $\epsilon_v=\TVD(\calD_i, \calE_i)\le \epsilon_i$.  
\begin{align}
&    \big| f(\sigma \mid \bD) - f(\sigma \mid \bE) \big| \notag \\  
    &\le \,\, \sum_{v=1}^N \big| f(\sigma \mid \bld{H}^v) - f(\sigma \mid \bld{H}^{v-1})\big| \notag \\  
    &\le \,\, f_{max} \cdot \sum_{v=1}^N Q_v(\sigma) \cdot \epsilon_v \notag \\
    &= \,\, f_{max} \cdot \sum_{i=1}^n \epsilon_i \sum_{v\in [N] :\mbox{ labeled }i} Q_v(\sigma) \notag \\
    &= f_{max} \cdot \sum_{i=1}^n \epsilon_i \cdot \Pr[\sigma \mbox{ reaches a node labeled }i]\notag \\
    &=\,\, f_{max} \cdot \sum_{i=1}^n \epsilon_i \cdot Q_i(\sigma). \label{eq:stable-abs-diff}
\end{align}
The second equality uses the fact that the events ``$\sigma$ reaches $v$'' where node $v$ is labeled by $i$, are mutually disjoint. The last equality uses the fact that the probability $Q_i(\sigma)$ of probing $i$ equals the probability of reaching some node $v\in [N]$ labeled $i$.

Towards proving \eqref{eq:stab-lemma-key}, we  introduce the following notation.
Let $Q_v(\sigma \mid \bld{U})$ denote the probability that  $\sigma$ reaches node $v$ under distribution $\bld{U}$.
Note that the event ``$\sigma$  reaches node $v$'' only depends on the random realizations at the ancestor nodes of $v$, which are all contained in $\{u\in [N] : u < v\}$ (by our indexing of nodes). Hence, 
\begin{equation}
    \label{eq:hybrid-reach-prob}
 Q_v(\sigma \mid \bld{H}^v) = Q_v(\sigma \mid \bld{H}^{v-1}) =Q_v(\sigma \mid \bld{D}) = Q_v(\sigma),\end{equation}
because each node     $\{u\in [N] : u < v\}$ has the same  distribution ($\calD_u$) under $\bD$, $\bld{H}^{v}$ and $\bld{H}^{v-1}$.

Below, let ${\cal R}_v$ denote the event that $\sigma$  reaches node $v$. Using the fact that each node $w\in [N]\setminus \{v\}$ has the same  distribution (either $\calD_w$ or $\calE_w$) under {\em both}  $\bld{H}^{v}$ and $\bld{H}^{v-1}$, we get:
\begin{equation}\label{eq:stab-lemma-conditional-1}
 \E_{\bX\sim \bld{H}^v}\left[ f(\sigma, \bX ) \mid \overline{\cal R}_v \right] -  \E_{\bX\sim \bld{H}^{v-1}}\left[ f(\sigma, \bX ) \mid \overline{\cal R}_v\right]   \,\, = \,\,0. 
\end{equation}
We now bound the difference in expectation conditional  on event ${\cal R}_v$. 
Let $\rho$ denote  the subtree of $\sigma$ rooted at $v$ (including $v$ itself). We also use $\rho$ to denote the set of nodes in this subtree.  Let $L$ denote all  leaf nodes in   subtree $\rho$, and for each $\ell\in L$  let $f_\ell$ be the function value accrued at leaf node $\ell$. 
For any realization of the r.v.s in $\rho$,  $\{x_w : w\in \rho\}$, we use $\ell(\{x_w : w\in \rho\})$  to denote the (unique) leaf that is reached when subtree $\rho$ is executed under this realization. 
This corresponds to   following the branch labeled $x_w$ out of  each  node $w\in \rho$ (starting from node $v$). 
We now define function $h$ that maps any realization $\{x_w : w\in \rho\}$ to 
the function value $f_k$ at the leaf $k=\ell(\{x_w : w\in \rho\})$. By our assumption on the objective function of the  stochastic problem, $h$ is bounded between $0$ and $f_{max}$. We now define two product distributions on nodes of $\rho$: 
\(\bld{V}_1 = \langle \calD_v , \{ \calE_{w} : w\in \rho\setminus v\} \rangle  \mbox{ and } \bld{V}_2 = \langle \calE_v , \{ \calE_{w} : w\in \rho\setminus v\} \rangle.\)
Note that \(
    h(\bld{V}_1) =   \E_{\bX\sim \bld{H}^v}\left[ f(\sigma, \bX ) \mid  {\cal R}_v \right]  \mbox{ and }   h(\bld{V}_2) =   \E_{\bX\sim \bld{H}^{v-1}}\left[ f(\sigma, \bX ) \mid  {\cal R}_v \right]
\). Applying Lemma~\ref{lem:function-val-tv} to function $h$ and product distributions $\bld{V}_1$ and $\bld{V}_2$,
\begin{align}
 &\bigg| \E_{\bX\sim \bld{H}^v}\left[ f(\sigma, \bX ) \mid   {\cal R}_v \right] -  \E_{\bX\sim \bld{H}^{v-1}}\left[ f(\sigma, \bX ) \mid   {\cal R}_v\right] \bigg|\notag \\ 
 &\le  \,\, f_{max}\cdot \TVD(\bld{V}_1, \bld{V}_2) \le \,\, f_{max}\cdot \TVD(\calD_v , \calE_v). \label{eq:stab-lemma-conditional-2}
\end{align}
 The last inequality  is by  Lemma~\ref{lem:TV-product-bound} and the fact that $\bld{V}_1$ and $\bld{V}_2$ only differ  at node $v$.

We now combine the conditional expectations from \eqref{eq:stab-lemma-conditional-1} and \eqref{eq:stab-lemma-conditional-2}. 
Using \eqref{eq:hybrid-reach-prob},   the probability of event ${\cal R}_v$  under {\em both} distributions  $\bld{H}^{v-1}$ and $\bld{H}^{v}$ is the same, which equals $Q_v(\sigma)$. So, 
\begin{align*}
    &\mid f(\sigma \mid \bld{H}^v) - f(\sigma \mid \bld{H}^{v-1})\mid = \bigg| \E_{\bX\sim \bld{H}^v}\left[ f(\sigma, \bX ) \right] -  \E_{\bX\sim \bld{H}^{v-1}}\left[ f(\sigma, \bX ) \right] \bigg|\\
    & = Q_v(\sigma  ) \cdot \bigg| \E_{\bX\sim \bld{H}^v}\left[ f(\sigma, \bX ) \mid   {\cal R}_v \right] - \, \E_{\bX\sim \bld{H}^{v-1}}\left[ f(\sigma, \bX ) \mid   {\cal R}_v\right] \bigg|\\
    &\le Q_v(\sigma ) \cdot f_{max}\cdot \TVD(\calD_v , \calE_v) = f_{max}\cdot Q_v(\sigma)\cdot \epsilon_v. 
    \end{align*}
    This completes the proof of \eqref{eq:stab-lemma-key} and Lemma~\ref{lem:stable}.

\subsection{Calculating a Stochastically Dominating Empirical Distribution}\label{subsec:emp-stoc-dom}
We now prove Theorem~\ref{thm:emp-stoch-dom}, that given i.i.d. samples from an unknown distribution $\calD$, constructs (w.h.p.) a stochastically dominating distribution $\cE$ with small total-variation distance.

In order to motivate this algorithm, consider the simple case that $\calD$ is a Bernoulli distribution, i.e., 
$\calD=\bern(p)$ where $p$ is the (unknown) probability of realizing to value $1$. We first compute the empirical mean $\widehat{p}$ of $p$  from the $m$ i.i.d. samples. { Then, we set $\calE=\bern\left(\widehat{p} + \sqrt{\frac{2\hp(1-\hp)\log(2/\delta)}{m}} + \frac{7\log(2/\delta)}{3m}\right)$, where $\delta>0$ is the algorithm's failure probability. By Bernstein inequality~\citep{bernstein1924modification}, with probability $1-\delta$ we have $ |p- \widehat{p}| \lesssim \sqrt{\frac{\sigma^2 \log(2/\delta)}{m}}+\frac{\log(2/\delta)}{m}$ where $\sigma^2 = p(1-p)$ is the variance of the Bernoulli distribution (intuitively, $\hp$ and $p$ are roughly the same). It then follows that $\calD\sd \calE$ with probability at least $1-\delta$.} Indeed, this is the strategy used in many algorithms 
that use the principle of optimism under uncertainty, such as the upper-confidence-bound (UCB) algorithm~\citep{Auer+02}. We will prove Theorem~\ref{thm:emp-stoch-dom} by giving such a construction for any 
distribution $\calD$ with finite support. We note that similar constructions and analysis have been used before, for e.g. in \citep{GuoHZ19, GuoHT+21}, but their notion  of approximating $\bD$ is   different from   our $\TVD$ condition. 

Consider any distribution $\calD$ with support $\{a_1,\ldots, a_k\}$ where $a_1 <  \cdots < a_k$, and let $\{p_i\}_{i=1}^k$ denote the respective probabilities and $\varX$ the corresponding r.v. That is, $\pr(\varX =a_i)=p_i$ for all $i\in [k]$. The algorithm takes as input $m$ i.i.d. samples $x_1,\ldots, x_m \sim \calD$.
It defines the empirical distribution as $\hp_i := \frac1m \cdot \sum_{j = 1}^m \1[x_j = a_i]$ for all $i \in [k]$.
Since this empirical distribution might not stochastically dominate $\calD$, the algorithm shifts some mass from lower to higher values in the support.
However, this shift of mass needs to ensure that the
resulting distribution is close to  $\calD$ in terms of $\TVD$. Algorithm~\ref{alg:emp_stoc_dom} describes the entire procedure.
The proof  proceeds by showing that 
the distribution 
$\calE$ returned by our algorithm stochastically 
dominates $\calD$ and is also a good approximation in
terms of $\TVD$.

\begin{algorithm}
\caption{\textsc{EmpStocDom} \label{alg:empstocdom}}
\label{alg:emp_stoc_dom}
\begin{algorithmic}[1]
\State \textbf{Input:} support $S := \{a_1,\cdots, a_k\}$, samples  $x_1, \ldots, x_m\sim \calD$, confidence parameter $\delta$
\State for $i \in [k]$, let $\hp_i \leftarrow \frac1m\cdot  \sum_{j = 1}^m \1[x_j = a_i]$ 
\State let $c = \frac{\log(2k/\delta)}{m}$ and $\epsilon =  \sqrt{2kc} +\frac{7kc}{3} $.
\State set $\bp_k = \min(\hp_k+\epsilon, 1)$ 
\If{$\bp_k$ = 1}
\State set $\bp_i = 0$ for all $i \in [k-1]$
\Else
\State let $y $ be the minimum index such that $\sum_{i=1}^y \hp_i \geq \epsilon$, and set
\begin{equation}
\label{eq:emp-shift}
\bp_i = \begin{cases} \hp_i &\text{if  } y < i \le k-1 \\
\sum_{j=1}^y \hp_j -\epsilon & \text{ if }i=y\\
0 & \text{if } i < y 
\end{cases}
\end{equation}
\EndIf
\State \textbf{Output:} distribution $\calE$ over support $S$ with probabilities $\{\bp_i\}_{i=1}^k$
\end{algorithmic}
\end{algorithm}

We will now show that the distribution $\calE$ returned by our algorithm stochastically 
dominates $\calD$ and is also a good approximation in
terms of $\TVD$. 

\paragraph{$\calE$ is a valid distribution.}  We will show that $\bp_i\ge0$ for all $i\in [k]$ and $\sum_{i=1}^k\bp_i=1$. If $\bp_k = 1$ then it is easy to see that $\calE$ is  valid: all other probabilities $\bp_i=0$. If $\bp_k < 1$ then $\hp_k<1-\epsilon$, which implies $\sum_{i = 1}^{k-1} \hp_i = 1- \hp_k > \epsilon$.
So, index $y\in [k-1]$, and  

\begin{align*}
\sum_{i = 1}^k \bp_i \,\, &= \,\, \bp_y  + \sum_{i=y+1}^{k-1} \bp_i + \bp_k \\  
&= \,\,  \sum_{i=1}^y\hp_i -\epsilon + \sum_{i=y+1}^{k-1} \hp_i + \hp_k +\epsilon \, = \, 1.
\end{align*}

\paragraph{Stochastic dominance and TV bound.} 
Instead of  using  the classical Bernstein inequality, we need to use the ``empirical Bernstein'' because we only get access to the sample variance. This approach  has also been used in \citet{AzarOM17} for UCB-VI with Bernstein bonus. 
\begin{theorem}\label{thm:Bern}[\citet{maurer2009empirical}]
    Let $X_1, X_2, \dots, X_m$ be i.i.d. random variables such that $\mathbf{E}[X_i] = \mu$, and $|X_i| \leq 1$. Then, we have with probability at least $1-\delta$:
\[
\left| \frac{1}{m}\sum_{j=1}^m X_j - \mu \right| \leq \sqrt{\frac{2V_m \log2/\delta }{m}} + \frac{7\log2/\delta}{3m}.
\]
where $V_m = \frac{1}{m-1}\sum_{j=1}^m (X_j -\frac{1}{m}\sum_{i=1}^m X_i)^2$ is the sample variance.
\end{theorem}

For each $i\in [k]$, the empirical probability $\hp_i$ in Algorithm~\ref{alg:emp_stoc_dom} is    the average of $m$ independent Bernoulli random variables with mean $p_i$; the sample variance of  this random variable is $\frac{m}{m-1} \hp_i(1-\hp_i)$. Theorem~\ref{thm:Bern} shows that with probability $1- \frac{\delta}{k}$ we have:
\[
|\hp_i - p_i| < \epsilon_i :=  \sqrt{\frac{2\hp_i(1-\hp_i) \log2k/\delta }{m}} + \frac{7\log2k/\delta}{3m}.
\] 

{
Henceforth, we assume the ``good'' event that 
$|\hp_i - p_i| < \epsilon_i$ for all $i\in[k]$. This event  occurs with probability at least $1-\delta$, by a union bound argument. Under this event, we  will show that $\calD\sd \calE$ and $\TVD(\calE, \calD)  \leq  \frac{3}{2}\epsilon$, where $\calE$ and $\epsilon$ are defined in Algorithm~\ref{alg:emp_stoc_dom}. This would complete the proof of Theorem~\ref{thm:emp-stoch-dom}. }

\paragraph{Bounding TV distance between $\calD$ and the empirical distribution.} Let $\widehat{\calE}$ denote the distribution with probabilities $\{\hp_i\}_{i=1}^k$, which is just the empirical distribution. We will show that (under the good event), 
\begin{equation}
\label{eq:tv-emp}
\TVD(\widehat{\calE}, \calD) \leq  \frac12 \epsilon.
\end{equation}
Indeed, by the definition of total variation distance, we have:
\begin{align*}
&\TVD(\widehat{\calE}, \calD) = \frac12 \sum_{i=1}^k |\hp_i-p_i |\leq \frac{1}{2} \sum_{i=1}^k \epsilon_i \\& \le  \frac{7k\log2k/\delta}{6m}+\frac{1}{2}\sum_{i=1}^k \sqrt{\frac{2 \log (2k/\delta) }{m} \hp_i } \\
& \leq  \frac{7k\log2k/\delta}{6m}+\frac{k}{2} \sqrt{\frac{2  \log(2k/\delta) }{m}\, \frac1k \sum_{i=1}^k\hp_i} \\
& = \frac{7k\log2k/\delta}{6m}+\frac{k}{2} \sqrt{\frac{2\log(2k/\delta) }{km}} \le \frac12 \epsilon.
\end{align*}
The second inequality uses the definition of $\epsilon_i$ and $\hp_i\le 1$. The third inequality is by 
Jensen's inequality for the concave function $\sqrt{x}$. The last equality uses the fact that $\sum_{i=1}^k \hp_i =1$ and $\hp_i \in [0,1]$ for all $i \in [k]$. This proves \eqref{eq:tv-emp}.

\paragraph{Bounding TV distance between $\widehat{\calE}$ and $\calE$.}  We consider two cases:
\begin{itemize}
\item If $\bp_k = 1$ then we have $\hp_k\ge 1-\epsilon$ and $\bp_i=0$ for $i\in [k-1]$,  which implies:
\begin{align*}
\TVD(\widehat{\calE}, \calE) \,\, &= \,\, \frac12 \cdot \sum_{i =1}^k |\bp_i - \hp_i| \\ 
&= \,\, \frac12 \left( |1-\hp_k| + \sum_{i =1}^{k-1} \hp_i \right) \,\, \le \,\, \epsilon.
\end{align*}

\item If $\bp_k < 1$ then $\bp_k-\hp_k=\epsilon$ and using the definition of $\bp$ from \eqref{eq:emp-shift}, 
\begin{align*}
&\TVD(\widehat{\calE}, \calE) \,\, = \,\, \frac12 \cdot \sum_{i =1}^k |\bp_i - \hp_i| \\ 
&= \,\, \frac12 \left( \epsilon +   \left| \sum_{i =1}^{y} \hp_i - \epsilon - \hp_y \right| + \sum_{i =1}^{y-1} \hp_i \right) \,\, = \,\, \epsilon,
\end{align*}
where the last equality used the choice of index $y$ to conclude $ \left| \sum_{i =1}^{y} \hp_i - \epsilon - \hp_y \right| = \epsilon - \sum_{i=1}^{y-1} \hp_i$.
\end{itemize}
Hence, in either case, we have $\TVD(\widehat{\calE}, \calE) \le \epsilon$, which combined with \eqref{eq:tv-emp} implies 
\(
\TVD(\calE, \calD) \,\, \le \,\, \TVD(\widehat{\calE}, \calE)  + \TVD(\widehat{\calE}, \calD) \,\, < \,\, \frac32\epsilon.
\)

\paragraph{Showing $\calD\sd \calE$.} Again, we consider  two cases:

\begin{itemize}
\item If $\bp_k = 1$ then  $\calE$ puts all its mass on the highest value $a_k$: so it is clear that $\calD\sd \calE$. 
\item 
If $\bp_k < 1$, consider any $v \in \R$.
If $v \leq a_y$,
\[\Pr(\var{Z} \geq v) = 1 \geq \Pr(\var{X} \geq v),\]
$\var{X}\sim \calD$ and $\var{Z}\sim \calE$. 
If $v > a_y$, then let $i \ge y+1$ be the smallest index such that $a_i \geq v$.
We have 
\begin{align*}
\Pr(\var{Z} \geq v) \,\, &= \,\, \sum_{j=i}^{k} \bp_j =\sum_{j=i}^{k-1} \hp_j + \hp_k + \epsilon \\ 
&\geq \,\, \sum_{j=i}^{k} (p_i - \epsilon_i) + \epsilon \geq\Pr(\varX \geq v)\,,
\end{align*}
where the first inequality follows due to the good event. The second inequality uses the same analysis in \eqref{eq:tv-emp}. Hence, $\calD\sd \calE$. 

\end{itemize}

\subsection{Framework for Down-Monotone Problems}\label{subsec:down-mon}
The algorithmic framework described above assumed that the stochastic problem is up-monotone. 
We now describe the changes needed to handle down-monotone problems. See Definition~\ref{def:monotonicity} for the formal definition of up/down monotonicity. The only change to Algorithm~\ref{alg:greedy-ucb} is that, at each step $t$, we now need to compute distribution $\bE^t$ that is {\em stochastically dominated} by the true (unknown) distribution $\bD$. That is, we need $\bE^t\sd \bD$ rather than $\bD\sd \bE^t$. Such a distribution can be constructed in exactly the same way as Algorithm~\ref{alg:emp_stoc_dom}: we just order the values in the support in {\em decreasing} order $a_1 > \cdots > a_k$ (instead of increasing order). Using the same proof as in \S\ref{subsec:emp-stoc-dom}, we obtain:
\begin{theorem}
\label{thm:emp-stoch-dom2}
There is an algorithm that, given $m$ i.i.d.\ samples from a distribution $\calD$ with finite support-size $k$, and parameter $\delta > 0$,  computes a distribution $\calE$ that satisfies the following properties with probability at least $1-\delta$:
\begin{itemize}
\item $\calE$ is stochastically dominated by $\calD$, and 
\item the total-variation distance $\TVD(\calE, \calD) < \sqrt{\frac{5k\log(2k/\delta)}{m}} +\frac{4k\log(2k/\delta)}{m}$. 
\end{itemize}  
\end{theorem}

For the analysis, we use the following analogue of the stablility lemma with down-monotonicity.
\begin{lemma}\label{lem:stable2}
Consider a stochastic problem that is down-monotone. Suppose that $\bE=\{\calE_i\}_{i=1}^n$ and $\bD=\{\calD_i\}_{i=1}^n$ are product distributions  
such that $\bE\sd \bD$ and $\TVD(\calE_i, \calD_i) \leq \epsilon_i$ for each $i \in [n]$. If $\sigma$
is the policy returned by $\ALG(\bE)$, 

and $\sigma^*$ is an optimal   policy under $\bD$,
then:
\begin{equation*} 
f(\sigma) - \alpha\cdot  f(\sigma^*) = \E_{\bx \sim \bD}\left[f(\sigma, \bx) - \alpha\cdot f(\sigma^*, \bx)\right] \leq    f_{\max} \sum_{i=1}^n {Q}_{i}(\sigma) \cdot \epsilon_i 
    \,,
\end{equation*}
where ${Q}_{i}(\sigma)$ denotes the probability that item $i$ is probed by policy $\sigma$ under distribution $\bld{D}$.
\end{lemma}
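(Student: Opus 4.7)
The proof will closely mirror that of Lemma~\ref{lem:stable}, with the role of the monotonicity assumption being the only step that needs adjustment. The plan is to decompose
\[
f(\sigma) - \alpha \cdot f(\sigma^*) \;=\; \bigl(f(\sigma\mid\bD) - f(\sigma\mid\bE)\bigr) \;+\; \bigl(f(\sigma\mid\bE) - \alpha\cdot f(\sigma^*\mid\bD)\bigr),
\]
and bound the two parts separately. For the second part, since $\sigma = \ALG(\bE)$ is an $\alpha$-approximate policy under $\bE$, we have $f(\sigma\mid\bE) \le \alpha \cdot \OPT(\bE)$, and since $\sigma^*$ is optimal under $\bD$ we have $f(\sigma^*\mid\bD) = \OPT(\bD)$. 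So the second parenthesis is at most $\alpha\cdot(\OPT(\bE) - \OPT(\bD))$.

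Here the change from the up-monotone case enters. In Lemma~\ref{lem:stable}, up-monotonicity combined with $\bD \sd \bE$ gave $\OPT(\bE) \le \OPT(\bD)$. Now the hypothesis is $\bE \sd \bD$, i.e., $\bD$ stochastically dominates $\bE$; since the problem is down-monotone, Definition~\ref{def:monotonicity} directly yields $\OPT(\bD) \ge \OPT(\bE)$, so the second parenthesis is again non-positive and drops out. This is the only place the dominance direction matters.

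For the first part, $f(\sigma\mid\bD) - f(\sigma\mid\bE)$, I would reuse verbatim the argument from \S\ref{sec:stable}: view $\sigma$'s decision tree as assigning an independent copy of the r.v.\ to each node $v$ (with marginal $\calD_v$ or $\calE_v$ under the two joint distributions), index nodes compatibly with the ancestor order, and telescope across the hybrid product distributions $\bld{H}^v = \calD_1 \times \cdots \times \calD_v \times \calE_{v+1} \times \cdots \times \calE_N$. The key per-node inequality
\[
\bigl| f(\sigma \mid \bld{H}^v) - f(\sigma \mid \bld{H}^{v-1}) \bigr| \;\le\; f_{\max}\cdot Q_v(\sigma)\cdot \TV(\calD_v,\calE_v)
\]
is proved by conditioning on whether $\sigma$ reaches $v$: the conditional expectations given $\neg \cR_v$ cancel, while on $\cR_v$ the two distributions differ only at node $v$, so Lemma~\ref{lem:function-val-tv} applied to the subtree rooted at $v$ (bounded by $f_{\max}$) together with Lemma~\ref{lem:TV-product-bound} gives the bound. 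Crucially, the probability $Q_v(\sigma)$ of reaching $v$ is identical under $\bld{H}^v$ and $\bld{H}^{v-1}$ because that event depends only on nodes with index $< v$, which share the same marginals in both hybrids. Summing over $v$ and aggregating nodes labeled by each item $i$ (so that $\sum_{v \text{ labeled } i} Q_v(\sigma) = Q_i(\sigma)$ and $\TV(\calD_v,\calE_v) \le \epsilon_i$) yields the desired $f_{\max}\sum_i Q_i(\sigma)\epsilon_i$ bound.

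There is essentially no obstacle beyond noting the symmetry; the telescoping/TV machinery is completely insensitive to the direction of stochastic dominance, which only enters through the monotonicity step. The mildest care is simply to verify that down-monotonicity with $\bE \sd \bD$ indeed gives $\OPT(\bE) \le \OPT(\bD)$ (as opposed to the reverse), which it does by direct substitution into Definition~\ref{def:monotonicity}.
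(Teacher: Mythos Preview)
Your proposal is correct and follows exactly the same approach as the paper, which simply states that the proof is identical to that of Lemma~\ref{lem:stable} (the up-monotone case). Your observation that only the monotonicity step changes—using down-monotonicity with $\bE\sd\bD$ to conclude $\OPT(\bE)\le\OPT(\bD)$—is precisely the single adjustment needed, and the hybrid/telescoping argument for bounding $|f(\sigma\mid\bD)-f(\sigma\mid\bE)|$ carries over verbatim.
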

The proof of this lemma is identical to that in \Cref{sec:stable} for up-monotonicity.
Using Theorem~\ref{thm:emp-stoch-dom2} and Lemma~\ref{lem:stable2}, the proof of Theorem~\ref{thm:main} (for down-monotone problems) is then identical to that for up-monotonicity (done earlier).

\subsection{Framework for Maximization Problems}
\label{sec:max-probs}
The algorithm/analysis  so far assumed that the stochastic problem $\cP$ has a minimization objective. 
We now describe the changes needed to handle a maximization problem $\cP$. Given a product distribution $\bld{U}$, we have
$$\OPT(\bld{U}) = \max_{\tau\in \cC} f(\tau|\bld{U}),$$
where  $\cC$ is the set of all feasible policies and  $f(\tau|\bld{U}) := \E_{\bx\sim \bld{U}}[f(\tau,\bx)]$ is  the expected value  of  policy $\tau$ under distribution $\bld{U}$. Note that $f(\tau) = f(\tau|\bD)$ where $\bD$ is the true distribution. 
As before, we assume access to a $\alpha$-approximation algorithm 
$\ALG$ for problem $\cP$; that is,
given any problem instance (with item distributions),  $\ALG$ finds a policy of expected value at least $\alpha$ times the optimum; here $\alpha\le 1$. Furthermore, we  assume that $\cP$ is down-monotone (up-monotone problems can be handled similarly). We note that the algorithm is unchanged: we still use   Algorithm~\ref{alg:greedy-ucb}.
The main  change is in the expression for regret, which is reflected in the modified stability lemma: 

\begin{lemma}\label{lem:stable-max}
Consider a stochastic maximization problem that is down-monotone. Suppose that $\bE=\{\calE_i\}_{i=1}^n$ and $\bD=\{\calD_i\}_{i=1}^n$ are product distributions  
such that $\bD\sd \bE$ and $\TVD(\calE_i, \calD_i) \leq \epsilon_i$ for each $i \in [n]$. 
If $\sigma$ is the policy returned by $\ALG(\bE)$  
and $\sigma^*$ is an optimal  policy under $\bD$, then:
\begin{equation}\label{eq:per-time-bound}
\alpha\cdot  f(\sigma^*) - f(\sigma)    = \E_{\bx \sim \bD}\left[\alpha\cdot f(\sigma^*, \bx) - f(\sigma, \bx)\right] \leq    f_{\max} \sum_{i=1}^n {Q}_{i}(\sigma) \cdot \epsilon_i 
    \,,
\end{equation}
where ${Q}_{i}(\sigma)$ denotes the probability that item $i$ is probed by policy $\sigma$ under distribution $\bld{D}$.
\end{lemma}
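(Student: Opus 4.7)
The plan is to mirror the proof of Lemma~\ref{lem:stable}, with two adjustments: the direction of the $\alpha$-approximation inequality flips (since we now maximize), and we appeal to down-monotonicity (rather than up-monotonicity). The starting decomposition is
\[
\alpha\, f(\sigma^* \mid \bD) - f(\sigma \mid \bD) \;=\; \bigl[\alpha\, f(\sigma^* \mid \bD) - f(\sigma \mid \bE)\bigr] \;+\; \bigl[f(\sigma \mid \bE) - f(\sigma \mid \bD)\bigr].
\]
I will show that the first bracket is non-positive and that the second bracket has absolute value at most $f_{\max}\sum_i Q_i(\sigma)\epsilon_i$; the lemma then follows directly.

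The first bracket is handled by the standard optimism chain. Since $\sigma = \ALG(\bE)$ is an $\alpha$-approximation for the maximization problem under $\bE$, we have $f(\sigma\mid\bE)\ge \alpha\cdot\OPT(\bE)$; and since $\sigma^*$ is optimal under $\bD$, $f(\sigma^*\mid\bD)=\OPT(\bD)$. Applying down-monotonicity to $\bD\sd\bE$, which in the max setting makes $\bE$ an optimistic surrogate for $\bD$, gives $\OPT(\bE)\ge \OPT(\bD)$. Chaining these, $\alpha f(\sigma^*\mid\bD) = \alpha\OPT(\bD)\le \alpha\OPT(\bE)\le f(\sigma\mid\bE)$, so the first bracket is $\le 0$.

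For the second bracket I will reuse the telescoping/hybrid-distributions argument from \S\ref{sec:stable}. Concretely: order the nodes $v=1,\ldots,N$ of $\sigma$'s decision tree consistently with the ancestor--descendant partial order, define hybrid product distributions $\bld{H}^v = \calD_1\times\cdots\times\calD_v\times\calE_{v+1}\times\cdots\times\calE_N$, and telescope $f(\sigma\mid\bD)-f(\sigma\mid\bE) = \sum_{v=1}^N\bigl[f(\sigma\mid\bld{H}^v) - f(\sigma\mid\bld{H}^{v-1})\bigr]$. Each summand is bounded by $f_{\max}\cdot Q_v(\sigma)\cdot\TV(\calD_v,\calE_v)$ by conditioning on whether $\sigma$ reaches node $v$: the two conditional expectations agree outside this event (since $\bld{H}^v$ and $\bld{H}^{v-1}$ differ only at node $v$), and on the event they differ by at most $f_{\max}\cdot\TV(\calD_v,\calE_v)$ via Lemma~\ref{lem:function-val-tv} applied to the subtree of $\sigma$ rooted at $v$. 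Summing and regrouping nodes by item label (using $Q_i(\sigma)=\sum_{v\text{ labeled }i}Q_v(\sigma)$) yields the claimed bound. Crucially, this second step is completely oblivious to min-vs-max and uses no monotonicity at all; the only place the down-monotone hypothesis enters is in the first bracket. The main obstacle is therefore just bookkeeping: verifying that ``down-monotone'' for maximization aligns with $\bD\sd\bE$ in a way that makes $\OPT(\bE)$ an overestimate of $\OPT(\bD)$; once that is settled, the rest is a faithful transcription of the proof of Lemma~\ref{lem:stable}.
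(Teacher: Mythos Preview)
Your proposal is correct and follows essentially the same approach as the paper: the paper decomposes $\alpha f(\sigma^*\mid\bD)-f(\sigma\mid\bD)$ in exactly the way you do, bounds the first bracket by zero via the $\alpha$-approximation guarantee, optimality of $\sigma^*$, and down-monotonicity (yielding $\OPT(\bD)\le\OPT(\bE)$), and bounds the second bracket by directly invoking the hybrid-distribution bound \eqref{eq:stable-abs-diff} already established in \S\ref{sec:stable}. Your reading of the down-monotone direction for maximization is the one the paper intends.
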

The proof of this  lemma follows the same approach as for Lemma~\ref{lem:stable}. 

\begin{align}
\alpha\cdot  f(\sigma^*)  - f(\sigma) & =  \alpha\cdot  f(\sigma^*| \bD) - f(\sigma| \bD)  = \alpha\cdot  f(\sigma^*| \bD) -  f(\sigma| \bE) +f(\sigma| \bE) - f(\sigma| \bD)   \notag \\
& \le \alpha\cdot  f(\sigma^* | \bD)  -  \alpha\cdot \OPT(\bE) + f(\sigma| \bE) - f(\sigma| \bD)    \label{eq:stable-1-max} \\ 
& =  \alpha\cdot \OPT(\bD) -  \alpha\cdot \OPT(\bE) + f(\sigma| \bE) - f(\sigma| \bD)    
  \,\,\le\,\, f(\sigma | \bE) - f(\sigma | \bD) \label{eq:stable-2-max}\\
  & \le f_{max} \cdot \sum_{i=1}^n \epsilon_i \cdot Q_i(\sigma). \label{eq:stable-3-max}
\end{align}
Inequality \eqref{eq:stable-1-max} uses the fact that $\sigma$ is an $\alpha$-approximate policy to the instance with distribution $\bE$. In \eqref{eq:stable-2-max}, the equality uses the fact that $\sigma^*$ is an optimal policy 
for the instance with distribution $\bD$, and the inequality is by down-monotonicity (Definition~\ref{def:monotonicity}) and $\bD\sd \bE$. Finally, \eqref{eq:stable-3-max} follows from the upper bound on $\bigg| f(\sigma | \bD) - f(\sigma | \bE)  \bigg|$ from \eqref{eq:stable-abs-diff}. Lemma~\ref{lem:stable-max} then implies Theorem~\ref{thm:main}  for maximization, down-monotone problems:  the proof is identical to that for minimization problems. Finally, the same guarantee holds when $\cP$ is up-monotone (per the discussion in \S\ref{subsec:down-mon}).

\section{Handling General Distributions}\label{sec:continuous}
In  this section, we extend  results from the discrete setting  to continuous (or even mixed) distributions. Here,  each random variable \( \varX_i \sim \mathcal{D}_i \) is supported over a bounded interval \([a, b]\). 

We will assume that there is an \( \alpha \)-approximation algorithm \( \ALG \) for the stochastic problem \( \mathcal{P} \) that always returns a {$k$-threshold} policy, defined as follows. The policy is represented by a decision tree \( \sigma \), where each internal node is labeled by an item \( i \in [n] \) to probe, along with a list of thresholds \( \langle \tau_1, \dots, \tau_{k-1} \rangle \) that define a partition of \([a, b]\), into intervals  \( \{J_1, \dots, J_k\} \). Each branch \( s \in [ k] \) is followed when the observed value \( x_i \in J_s \).
This ensures that each node has at most $k$ outgoing branches.
We also assume that for any $k$-threshold policy $\sigma$, the cost function \( f(\sigma, \bx) \) can be expressed in the following form:

\begin{equation}\label{eq:representcts}
f(\sigma,\bx)= \sum_{\ell : \text{leaf of } \sigma} \left( \prod_{i \in P_\ell} \1_{x_i \in J_\ell^i} \right) f_{\sigma, \ell}(\bx),    \qquad \forall~\bx \in [a,b]^n,
\end{equation}
where $ f_{\sigma, \ell}(\bx) $ is the cost function associated with the leaf $\ell$, $ P_\ell $ is the set of items probed along the path to $\ell$ and $J^i_\ell$ is the interval corresponding to the branch out of item $i$ leading to  leaf $\ell$. 
We note that, given a realization $\bx$, only one summand in \eqref{eq:representcts} will be non-zero (the one corresponding to the leaf $\ell$ that $\bx$ traces).

In this section, we use $f_{\sigma}(\bx):[a,b]^n\rightarrow \R_+$ to denote the function $f(\sigma,\bx)$ for a fixed policy $\sigma$. 

{

\begin{remark}
In the discrete setting with support size $k$, any policy naturally has at most $k$ branches per node (one per possible outcome), so the $k$-threshold structure arises automatically. In the continuous setting, an arbitrary policy could have uncountably many branches per node, making it hard to represent or analyze directly. The $k$-threshold assumption restricts each node to at most $k$ branches defined by thresholds, rendering the policy class  computationally and analytically tractable. 
We note that this assumption is not restrictive for our applications: the optimal policies for prophet inequality and Pandora's box are already $2$-threshold policies, and more generally, known approximation algorithms for monotone stochastic problems typically return threshold-based policies.
\end{remark}
}

\medskip
\noindent{\bf Assumption on the objective function.} We need to make some continuity and boundedness assumptions on the leaf functions $f_{\sigma, \ell}$. In order to state them formally, we start with some definitions.
 
 \begin{definition}[BV function]\label{def:BV funtion}
    The total variation of a 1-dimensional  function $g:[a,b]\rightarrow \R$ is:
    $$
    V_a^b(g) = \sup_{\mathcal{P}} \sum_{i=0}^m |g(x_i) - g(x_{i-1})|.
    $$
    where the supremum is taken over all finite partitions ${\cal P} = \langle a=x_0 < x_1 < \cdots x_m=b\rangle$ of  the interval.     We say that $g$ is a bounded variation (BV) function  if and only if $V_a^b(g) < \infty$.
\end{definition}

We list several useful properties of BV functions \citep{heil2019introduction}.
\begin{itemize}
     \item Any monotone function $g$ over $[a,b]$ is  BV  and $V_a^b(g) = |g(b)- g(a)|$.
     \item If $g$ is a Lipchitz function over $[a,b]$ then it is BV and  $V_a^b(g) \le L(b-a)$, where $L$ is the Lipchitz constant.
      \item For any BV function $g$ over $[a,b]$, we have $|g(x)| \,\le\, |g(a)| \,+\, V_a^b(g)$ for all  $x\in [a,b]$.

      \end{itemize}

We now introduce  a  generalization of BV to  high dimensions. 
Informally, this requires every 1-dimensional ``slice'' of $f$ to be  BV. 
\begin{definition}[Coordinate-wise BV]
A $n$-dimensional function $f(\bx) : [a,b]^n \rightarrow \mathbb{R}$ is said to have coordinate-wise bounded variation  if  the following conditions hold.
\begin{itemize}
    \item For  any index $ i \in [n] $  and  $\bx_{-i}= (x_1, \dots,x_{i-1}, x_{i+1},\dots x_n)\in [a,b]^{n-1}$,   the 1-dimensional function $f(y|\bx_{-i}) := f(x_1, \dots, x_{i-1}, y, x_{i+1}, \dots, x_n)$  is BV.
    
    \item Moreover, the total variation $(\TV)$ of $f(y|\bx_{-i})$ is uniformly bounded for all index $i$ and $ \bx_{-i}$, i.e., there is a finite value $ \TV(f) $ such that 
$$
\sup_{i, \bx_{-i}} V_{a}^{b}(f(y | \bx_{-i})) \leq \TV(f).
$$

\end{itemize}

\end{definition}

Similarly, we  define coordinate-wise continuous functions as follows.
\begin{definition}[Coordinate-wise Continuous]
A $n$-dimensional function $ f(\bx) : [a,b]^n \rightarrow \mathbb{R} $  is coordinate-wise continuous if the 1-dimensional function $f(y| \bx_{-i}) := f(x_1, \dots, x_{i-1}, y, x_{i+1}, \dots, x_n)$  is continuous for all $ i \in [n] $  and  $\bx_{-i}= (x_1, \dots,x_{i-1}, x_{i+1},\dots x_n)\in [a,b]^{n-1}$.
 \end{definition}

We assume that the objective function is non-negative and bounded, with $f(\sigma, \bx)\le f_{\max}$. Additionally, we make the following (mild) assumptions on the ``leaf functions''  $f_{\sigma, \ell}$  for any $k$-threshold policy. 
\begin{assumption}\label{assumptioncts} 
For any $k$-threshold policy $\sigma$  and leaf $\ell$,
we assume that $f_{\sigma, \ell}(\bx)$ is coordinate-wise BV and coordinate-wise continuous. Furthermore, we assume that the total variation of $f_{\sigma, \ell}$ is uniformly bounded over all  $ \sigma $ and  $\ell$, i.e., there is a finite value $\TV_{\max}(f)$, called the total-variation of $f$, such that 
$$ \sup_{\sigma, \ell} \, \TV(f_{\sigma, \ell})  \, \leq \, \TV_{\max}(f).     
$$
\end{assumption} 

\medskip
\noindent {\bf Preliminaries on probability distribution functions.} We will be working with general distributions (discrete or continuous) here. 
{
We use the Kolmogorov--Smirnov (KS) distance instead of total variation (TV) distance here, since the TV distance between two distinct continuous distributions can equal $1$ even when they are arbitrarily close, making TV-based bounds vacuous in this setting.
}To this end, we now introduce some definitions. 

\begin{definition}[Cumulative Distribution Function]
    Given a probability distribution $P$ over $\mathbb{R}^n$, we denote by $F_P$ the  cumulative distribution function (CDF) of $P$, defined as: 
    $$
    F_P(x_1 \dots x_n) = \Pr_{(\varX_1 \dots \varX_n) \sim P}[ \varX_1 \leq x_1,  \ldots,  \varX_n \leq x_n].
    $$

    Moreover, if $P = P_1 \times \dots \times P_n$ where $\{P_i\}_{i \in [n]}$ is a product distribution, we have:
    $$
     F_P(x_1 \dots x_n)= \prod_{i=1}^n F_{P_i}(x_i).
    $$
\end{definition}

\begin{definition}
    The Kolmogorov--Smirnov(KS) distance between two distributions $P$ and $Q$ on $\mathbb{R}$ is defined as :
$$
d_{\mathrm{KS}}(P, Q) := \sup_{x \in \mathbb{R}} |F_P(x) - F_Q(x)|.
$$
\end{definition}

In our proof, we require a few standard results from analysis (stated below).

\begin{theorem}[Combination of Lemma 5.2.14 and Theorem 5.2.15 from \citet{heil2019introduction}]\label{thm:bv-decomp}
 If $f:[a, b] \rightarrow \mathbb{R}$, then the following two statements are equivalent.
 \begin{itemize}
     \item  $f \in \mathrm{BV}[a, b]$.
     \item There exist monotone non-decreasing functions $f^+$ and $f^-$ such that $f=f^+-f^-$. And $V_a^b(f) =  f^+(b) + f^-(b) - f^+(a) - f^-(a) $
 \end{itemize}

\end{theorem}

\begin{theorem}[Theorem 12.14 from \citet{gordon1994integrals}]\label{thm:prod-integral}
 Let $f$ and $\phi$ be bounded functions defined on $[a, b]$. If $f$ is Riemann-Stieltjes integrable with respect to $\phi$ on $[a, b]$, then $\phi$ is Riemann-Stieltjes integrable with respect to $f$ on $[a, b]$ and

$$
\int_a^b \phi d f=f(b) \phi(b)-f(a) \phi(a)-\int_a^b f d \phi
$$
\end{theorem}

\begin{theorem}[Theorem 12.15 from \citet{gordon1994integrals}]\label{thm:int-exists}
 Let $f$ and $\phi$ be bounded functions defined on $[a, b]$. If $f$ is continuous on $[a, b]$ and $\phi$ is of bounded variation on $[a, b]$, then $f$ is Riemann-Stieltjes integrable with respect to $\phi$ on $[a, b]$.
\end{theorem}

The following theorem allows us to bound the difference in expectation of a BV function over two different distributions. 
  \begin{theorem}\label{thm:ksbnd}
  Let $g:[a,b]\rightarrow \R$ be a BV function, and $ P $ and $ Q $  be probability distributions on $\R$.  If the Riemann–Stieltjes integrals $ \int_a^b g(x)\,dF_P(x) $ and $ \int_a^b g(x)\,dF_Q(x) $ exist, then:
$$
\left| \int_a^b g \, dF_P - \int_a^b g \, dF_Q \right| \leq \left(2g_{\max}+V_a^b(g)\right) \cdot d_{\mathrm{KS}}(P, Q),
$$
where $g_{\max} := \max_{x \in [a,b]} |g(x)| $.
\end{theorem}

\begin{proof}
By the assumption, we have:
\begin{align}
\left| \int_a^b g \, dF_P - \int_a^b g \, dF_Q \right|  \,\, &= \,\,    \left|g(b)\left(F_P(b)-F_Q(b)\right) -g(a)\left(F_P(a)-F_Q(a)\right)  -\int_a^b (F_P - F_Q) \, dg \right| \nonumber \\
&\leq \,\,  \left|g(b)\left(F_P(b)-F_Q(b)\right)\right| + \left|g(a)\left(F_P(a)-F_Q(a)\right)\right| +\left|\int_a^b  F_P - F_Q  \, dg\right| \nonumber \\
&\leq \,\,  2g_{\max} \cdot d_{\mathrm{KS}}(P, Q) +\left|\int_a^b  F_P - F_Q  \, dg\right|  .\nonumber 
\end{align}
The equality follows from the integration by parts formula (Theorem~\ref{thm:prod-integral}). 

Using the fact that $g$ is a BV function,  we apply the Jordan decomposition (Theorem~\ref{thm:bv-decomp}) to write $ g = g^+ - g^- $, where $ g^+ $ and $ g^- $ are non-decreasing functions, and $V_a^b(g) =  g^+(b) + g^-(b) - g^+(a) - g^-(a)$. We now have

{ \begin{align*}
&\left|\int_a^b  F_P(x) - F_Q(x) \, dg(x) \right| = \bigg| \,\, \int_a^b F_P(x) - F_Q(x)  \, dg^+(x) - \int_a^b  F_P(x) - F_Q(x)  \, dg^-(x) \bigg|\\
&\leq \,\, \int_a^b \left| F_P(x) - F_Q(x) \right| \, dg^+(x) + \int_a^b \left| F_P(x) - F_Q(x) \right| \, dg^-(x) \\
&\leq \,\, \sup_{x \in \mathbb{R}} \left| F_P(x) - F_Q(x) \right| \cdot (g^+(b) - g^+(a))  + \sup_{x \in \mathbb{R}} \left| F_P(x) - F_Q(x) \right| \cdot (g^-(b) - g^-(a)) \\
&= V_a^b(g) \cdot d_{\mathrm{KS}}(P, Q),
\end{align*}}

This establishes the theorem.
\end{proof}

\paragraph{Online to offline framework.} The overall algorithm and analysis is  similar to that in the discrete case. Here, we focus on stochastic problems satisfying Assumption~\ref{assumptioncts} and up-monotonicity. We also assume that there is  an $\alpha$-approximation algorithm via $k$-threshold policies. These results also extend to down-monotone problems in a straightforward way. 

At each time step $t=1,\cdots T$, we first construct a (modified) empirical distribution $\bE^t$ that stochastically dominates the true distribution $\bD$ w.h.p. Then, we obtain a $k$-threshold policy $\sigma^t$ by running the $\alpha$-approximation  algorithm on distribution $\bE^t$. Finally, we run  policy $\sigma^t$ on a realization $\bx^t \sim \bD$ and observe semi-bandit feedback. In the next two subsections, we prove the  sampling lemma (that  constructs a stochastically dominating distribution) and the stability lemma (that bounds the single-step regret).

\subsection{Sampling Algorithm}\label{lem:ctssampling}
Suppose the underlying distribution $\calD$ has CDF  $F(x)$. We will show how to construct an empirical distribution $\calE$ that dominates $\calD$ where $\calD$ and $\calE$ are close enough in terms of KS distance. 
\begin{theorem}\label{thm: cts sampling}
There is an efficient algorithm  that, given $m$ i.i.d. samples from a   distribution $\calD$ with bounded support $[a,b]$, computes a distribution $\calE$  satisfying the following properties with probability at least $1-\delta$.
\begin{itemize}
\item $\calE$ stochastically dominates $\calD$.
\item The KS distance $d_{KS}(\calE, \calD) < 2 \sqrt{\frac{\log(2/\delta)}{2m}}$. 
\end{itemize}  
\end{theorem}
\begin{proof}
The main idea is to first construct the empirical distribution, and subsequently move an $\epsilon$ mass from low values to the maximal value $b$. See \Cref{alg:emp_stoc_domcts} for a formal description of our construction.

\begin{algorithm}
\caption{\textsc{EmpStocDomCts}}
\label{alg:emp_stoc_domcts}
\begin{algorithmic}[1]
\State \textbf{Input:} support $S := [a,b]$, samples  $x_1, \ldots, x_m\sim \calD$, confidence parameter $\delta$
\State let $\epsilon = \sqrt{\frac{\log(2/\delta)}{2m}}$
\State  let $F_m(x) \leftarrow \frac1m\cdot  \sum_{j = 1}^m \1[x_j \leq x]$ 
\State let $\widehat{F}_m(x)= (F_m(x)-\epsilon)^+$ for all $x<b$, and set $\widehat{F}_m(x)=1$ for all $x\geq b$.
\State \textbf{Output:} distribution $\calE$ over support $S$ with CDF $\widehat{F}_m(x)$
\end{algorithmic}
\end{algorithm}

\paragraph{$\calE$ is a valid distribution.} By our construction, $\widehat{F}_m(x)$ is a right continuous and non-decreasing function with $\widehat{F}_m(x) =0$ for all $x < a$ and $\widehat{F}_m(x)=1$ for all $x \geq b$. Hence it is a valid CDF.

\paragraph{Stochastic dominance and $d_{KS}$ bound.} Using the choice of $\epsilon$ in the algorithm, by the Dvoretzky-Kiefer-Wolfowitz (DKW) Inequality (Theorem~\ref{thm:dkw}), we have:
$$
\Pr \left(\sup_{x\in \mathbb{R}} \, |F_m(x) - F(x)| \,\, \geq \,\, \epsilon \right) \,\, \leq \,\, \delta .
$$
Henceforth, we assume the ``good'' event that $|F_m(x) - F(x)| < \epsilon$ for all $x \in [a,b]$, which occurs with probability at least $1-\delta$. Under this event, we  will show that $\calD\sd \calE$ and $d_{KS}(\calE, \calD) \leq 2\epsilon$. This would complete the proof of Theorem~\ref{thm: cts sampling}.  
\paragraph{Bounding $d_{KS}(\widehat{F}_m, F)$.}
First we notice that 
$$
\sup_{x \in \mathbb{R}} |\widehat{F}(x) - F_m(x)| \,\, = \,\, \sup_{x <b} |\widehat{F}(x) - F_m(x)| \,\, = \,\, \sup_{x < b} \left( F_m(x) - (F_m(x)-\epsilon)^+\right) \,\, \leq \,\, \epsilon.
$$
The first equality  uses  $\widehat{F}(x)=F_m(x)=1$ for $x \geq b$.
Moreover, by triangle inequality, we have :
$$
d_{KS}(\calE, \calD) \,\, = \,\, \sup_{x \in \mathbb{R}} |\widehat{F}(x) - F(x)| \,\, \leq \,\, \sup_{x \in \mathbb{R}} |\widehat{F}(x) - F_m(x)| \, + \, \sup_{x \in \mathbb{R}} |F(x) - F_m(x)| \,\, \leq \,\,  2 \epsilon .
$$

\paragraph{Showing $\calD\sd \calE$.} 
Let $\varY \sim \widehat{F}$ and $\varX \sim F$. We want to show $\Pr(\varY \geq t) \geq \Pr(\varX \geq t)$ for any $t \in \mathbb{R}$. 
Under the good event assumption, we have $F_m(x) - \epsilon \leq F(x)$ for all $x \in \R$. So, $\widehat{F}(x)=(F_m(x) - \epsilon)^+ \le F(x)$ for all $x<b$. Combined with the fact that $\widehat{F}(x)=F_m(x)=1$ for $x \geq b$, it follows that  $\varY$ stochastically dominates $\varX$.
\end{proof}

\subsection{Stability Lemma}
In this subsection, we  present a  version of the stability lemma that holds for general distributions and $k$-threshold policies. 
\begin{lemma}[New Stability lemma]\label{lem:stable-cont}
Consider a stochastic problem that is up-monotone and which satisfies Assumption~\ref{assumptioncts}. Suppose that $\bE=\{\calE_i\}_{i=1}^n$ and $\bD=\{\calD_i\}_{i=1}^n$ are product distributions such that $\bD\sd \bE$ and $d_{KS}(\calE_i, \calD_i) \leq \epsilon_i$ for each $i \in [n]$. If $\sigma=\ALG(\bE)$ is a $k$-threshold policy and $\sigma^*$ is the optimal policy under $\bD$, then 
\begin{equation}\label{eq:per-time-bound}
f(\sigma) - \alpha\cdot  f(\sigma^*) = \E_{\bx \sim \bD}\left[f(\sigma, \bx) - \alpha\cdot f(\sigma^*, \bx)\right] \leq  k\cdot  (2f_{\max}+ \TV_{\max}(f)) \cdot \sum_{i=1}^n {Q}_{i}(\sigma) \cdot \epsilon_i
    \,,
\end{equation}
where ${Q}_{i}(\sigma)$ denotes the probability that item $i$ is probed by policy $\sigma$ under distribution $\bld{D}$.
\end{lemma}
\begin{proof}
     We follow the same proof structure used in the discrete setting (Lemma~\ref{lem:stable}). In fact, the proof of the new stability lemma remains the same except for Inequality~\eqref{eq:stab-lemma-conditional-2}. Recall that $v$ denotes some node in the decision tree of ($k$-threshold) policy $\sigma$, and that these nodes are partially-ordered based on the tree. The product distribution $\bld{H}^v = \calD_1 \times \cdots \times \calD_{v} \times \calE_{v+1} \times \cdots \times \calE_N$ and  ${\cal R}_v$ denotes the event that $\sigma$  reaches node $v$. Let $i\in [n]$ denote the item labeling node $v$. Instead of \eqref{eq:stab-lemma-conditional-2} we will now show the following inequality, which suffices to prove the lemma (as before). 
\begin{equation}\label{eq:cont-stab-lem}
    \left| \E_{\bx \sim \bld{H}^v} \left[ f_\sigma(\bx) | \mathcal{R}_v \right] - \E_{\bx \sim \bld{H}^{v-1}} \left[ f_\sigma(\bx) | \mathcal{R}_v \right] \right| \,\, \leq \,\, (2f_{\max}+ \TV_{\max}(f)) \cdot \epsilon_i, \quad \text{for any node } v.
\end{equation}
In order to prove this, we first apply the law of total expectation to show:
\begin{align}
&\E_{\bx \sim \bld{H}^v} \left[ f_\sigma(\bx) | \mathcal{R}_v \right] - \E_{\bx \sim \bld{H}^{v-1}} \left[ f_\sigma(\bx) | \mathcal{R}_v \right] \nonumber \\
=\,\, &\E_{\bx_{-i}} \left[ \E_{x_i \sim \calD_i} \left[ f_\sigma(\bx) | \mathcal{R}_v, \bx_{-i} \right] \right] - \E_{\bx_{-i}} \left[ \E_{x_i \sim \calE_i} \left[ f_\sigma(\bx) | \mathcal{R}_v, \bx_{-i} \right] \right] \nonumber \\
=\, &\E_{\bx_{-i}} \left[ \1[{\cal R}_v]\cdot \left( \E_{x_i \sim \calD_i} \left[ f_\sigma(x_i,\bx_{-i}) \right] - \E_{x_i \sim \calE_i} \left[ f_\sigma(x_i, \bx_{-i}) \right]\right)  \right]. \label{eq:ctsKS}
\end{align}
The first equality uses the fact that $ \bld{H}^v $ and $ \bld{H}^{v-1} $ differ only in the distribution of $\varX_v $, and hence share the same marginal distribution for $\varX_{-v} $. The second equality follows from the independence of the variables $ \{\varX_j\}_{j=1}^n $ and the fact that event $\mathcal{R}_v$ is  determined by the realization $\bx_{-i}$ (recall that node $v$ corresponds to item $i$).

We now condition on any realization $\bx_{-i}$ corresponding to event $\mathcal{R}_v$, i.e., policy $\sigma$  reaches node $v$ under this realization. We will show that, conditioned on $\bx_{-i}$,
\begin{equation}\label{eq:cont-stab-lem-2}
  |  \E_{x_i \sim \calD_i} \left[ f_\sigma(x_i, \bx_{-i}) \right] - \E_{x_i \sim \calE_i} \left[ f_\sigma(x_i, \bx_{-i}) \right] | \le k\cdot \left(2f_{max} + \TV_{\max}(f)\right)\cdot \epsilon_i
\end{equation}
Combined with \eqref{eq:ctsKS} this would prove \eqref{eq:cont-stab-lem}, which in turn implies the lemma.

It now remains to prove \eqref{eq:cont-stab-lem-2}. 
As $\sigma$ is a $k$-threshold policy, there are at most $k$ choices for the  leaf node reached under $\bx=(x_i, \bx_{-i})$ as  $x_i$ varies. Indeed,  there are only $k$ branches out of any node and each branch out of node $v$ leads to a unique leaf conditioned on $\bx_{-i}$. 
Let $\{J_s\}_{s=1}^k$ denote  the partition of $[a,b]$ corresponding to the thresholds at node $v$; each branch out of $v$ corresponds to $x_i\in J_s$ for some $s \in [k]$. For each $s\in [k]$ let $\ell_s$ denote the unique leaf in $\sigma$ corresponding to any realization  $(x_i, \bx_{-i})$ where $x_i\in J_s$. Combined with the representation in~\eqref{eq:representcts}, conditioned on $\bx_{-i}$, we can write the function as:
$$f_\sigma(x_i, \bx_{-i}) = \sum_{s=1}^k \1_{x_i \in J_s} \cdot f_{\sigma, \ell_s}(x_i, \bx_{-i}) = \sum_{s=1}^k g_s(x_i),$$
where we define the 1-dimensional function $g_s(y):=\1_{y \in J_s} \cdot f_{\sigma,\ell_s}(y, \bx_{-i})$ for each $s\in [k]$. 
By Assumption~\ref{assumptioncts} it follows that $g_s$ is a BV function with $V_a^b(g_s)\le \TV_{\max}(f)$.  We want to apply 
Theorem~\ref{thm:ksbnd} to function $g_s$ on  interval $J_s$ with distributions $\calD_i$ and $\calE_i$. However, $J_s$ itself may not be a closed interval. To resolve this, we use a standard approximation argument. Let $J_{s,r} \subseteq J_s$ be a sequence of ascending closed intervals such that $ \lim\limits_{r\rightarrow \infty}J_{s,r} = J_s$. Then we have:
\begin{align*}
     &\left|  \int_{J_s} g_s(x) \, dF_{\calD_i} - \int_{J_s} g_s(x) \, dF_{\calE_i} \right |   = \lim_{r\rightarrow \infty} \left|\int_{J_{s,r}} g_s(x )dF_{\calD_i} - \int_{J_{s,r}} g_{s}(x) \, dF_{\calE_i}\right| \\ 
     & \le  (2 f_{max} + \TV_{\max}(f)) \cdot d_{KS}(\calD_i, \calE_i) \,\,\le\,\, (2 f_{max} + \TV_{\max}(f)) \cdot \epsilon_i.  
\end{align*}
The equality is by the continuity of integration over ascending intervals. The first inequality is by Theorem~\ref{thm:ksbnd} to function $g_s$ restricted to the closed interval $J_{s,r}$ with distributions $\calD_i$ and $\calE_i$. In order to apply this result we need the existence of  the Riemann–Stieltjes integrals:
$$
\int_{J_{s,r}} g_s(x) \, dF_{\calD_i}
\quad \text{and} \quad 
\int_{J_{s,r}} g_s(x) \, dF_{\calE_i}, \quad \text{for all $r$} .$$
The existence  follows from Theorem~\ref{thm:int-exists}: note that $g_s$ is continuous over the closed interval $J_{s,r}$ and   both $ F_{\calD_i} $ and $ F_{\calE_i} $ are BV  functions (they are monotone). It now follows that, conditioned on $\bx_{-i}$,
\begin{align*}
& \left|  \E_{x_i \sim \calD_i} \left[ f_\sigma(x_i, \bx_{-i}) \right] - \E_{x_i \sim \calE_i} \left[ f_\sigma(x_i, \bx_{-i}) \right]  \right| =  \left| \E_{x_i \sim \calD_i} \left[ \sum_{s=1}^k g_s(x_i)  \right]  - 
\E_{x_i \sim \calE_i} \left[ \sum_{s=1}^kg_s(x_i)  \right]\right| \\
& \le \sum_{s=1}^k \left| \E_{x_i \sim \calD_i} \left[ g_s(x_i)  \right]  - 
\E_{x_i \sim \calE_i} \left[ g_s(x_i)  \right]\right|  
\leq\  k \cdot (2f_{\max}+ \TV_{\max}(f)) \cdot \epsilon_i.  
\end{align*}
This completes the proof of \eqref{eq:cont-stab-lem-2} and the lemma.  
\end{proof}

 \subsection{Overall Regret}
We can now combine the sampling and stability lemmas to obtain our  result for general distributions. The proof is identical to that of Theorem~\ref{thm:main} where we replace Theorem~\ref{thm:emp-stoch-dom} with Theorem~\ref{thm: cts sampling} and Lemma~\ref{lem:stable} with Lemma~\ref{lem:stable-cont}. 
\begin{theorem}\label{thm:main-cont} Consider a stochastic problem $\cP$ that is either up-monotone or down-monotone, and satisfies Assumption~\ref{assumptioncts}. Suppose that $\cP$  has an  $\alpha$-approximation algorithm via $k$-threshold policies. Then, there is a  polynomial-time  semi-bandit learning algorithm  for $\cP$  with  $\alpha$-regret $O(k n  (f_{\max}+ \TV_{\max}(f)) \cdot  \sqrt{T\log (nT)})$. Here, $n$ is the number of items, $T$ is the number of periods, and $f_{\max}$ and $\TV_{\max}(f)$ are  the maximal-value and total-variation of the objective function. 
\end{theorem}

{
\begin{remark}
A Lipschitz bound on the policy objective cannot in general remove the dependence on $k$ because the function $f(\sigma, \bx)$ is not Lipschitz in the distribution around threshold values.
To see this, consider a policy $\sigma_\tau$ for single-item prophet inequality having threshold $\tau$. Suppose we have two distributions: $\calD$ a point mass at $\tau - \epsilon$ and $\calE$ a point mass at $\tau + \epsilon$. Then $f(\sigma_\tau | \calD) = 0$ but $f(\sigma_\tau | \calE) = \tau + \epsilon$, a difference of nearly $\tau$ despite $\calD$ and $\calE$ being arbitrarily close as $\epsilon \to 0$. 
Since a $k$-threshold policy has $k$ such points of discontinuity, each contributing $O(f_{\max})$ to the objective, the dependence on $k$ is unavoidable in general. 
\end{remark}
}

\begin{remark}\label{rk:coordinate monotone}
If each leaf function $f_{\sigma, \ell}(\bx)$ is \emph{coordinate-wise monotone}, i.e.,  every one-dimensional ``slice'' of $f_{\sigma, \ell}$ is monotone, 
then we have a simpler result. Formally, a $n$-dimensional function 
$f(\bx): [a,b]^n \to \mathbb{R}$
is coordinate-wise monotone if, for all $i \in [n]$, the one-dimensional function $f(y | \bx_{-i})$
is monotone for every possible $
\bx_{-i}.
$ In this case, we have $\TV_{\max}(f) \le f_{\max}$ 
(see the properties listed after Definition~\ref{def:BV funtion}), 
and the regret bound simplifies to
$
O\bigl(k n f_{\max} \cdot \sqrt{T \log (nT)}\bigr).
$    
 \end{remark}

\section{Censored and Binary Feedback}\label{sec:binary/censored}
We further generalize our framework to accommodate settings with more restrictive feedback, enabling us to model more complex and realistic decision-making problems. Unlike the standard \emph{semi-bandit} setting discussed  so far, the algorithm here does not directly observe the realization $x_i$ of a  probed item $i$. 
We consider two  feedback models: censored and binary.
The first  model,  called \emph{censored feedback}, assumes that the algorithm  only observes the realization of each random variable up to  a chosen ``threshold''. The second model, termed \emph{binary feedback}, is even more limited: the algorithm only receives  binary information indicating whether/not the  random variable's  realization is  above  its chosen threshold.

We also focus on a restricted class of  ``threshold based'' stochastic problems. We assume that all random variables are  discrete with a finite support set   $\{a_1,\ldots,  a_k\}$.
A solution/policy for a threshold-based stochastic problem
is given by a decision tree $\sigma$ where each internal node is labeled   by a pair $(c, i)$ where $i\in [n]$ is an item and $c\in [k]$ corresponds to a threshold value. When the policy $\sigma$ reaches  node $(c, i)$, it probes item $i$ {\em   with threshold} $a_c$ and observes the following  partial  realization of $X_i$ (depending on the feedback model). 

\begin{itemize}
    \item In  \emph{censored feedback}, the policy observes the truncated realization of $\min\{\varX_i, a_c\}$ and the   branches out  node $(c,i)$ correspond to this observation.

    \item In \emph{binary feedback}, the policy observes the indicator $\1_{\varX_i \geq a_c}$ and the   branches out  node $(c,i)$ correspond to this observation.
    \end{itemize}    
Given a policy $\sigma$ and realization $\bX$, we  use  $S(\sigma, \bX)$ to denote   the  sequence of item-threshold pairs that are probed. 
We also make the following assumption on the objective function:
\begin{assumption}
\label{asmp:censored-f} Given any threshold policy $\sigma$, the function value at each leaf node of $\sigma$ is a constant (which may differ across leaf nodes).
\end{assumption}

We note that this assumption is more restrictive compared to the previous results (Theorems~\ref{thm:main} and \ref{thm:main-cont}).
This is because at any  leaf node of policy $\sigma$ we may not know the precise realizations of probed items. Nevertheless, there are some interesting problems   that satisfy this assumption (see below).

In the online setting, at time $t\in [T]$, if the  algorithm implements  policy $\sigma^t$ and the realizations are $\bx^t\sim \bD$ then  it observes the tuple
    \[
    \left\{\min(x_{i}^t ,a_c)\, :\, (c,i)\in S(\sigma^t, \bx^t)\right\}
    \]
    under censored feedback, and the tuple
       \[
    \left\{\1_{x_i^t \geq a_c} \,:\, (c,i)\in S( \sigma^t, \bx^t)\right\}
    \]
    under binary feedback. The goal is to minimize $\alpha$-regret as before.

\paragraph{Censored feedback application: Single Resource Revenue Management}
A classical instance of \emph{censored feedback} arises in the single resource revenue management setting~\cite[4--6]{GT19}. Consider an airline with $C$ units of seat capacity available for sale across multiple fare classes. The  customer demand for each fare class is  an  independent random variable.
We allocate portions of the capacity to different fare classes. When we choose to offer $m$ seats to a specific fare class, we observe only the number of tickets sold---namely, the minimum of the demand and $m$. The airline's revenue (which is the objective function) also depends only on the number sold. This corresponds to censored feedback.

\paragraph{Binary feedback application: Sequential Posted Pricing (SPM).  } Consider a seller who wants to sell a single item to a group of $n$ buyers who have random valuations. The buyers arrive sequentially, and the seller presents each buyer $i$ with a take-it-or-leave-it price $p_i$. The first buyer $i$ whose value exceeds the posted price $p_i$ buys the item and the seller receives $p_i$ dollars as revenue. Although the seller observes whether/not a particular buyer buys the item,  the exact  valuations remain  unknown.  This corresponds to binary feedback.

\medskip

Naturally, one might attempt to incorporate  censored  and binary feedback   into the \emph{semi-bandit} framework by introducing additional independent copies of each random variable (one for each threshold). However, this approach can be problematic for several reasons. First, it may significantly increase the complexity of the original problem, making it difficult to find a good approximation result even in the offline setting with known distributions. Second, it can fail to preserve the monotone property of the original problem, which is crucial for our analysis.

Instead, we show how to  extend our techniques from prior sections to the new \emph{censored/binary} feedback settings.
Our main results here are the following.

\begin{theorem}[Censored Feedback]\label{thm:main for censored}
    Consider a stochastic problem $\mathcal{P}$ that is either up-monotone or down-monotone, and satisfies Assumption~\ref{asmp:censored-f}.     
    Suppose that $\mathcal{P}$ has an $\alpha$-approximation algorithm via threshold policies. 
        Then, there is a polynomial time censored feedback learning algorithm for $\mathcal{P}$ (with unknown distributions) that has $\alpha$-regret  $O(n k f_{\max } \sqrt{kT\log (kn T)})$. Here, $n$ is the number of items, $k$ is the maximum support size, $f_{\max }$ is the maximal value of the objective function  and $T$ is the number of time periods.
\end{theorem}

\begin{theorem}[Binary Feedback]\label{thm:main for binary} Consider a stochastic problem $\mathcal{P}$ that is either up-monotone or down-monotone, and satisfies Assumption~\ref{asmp:censored-f}.     
    Suppose that $\mathcal{P}$ has an $\alpha$-approximation algorithm via threshold policies. 
    Then, there is a polynomial time binary feedback learning algorithm for $\mathcal{P}$ (with unknown distributions) that has $\alpha$-regret  $O(n f_{\max} \sqrt{kT\log(knT)})$. Here, $n$ is the number of items, $k$ is the maximum support size, $f_{\max }$ is the maximal value of the objective function  and $T$ is the number of time periods.
\end{theorem}

The high-level approach remains the same as in the  semi-bandit   setting. In the following two subsections, we establish the key sampling theorems and stability lemmas tailored to the new feedback models. We primarily focus on  minimization up-monotone  problems under the censored feedback setting; the same ideas apply to the binary-feedback setting as well. As with semi-bandit feedback, results for all the other combinations (minimization/maximization objective and up/down monotonicity) follow in an identical manner. The primary technical contribution  is  the sampling theorem, which   addresses the challenge posed by limited observations.

\subsection{New Sampling Algorithms}\label{sec:sampling theorem}

In this subsection, we focus on up-monotone stochastic problems. The analysis for down-monotone stochastic problems is similar. We will first present the sampling theorem for the censored feedback setting. Then, we  provide a similar result for the binary feedback case. The key challenge in these settings is that, due to the limited nature of censored/binary feedback, we cannot directly control the total variation distance between the empirical and the true underlying distribution. Instead, we design an algorithm that controls the total variation distance at different threshold levels.

\begin{definition}[Truncated Distribution]\label{def:truncated distribution}
   Given a discrete r.v.  $\varX\sim \calD$ with support set $ \{a_1, \ldots, a_k\}$, the truncated distribution $\calD_c$ is the distribution of r.v. $\varX_c:=\min(\varX,a_c)$. It is supported on values  $\{a_1 , \ldots, a_c\}$, with   $\Pr[\varX_c = v]=\Pr[\varX=v]$  for all $v< a_c$  and  $\Pr[\varX_c = a_c]=\Pr[\varX\ge a_c]$. 
\end{definition}

\begin{theorem}[Sampling under Censored Feedback]
\label{thm:emp-stoch-dom-censored}
Consider any r.v.  $\varX\sim \calD$ with support set $ \{a_1, \ldots, a_k\}$. There is an efficient algorithm  that, given $m_c$ i.i.d. ``censored'' samples of the form $\min(\varX, a_c)$ for each $c \in [k]$,  computes a distribution $\calE$  satisfying the following properties with probability at least $1-\delta$.
\begin{itemize}
\item $\calE$ stochastically dominates $\calD$.
\item For each $c \in [k]$,  the total-variation distance $\TVD(\calE_c, \calD_c) < k \sqrt{\frac{2\log(2k/\delta)}{m_c}}$. 
\end{itemize}  
\end{theorem}

\begin{proof} We first note that when a sample of $\varX_b := \min(\varX, a_b)$ is obtained, it also provides a sample of $\varX_c$ for all $c \leq b$. Consequently, we define
\(
n_c := \sum_{b \geq c} m_b
\)
as the effective number of samples  at level $c$. For any $c\in [k]$, we also set its confidence width as
\(
\epsilon_c := \sqrt{\frac{\log(2k/\delta)}{2n_c}}.
\)
Note that $\epsilon_1 \leq \cdots \leq \epsilon_k$.

For each $c \in [k]$, let $\widehat{\calD}_c$ denote the empirical distribution of the $n_c$  samples of $\varX_c$, and define its probability mass function (pmf) as 
$$\hat{p}_c(b) := \Pr_{\varY\sim \widehat{\calD}_c}[\varY=a_b],\quad \forall b\le c.$$
Let $q_c:=\Pr_{\varX \sim \calD}[\varX \geq a_c]$ for $c\in [k]$ denote the tail cdf of the true distribution $\calD$.
By Hoeffding’s inequality~\citep{hoeffding1963probability}, for each $c \in [k]$, we have:
\[
\Pr\left[ \left| \hat{p}_c(c) - q_c \right| > \epsilon_c \right] < \frac{\delta}{k}.
\]
Combined with a union bound,  the following ``good'' event  holds w.p. at least $1-\delta$.
\begin{equation}
    \label{eq:censored-good-E}
|\hp_c(c) - q_c| \le \epsilon_c , \quad \forall c\in [k].\end{equation}
We will assume this good event in the rest of the proof. Our algorithm to  construct the desired  distribution $\calE$ modifies the (truncated) empirical distributions $\widehat{\calD_c}$  by  increasing $c=1,2,\ldots, k$. For each $c \in [k]$, we
construct a distribution $\calE_c$ supported on $\{a_1, \ldots, a_c\}$ that stochastically dominates $\calD_c$; we construct $\calE_{c+1}$ from $\calE_c$ by moving some mass from $a_c$ to $a_{c+1}$. Below, the pmf of $\calE_c$ is denoted by    $\calE_c(b) = \Pr_{\varY\sim  {\calE}_c}[\varY=a_b]$ for $b\le c$.

\begin{algorithm}
\caption{Constructing Empirical Modified Distribution $\mathcal{E}$  \label{alg:censored calE construction} }
\begin{algorithmic}[1]
\State  Set $\calE_1(1) = 1$
\For{$c = 1,2,\cdots k-1$}
    \State \label{step:cens-sampling-0} Set $\calE_{c+1}(b) \gets \calE_c(b)$ for $b =1, \ldots,  c-1$     
    \State \label{step:cens-sampling-1} $\calE_{c+1}(c+1) \gets \min\{\calE_c(c), \hat{p}_{c+1}(c+1) + \epsilon_{c+1} \}$
    \State \label{step:cens-sampling-2} $\calE_{c+1}(c) \gets \calE_c(c) - \calE_{c+1}(c+1)$
\EndFor
\State \textbf{Output:} $\calE = \calE_k$
\end{algorithmic}
\end{algorithm}

We complete the proof of the theorem by proving the following claims via induction.

\begin{claim}\label{valid distribution}
For each $c\in[k]$,   $\calE_c$ is a valid probability distribution.
\end{claim}
\begin{proof}
The base case ($c = 1$) is trivial by construction:  $\calE_1$ is a point mass  distribution. We will now show that $\calE_{c+1}$ is a valid probability distribution assuming that $\calE_{c}$ is. It is easy to see that the pmf $\calE_{c+1}(\cdot)$ is non-negative by Steps~$3$-$5$ in \Cref{alg:censored calE construction}. Further,  
\[
\sum_{b=1}^{c+1} \calE_{c+1}(b) 
= \sum_{b=1}^{c-1} \calE_c(b) + \calE_{c+1}(c) + \calE_{c+1}(c+1)
= \sum_{b=1}^{c-1} \calE_c(b) + \calE_c(c) = 1,
\]
where the first equality is by Step~\ref{step:cens-sampling-0}, the second equality is by Step~\ref{step:cens-sampling-2} and the last equality uses that  $\calE_c$ is a valid distribution. Therefore, $\calE_{c+1}$ is also a valid distribution.
\end{proof}

\begin{claim}\label{sto dominate}
Under the good event~\eqref{eq:censored-good-E}, for each $c\in[k]$, $\calE_c$ stochastically dominates $\calD_c$ .
\end{claim}
\begin{proof}
The base case ($c = 1$) is trivial as both $\calD_1$ and $\calE_1$ are  point masses with value $a_1$. We will show that  $\calD_{c+1} \sd \calE_{c+1}$  assuming  $\calD_c \sd \calE_c$. 
It suffices to show that 
\begin{equation}
    \label{eq:censored-claim-2}
\sum_{j=b}^{c+1} \calE_{c+1}(j) \ge q_b,\quad \forall b=1, \ldots, c+1.
\end{equation}
For $b \le c$, we have
\[
\sum_{j=b}^{c+1} \calE_{c+1}(j) = \sum_{j=b}^{c-1} \calE_c(j) + \calE_{c+1}(c) + \calE_{c+1}(c+1) = \sum_{j=b}^{c-1} \calE_c(j) + \calE_c(c) = \sum_{j=b}^c \calE_c(j) \ge q_b.
\]
The first two equalities are by Steps~$3$-$5$ in \Cref{alg:censored calE construction}, and the inequality is by $\calD_c \sd \calE_c$. It remains to  prove \eqref{eq:censored-claim-2} for  $b = c+1$. By the good event~\eqref{eq:censored-good-E},  we have $q_{c+1} \le \hp_{c+1}(c+1) + \epsilon_{c+1}$. Moreover, by $\calD_c \sd \calE_c$ we have $\calE_c(c) \ge q_c \ge q_{c+1}$. So, by Step~5 in \Cref{alg:censored calE construction}, we have
\[
\calE_{c+1}(c+1)  = \min\left\{ \calE_c(c), \hp_{c+1}(c+1) + \epsilon_{c+1} \right\}\ge q_{c+1},
\]
which completes the proof of \eqref{eq:censored-claim-2}, and thus the claim. 
\end{proof}

\begin{claim}\label{claim: TV dist}
Under the good event~\eqref{eq:censored-good-E}, for each $c \in[k]$, we have
$\TV(\calE_c , \calD_c) \le k \cdot \sqrt{\frac{2 \log(2k/\delta)}{n_c}}$.
\end{claim}
\begin{proof}
Fix any $c\in [k]$. We first claim that

\begin{equation}\label{eq: TV dist}
   \sum_{ j= b}^c \calE_c(j) = \calE_b(b) \le \hp_b(b) +\epsilon_b \le q_b +2\epsilon_b,\qquad \forall b\le c.
\end{equation}
The equality $\sum_{j= b}^c \calE_c(j) = \calE_b(b)$ for $b\le c$ is by  construction of the distributions $\calE_c$s: we only move mass from low to high support values.  
The inequality is by Step~4 of \Cref{alg:censored calE construction}, and the last inequality is by the good event \eqref{eq:censored-good-E}. Now, for  any $b \le c-1$, we have
\begin{align}
\left|\calE_c(b) - \Pr(\varX = a_b)\right| &= \left|\sum_{ j= b}^c \calE_c(j) - \sum_{ j= b+1}^c \calE_c(j) - q_b + q_ {b+1} \right| \notag \\
&\leq \left|\sum_{ j= b}^c   \calE_c(j)- q_b\right| +  \left|\sum_{ j= b+1}^c \calE_c(j) - q_{b+1} \right| \leq  2\epsilon_b + 2\epsilon_{b+1} \le 4 \epsilon_{c}. \label{eq:censored-tv}      
\end{align}
The second inequality uses \eqref{eq: TV dist} and the last inequality uses   that $\epsilon_i \leq \epsilon_j$ for all $i \leq j$.

We can now bound the TV distance: 
$$
 \TVD (\calE_c, \calD_c) = \frac{1}{2} \sum_{b=1}^{c-1} |\calE_c(b) - \Pr(\varX = a_b)| + \frac{1}{2} |\calE_c(c) -q_c| \le 2(c-1) \epsilon_c  + \epsilon_c \le k \sqrt{\frac{2\log(2k/\delta)}{n_c}}.
$$
The first inequality uses \eqref{eq:censored-tv} for $b\le c-1$ and \eqref{eq: TV dist} for $c$.  This implies the TV bound in the theorem because   $n_c \geq m_c$.
Finally,  observe that when the output distribution $\calE=\calE_k$ is capped by $a_c$ we obtain the distribution $\calE_c$. 
\end{proof}
Combining \Cref{valid distribution}, \Cref{sto dominate}, and \Cref{claim: TV dist} completes the proof of Theorem~\ref{thm:emp-stoch-dom-censored}. 
\end{proof}
The sampling algorithm and analysis can be extended to binary feedback in a straightforward way. We now describe the changes needed.

\begin{definition}[Binary Compressed Distribution]\label{def:binary feedback} Given a discrete r.v.~$\varX\sim \calD$ with support set $ \{a_1, \ldots, a_k\}$, the binary compressed distribution $\calD_c^B$ is the distribution of r.v.~$\varX_c^B := \1_{\varX \geq a_c}$. We have $\Pr(\varX_c^B = 0) = \Pr(\varX < a_c)$  and  $\Pr(\varX_c^B=1) = \Pr(\varX \ge a_c)$.     
\end{definition}

\begin{theorem}[Sampling for Binary Feedback]
\label{thm:emp-stoch-dom-binary}
Consider any r.v.~$\varX \sim \calD$ with support set $ \{a_1, \ldots, a_k\}$. 
There is an efficient algorithm  that, given $m_c$ i.i.d.~``binary'' samples of the form $\1_{\varX \geq a_c}$ for each $c \in [k]$,  computes a distribution $\calE$  satisfying the following properties with probability at least $1-\delta$.
\begin{itemize}
\item $\calE$ stochastically dominates $\calD$.
\item For $c \in [k]$,  the total-variation distance $\TVD(
\calE_c^B, \calD_c^B) <  \sqrt{\frac{2\log(2k/\delta)}{m_c}}$. 
\end{itemize}  
\end{theorem}
\begin{proof}  For any $c\in [k]$, we denote by  $\widehat{\calD_c}$  the empirical distribution of the $m_c$ samples of $\1_{\varX\geq a_c}$; we use  $\hat{p}_c(c) = \Pr_{\varY\sim \widehat{\calD_c}}[\varY=1]$ to keep notation the same as for the censored case.
We also define the confidence width $\epsilon_c :=  \sqrt{\frac{\log(2k/\delta)}{2m_c}}$. 
As before, let $q_c:=\Pr_{\varX\sim \calD}[\varX \geq a_c]$ for $c\in [k]$ denote the tail cdf of $\calD$.
By Hoeffding’s inequality and union bound,  the following ``good'' event  holds w.p.~at least $1-\delta$.
\begin{equation}
    \label{eq:binary-good-E}
|\hp_c(c) - q_c| \le \epsilon_c , \quad \forall c\in [k].\end{equation}

Given these values  $\hat{p}_c(c)$, the sampling algorithm  to construct $\calE$ remains the same as the censored case (Algorithm~\ref{alg:censored calE construction}). 
Let $\calE_c^B$ denote the intermediate distributions: exactly as before, for each $c\in [k]$, $\calE_c^B$ is a valid distribution and $\calD_c^B \sd \calE_c^B$ (under the good event). Inequality~\eqref{eq: TV dist} also continues to hold. We can now   bound the TV distance between $\calE_c^B$ and $\calD_c^B$ for any $c\in[k]$.
\begin{align*}
 \TVD (\calE_c^B, \calD_c^B) &= |\calE_c^B(1)- \Pr(\varX_c^B = 1 ) | =  |\calE_c^B(1)- \Pr(\varX\geq a_c )|  \\
 &=  |\calE_c^B(1)- q_c | = | \sum_{ j= c}^k \calE_k(j) -q_c| \le 2 \epsilon_c = \sqrt{\frac{2\log(2k/\delta)}{m_c}}.    
\end{align*}
where the   inequality is by \eqref{eq: TV dist}. 
This completes the proof of Theorem~\ref{thm:emp-stoch-dom-binary}. 
\end{proof}

\subsection{Stability Lemma}\label{sec:stable lemma}
We now describe the modifications to the stability lemma under censored feedback. 
\begin{lemma}[Censored Stability Lemma]\label{lem: censored stable}
Consider a threshold-based stochastic problem that is up-monotone and satisfies Assumption~\ref{asmp:censored-f}. Suppose that $\bE=\{\calE_i\}_{i=1}^n$ and $\bD=\{\calD_i\}_{i=1}^n$ are product distributions  
such that $\bD\sd \bE$, where each distribution $\{\calD_i, \calE_i\}_{i=1}^n$ is supported on   $\{a_{1}, \ldots, a_{k}\}$. Further, suppose that $\TV(\calE_{c,i}, \calD_{c,i}) \leq \epsilon_{c,i}$ for each $i \in [n]$ and  $c \in [k]$, where   $\calD_{c,i}$ $(\text{resp. }  \calE_{c,i})$ is   the distribution $\calD_i$ $(\text{resp. } \calE_i)$ truncated at $a_{c}$. 
If $\sigma$ is the policy returned by $\ALG(\bE)$  
and $\sigma^*$ is an optimal  policy under $\bD$, then
\begin{equation}\nonumber
f(\sigma) - \alpha\cdot  f(\sigma^*) = \E_{\bx \sim \bD}\left[f(\sigma, \bx) - \alpha\cdot f(\sigma^*, \bx)\right] \leq    f_{\max} \sum_{i=1}^n \sum_{c=1}^k {Q}_{c,i}(\sigma) \cdot \epsilon_{c,i} 
    \,,
\end{equation} 
where ${Q}_{c,i}(\sigma)$ is the probability $(\text{under } \bD)$ that item $i$ is probed by policy $\sigma$ with threshold $a_{c}$.
\end{lemma}

\begin{proof}

The  analysis is almost the same as for the  semi-bandit setting  (Lemma~\ref{lem:stable}). The main modification is in  labeling the nodes of the   decision tree. 
Each node $v$ in the decision tree $\sigma$ now corresponds to a pair $(c,i)$, indicating that item $i$ is probed with threshold $a_c$. For any item $i$, in any policy execution  we will encounter at most one node labeled by item $i$ and any threshold. So, we can   view policy $\sigma$ as having an item $\mathrm{X}_v$ with independent distribution $\mathcal{D}_v \stackrel{d}{=} \calD_{c,i}$ at each node $v \in \sigma$ labeled by $(c,i)$; here ``$\stackrel{d}{=}$'' means equal in distribution. Similarly, we use $\mathcal{E}_v \stackrel{d}{=} \calE_{c,i}$ for $v$ labeled by $(c,i)$.
Exactly as in the proof of Lemma~\ref{lem:stable}, we index nodes in $\sigma$ according to   the  ancestor-descendant partial order and 
use hybrid product distributions $\mathbf{H}^v$. Again, we will show:  
\begin{equation}\label{eq:censored-stab-lemma-key}
\left| f\left(\sigma | \mathbf{H}^v\right) - f\left(\sigma | \mathbf{H}^{v-1}\right) \right| \leq f_{\max} \cdot Q_v(\sigma) \cdot \epsilon_v,
\end{equation}
where $Q_v(\sigma)$ is the probability that policy $\sigma$ reaches node $v$ under distribution $\mathbf{D}$ and $\epsilon_v = \TV(\calD_v, \calE_v) =\TV(\calE_{c,i}, \calD_{c,i}) \leq \epsilon_{c,i} $. We now complete the proof of the lemma using \eqref{eq:censored-stab-lemma-key}. We have
\begin{align*}
\left| f(\sigma | \mathbf{D}) - f(\sigma | \mathbf{E}) \right|
&\leq \sum_{v=1}^N \left| f\left(\sigma | \mathbf{H}^v\right) - f\left(\sigma | \mathbf{H}^{v-1}\right) \right| \\
&\leq f_{\max} \cdot \sum_{v=1}^N Q_v(\sigma) \cdot \epsilon_v \le \sum_{i=1}^n \sum_{c=1}^k \epsilon_{c,i} \sum_{v:\text{ labeled by } (c,i)} Q_v(\sigma)=\sum_{i=1}^n \sum_{c=1}^k \epsilon_{c,i}\cdot  Q_{c,i}(\sigma),
\end{align*}
where the final equality uses the fact that $Q_{c,i}(\sigma)$   equals the total probability of reaching some node $v$ labeled by $(c,i)$. This suffices to prove the lemma, exactly as in Lemma~\ref{lem:stable}.

The  proof of  \eqref{eq:censored-stab-lemma-key} is almost identical to that of \eqref{eq:stab-lemma-key} in Lemma~\ref{lem:stable}. Letting ${\cal R}_v$ be the event that $\sigma$  reaches node $v$, it suffices to show:
\begin{equation*}
\left| \mathbb{E}_{\mathbf{x} \sim \mathbf{H}^v} \left[ f(\sigma, \mathbf{x}) \mid \mathcal{R}_v \right] - \mathbb{E}_{\mathbf{x} \sim \mathbf{H}^{v-1}} \left[ f(\sigma, \mathbf{x}) \mid \mathcal{R}_v \right] \right| \leq f_{\max} \cdot \TV(\calE_{v}, \calD_{v}),
\end{equation*}
The proof is identical to that of \eqref{eq:stab-lemma-conditional-2}, where we use the assumption that the function value at any leaf-node of $\sigma$ is deterministic (even though we don't know the exact realizations of the probed r.v.s).

\end{proof}

By an identical proof, using distributions $\calD_v \stackrel{d}{=} \calD_{c,i}^B$ and  $\calE_v \stackrel{d}{=} \calE_{c,i}^B$ at each node $v=(c,i)$, we obtain the following under binary feedback. 
\begin{lemma}[Binary Stability Lemma]\label{lem: binary stable}
Consider a threshold-based stochastic problem that is up-monotone and satisfies Assumption~\ref{asmp:censored-f}. Suppose that $\bE=\{\calE_i\}_{i=1}^n$ and $\bD=\{\calD_i\}_{i=1}^n$ are product distributions  
such that $\bD\sd \bE$, where each distribution $\{\calD_i, \calE_i\}_{i=1}^n$ is supported on   $\{a_{1}, \ldots, a_{k}\}$. 
Further, suppose that $\TV(\calE_{c,i}^B, \calD_{c,i}^B) \leq \epsilon_{c,i}$ for each $i \in [n]$ and  $c \in [k]$, where   $\calD_{c,i}^B$ $(\text{resp. } \calE_{c,i}^B)$ is the distribution $\calD_i$ $(\text{resp. } \calE_i)$ compressed at $a_{c}$.  
If $\sigma$ is the policy returned by $\ALG(\bE)$  
and $\sigma^*$ is an optimal  policy under $\bD$, then
\begin{equation}\nonumber
f(\sigma) - \alpha\cdot  f(\sigma^*) = \E_{\bx \sim \bD}\left[f(\sigma, \bx) - \alpha\cdot f(\sigma^*, \bx)\right] \leq    f_{\max} \sum_{i=1}^n \sum_{c=1}^k {Q}_{c,i}(\sigma) \cdot \epsilon_{c,i} 
    \,,
\end{equation} 
where ${Q}_{c,i}(\sigma)$ is the probability $(\text{under } \bD)$ that item $i$ is probed by policy $\sigma$ with threshold $a_{c}$. 
\end{lemma}

\subsection{Overall Regret}\label{sec: main regret}
We now prove the main theorem for the censored feedback setting (Theorem~\ref{thm:main for censored}). The overall structure of the proof mirrors that of the semi-bandit setting: we combine a sampling theorem with the stability lemma to bound the regret. 

Recall that, $\calD_i$ is the true distribution of  item $i$ and $\calE^t_i$ is the  dominating distribution generated by Algorithm~\ref{alg:censored calE construction} in time step $t$. For any $i\in [n]$, $c\in [k]$, let $N_{c,i}^t$ be the number times the algorithm  sampled   $\min\{\varX_i, a_{c}\}$ before time  $t$.

\begin{lemma}\label{lem :union bound for censored}
With probability at least $1-\frac{1}{nT}$, we have $\calD_i \sd \calE^t_i $ and the total variation distance $\TV(\calE_{c,i}^t, \calD_{c,i}) \leq k \sqrt{\frac{6\log(knT)}{N_{c,i}^t}}$ for $c \in [k]$, $i \in [n]$, and $t \in [T]$.
\end{lemma}

\begin{proof}
    For any $c \in [k]$, $i \in [n]$, and $t \in [T]$, let $B_{c,i}^t$ denote the event that the stated condition fails to hold for the triple $(c, i, t)$. By Theorem~\ref{thm:emp-stoch-dom-censored} with $\delta = \frac{2}{k^2 n^3 T^3}$, we have:

\[
\Pr\left(B_{c,i}^t\right) \leq \sum_{m=1}^T \Pr\left(B_{c,i}^t \wedge N_{c,i}^t = m\right) \leq \delta T.
\]

Applying another union bound over all values of $c$, $i$, and $t$, we obtain:

\[
\Pr\left( \vee_{c=1}^k \vee_{i=1}^n \vee_{t=1}^T B_{c,i}^t \right)
\leq \sum_{c=1}^k \sum_{i=1}^n \sum_{t=1}^T \Pr\left(B_{c,i}^t\right)
\leq k n T^2 \delta \leq \frac{1}{nT},
\]

where the last inequality uses the assumption that $n \geq 2$.  
\end{proof}

We now define a good event $G$ that corresponds to the condition in Lemma~\ref{lem :union bound for censored} holding true  for all $c \in[k]$, $i\in[n]$ and $t \in[T]$. Next, we complete the proof assuming that $G$ holds.
\paragraph{Bounding the regret as a sum over time $t$.} To bound the overall regret,  it suffices to bound the expected regret $R^t=f(\sigma^t) - \alpha\cdot f(\sigma^*)$ at each time $t\in [T]$.  
For each time $t\in [T]$, using the good event, we can apply Lemma~\ref{lem: censored stable} with distributions $\bE^t$, $\bD$,  and parameters $\epsilon_{c,i}^t   = k\cdot \sqrt{\frac{6\log(knT)}{N_{c,i}^t}}$ to obtain 
\begin{equation}
\label{eq:censored-regret-t-stable}
R^t   \le  f_{\max} \sum_{i=1}^n \sum_{c=1}^k {Q}_{c,i}(\sigma^t) \cdot \epsilon_{c,i}^t .
\end{equation}

So, the overall regret is
$$\alpha\text{-}R(T) \le  f_{\max} \sum_{t=1}^T \E_{\h^{t-1}}\left[  \sum_{i=1}^n \sum_{c=1}^k {Q}_{c,i}(\sigma^t) \cdot \epsilon_{c,i}^t 
 \right] =  k f_{\max} \sqrt{6 \log(knT)} \cdot \sum_{t=1}^T \E_{\h^{t-1}}\left[  \sum_{i=1}^n \sum_{c=1}^k \frac{{Q}_{c,i}(\sigma^t) }{\sqrt{N^t_{c,i}}}  \right] . $$
      Recall that  $\h^{t-1}=(\bx^1,\cdots \bx^{t-1})$ is the history until time $t$. 
It now suffices to show
\begin{equation}\label{eq:censored-path-regret-sum}
\sum_{t=1}^T \E_{\h^{t-1}}\left[  \sum_{i=1}^n \sum_{c=1}^k \frac{{Q}_{c,i}(\sigma^t) }{\sqrt{N^t_{c,i}}}  \right] \le 2n\sqrt{kT}.
\end{equation}

\paragraph{Proving \eqref{eq:path-regret-sum} as a sum over decision paths.} Similar to the semi-bandit case, we define:
\begin{equation*}
Z_i^c(\h^T) := \sum_{t=1}^T \frac{\1\left[\text{ $i$ probed by policy }\sigma^t(\h^{t-1})\text{ with  threshold $a_{c}$ }\right]}{\sqrt{N^t_{c,i}(\h^{t-1})}}, \quad \forall i\in [n] \, , c \in [k]. 
\end{equation*}
Above, $\1$ is the indicator function.
By linearity of expectation,  we have
{\small \begin{align}
\E_{\h^T} \left[ Z_i^c(\h^T) \right] &= \sum_{t=1}^T \E_{\h^T} \left[ \frac{\1\left[i \text{  probed by  }\sigma^t(\h^{t-1}) \text{ with } a_{c} \right]}{\sqrt{N^t_{c,i}(\h^{t-1})}}\right] \,\,=\,\,\sum_{t=1}^T \E_{\h^{t-1}, \bx^t} \left[ \frac{\1\left[i \text{  probed by  }\sigma^t(\h^{t-1}) \text{ with } a_{c} \right]}{\sqrt{N^t_{c,i}(\h^{t-1})}}\right] \notag \\
& =\sum_{t=1}^T \E_{\h^{t-1}} \left[ \frac{1}{\sqrt{N^t_{c,i}}} \cdot \Pr_{\bx^t} [i \text{  probed by  }\sigma^t \text{ with } a_{c}]\right] \,\, =\,\, \sum_{t=1}^T \E_{\h^{t-1}} \left[ \frac{Q_{c,i}(\sigma^t)}{\sqrt{N^t_{c,i}}} \right],\label{eq:censored-Zi}
\end{align}}
where the second equality uses the fact that event \{$i$   probed by  $\sigma^t$ with $c$\}  only depends on $\h^t = (\h^{t-1}, \bx^t)$, the 
third equality uses the fact that $N^t_{c,i}$ only depends on $\h^{t-1}$ and that $\bx^t$ is independent of $\h^{t-1}$, and the last equality is by the definition of $Q_{c,i}(\sigma^t)$ and the fact that $\bx^t \sim \bD$. 

Using \eqref{eq:censored-Zi} and adding over $i\in [n]$ and $c\in [k]$, we get
$$\sum_{t=1}^T \sum_{c=1}^k \sum_{i=1}^n  \E_{\h^{t-1}}\left[   \frac{{Q}_{c,i}(\sigma^t) }{\sqrt{N^t_i}}  \right] = \sum_{i=1}^n \sum_{c=1}^k  \E_{\h^T} \left[ Z_i^c(\h^T) \right].$$

Therefore, proving \eqref{eq:censored-path-regret-sum} is equivalent to:
\begin{equation}
\sum_{i=1}^n \sum_{c=1}^k  \E_{\h^T}[Z_i^c(\h^T)] \le 2n\sqrt{kT}\label{eq:censored-path-regret-sum-2}.
\end{equation}

Now, for any $i \in [n]$ and $c \in[k]$, we obtain:
$$Z_i^c(\h^T) = \sum_{t=1}^T \frac{\1\left[i\text{ probed by }\sigma^t(\h^{t-1}) \text{ with } a_c \right]}{\sqrt{N^t_{c,i}(\h^{t-1})}} \leq \sum_{t=1}^{N^T_{c,i}} \frac{1}{\sqrt{t}} \leq 2 \sqrt{N^T_{c,i}}.$$
The first inequality uses the fact that if item $i$ gets probed with threshold $a_{c}$ at time $t$ then $N_{c,i}^t =N_{c,i}^{t-1} +1 $. Using Jensen's inequality and the fact that the total number of probes for one item $i$ is at most $T$, we have:
$$
\sum_{c=1}^k  \sqrt{N^T_{c,i}} \leq  k \sqrt{\frac{\sum_{c=1}^k N^T_{c,i}}{k}} \le \sqrt{kT}.
$$
This completes the proof of \eqref{eq:censored-path-regret-sum-2} and hence \eqref{eq:censored-path-regret-sum}.

We note that our good event $G$ holds with probability at least $1 - \frac{1}{nT}$. As  in the semi-bandit setting, we can convert the high-probability regret bound into an expected regret bound. This concludes the proof of Theorem~\ref{thm:main for censored}.

\medskip
The proof of Theorem~\ref{thm:main for binary} is identical: we just need to use the appropriate sampling result (Theorem~\ref{thm:emp-stoch-dom-binary}) and stability lemma (Lemma~\ref{lem: binary stable}).

\newcommand{\TO}{\tilde{O}}

\section{Applications}\label{sec:apps}
In this section, we show that several stochastic optimization problems are covered by our framework, resulting in $\sqrt{T\log T}$ regret online learning algorithm for all these problems. In each of these problems, the distributions of the random parameters are unknown to the online algorithm;   all other (deterministic) parameters are known.
We summarize the overall guarantees in \Cref{table:apps}, highlighting the dependence on $n$, $T$, the support size $k$ of the underlying distributions (where applicable), and the distributional assumptions needed. 
See the corresponding subsection for more details and exact bounds.

\begin{table}[h!]
    \centering
    \begin{tabular}{|c|c|c|c|}
        \hline
        \multirow{2}{*}{Application} & Approx. & Our $\alpha$-Regret   & Distributional   \\
         & Factor ($\alpha$) & Bound &  Assumptions  \\
         \hline 
         Series Testing & $1$ &$\TO(n\sqrt{T})$  & Bernoulli \\
         \hline
         Prophet Inequality & $1$ & \multirow{3}{*}{$\TO(n\sqrt{T})$}  & \multirow{3}{*}{continuous}  \\
         Multi-Item Prophet Inequality & $1-\frac{1}{\sqrt{\kappa + 3}}$ &   & \\
         Matroid Prophet Inequality & $1/2$ &  & \\
         \hline
         Pandora's Box & $1$ & $\TO(n\sqrt{T})$ & continuous \\
         \hline
         \multirow{1}{*}{Stochastic Knapsack} & $1/2-\epsilon$ & $\TO(n\sqrt{kT})$ & discrete \\

         Stochastic Orienteering & $\Omega(1/\log\log B)$  & $\TO(n\sqrt{kT})$  & discrete \\
         \hline
         Unweighted Stochastic Matching & $1/2$ & \multirow{2}{*}{$\TO(n\sqrt{T})$} & \multirow{2}{*}{Bernoulli} \\
         Weighted Stochastic Matching & $0.382$ & & \\
         \hline
         Stochastic Covering Knapsack & $3$ & $\TO(n \sqrt{kT})$ & discrete \\
         Stochastic $Q$-TSP & $O(1)$ & $\TO(n \sqrt{kT})$ & discrete \\
         \hline
         Stochastic Submodular Maximization & $1 - \frac{1}{e}$ & \multirow{2}{*}{$\TO(n\sqrt{T})$}  & \multirow{2}{*}{Bernoulli} \\
         Stochastic Submodular Cover & $1+\ln(Q)$ & & \\
         \hline
         Single Resource Revenue Management & $1$ & $\TO(nk^2\sqrt{kT})$ & discrete \\
         \hline
          FSPM\textsuperscript{\textdagger}: Cardinality Constrained & $1$ & \multirow{3}{*}{$\TO(n\sqrt{kT})$} & \multirow{3}{*}{discrete} \\
         ASPM$^*$: Cardinality Constrained & $1-\epsilon$ & & \\
         ASPM: Matroid Constrained  & $1-\frac{1}{e}$ & &  \\
         \hline
    \end{tabular}
    \caption{Summary of Regret Bounds; \textsuperscript{\textdagger} Fixed-Order Sequential Posted-Pricing Mechanisms, $^*$  Adaptive Sequential Posted-Pricing Mechanisms}
    \label{table:apps}
\end{table}

\subsection{Series Testing} \label{subsec:series}
We start with a simple  problem:   there are 
$n$ components, where each component $i$ is ``working'' independently with some known probability $p_i$. To determine if any component $i\in [n]$ is working, we need to perform a test, which costs $c_i$. All $n$ components must be working for the system to be functional. 
The goal is to test components sequentially to determine whether/not the system is functional, at the minimum expected cost. Note that testing stops once a failed component is found: so we do not observe all the outcomes and only have semi-bandit feedback.  
It is easy to show that this problem is up-monotone: this is a (simple) special case of Lemma~\ref{lem:stoch-min-ks-monotone} below. It is well-known that the natural greedy policy achieves the optimal cost~\citep{Butterworth72}.
So, using  Theorem~\ref{thm:main} with $k=2$ (all r.v.s are binary),  we obtain a polynomial time
online learning algorithm for series testing having  $1$-regret $O(nC \sqrt{T\log T})$ where $C = \sum_{i=1}^n c_i$ is the total cost.

\subsection{Prophet Inequality}\label{subsec:PI}

The Prophet Inequality~\citep{krengel1977SemiamartsAF, samuel1984comparison} 
is a fundamental problem in optimal stopping, which has also been used extensively in algorithmic game theory.    
The input consists of $n$ rewards which arrive in a given fixed sequence, say $\varX_1, \ldots, \varX_n$. 
Each reward $\varX_i$ is drawn independently from a known distribution $\calD_i$. 
We are interested in online policies, that upon observing each reward,  selects or discards it  immediately. The policy can select at most one reward, and  it terminates right after making a selection   (without observing any future reward). Note that we have  semi-bandit feedback because only some of the rewards are observed in any policy execution. The goal is to maximize the expected selected reward. The classical results obtain a $\frac12$-approximate policy relative to the ``clairvoyant'' optimal value $\E [\max_{i=1}^n \varX_i]$; there are also instances where no policy can achieve a better approximation to  this benchmark.  Here, we will compare to a more realistic non-clairvoyant benchmark: the optimal policy which is also constrained to make selection decisions in the given order (same as an algorithm).   
It is known that there is an optimal threshold-based  policy: given thresholds $\{\tau_i\}_{i=1}^n$, the policy 
selects $i$ if and only if $\varX_i > \tau_i$. This is a $2$-threshold policy, as defined in \Cref{sec:continuous}.

Moreover, the prophet inequality problem is {strongly monotone} (see Lemma~28 in \cite{GuoHT+21}),  which implies that it is down-monotone. Finally, for any policy $\sigma$ and leaf $\ell$ 
(see \eqref{eq:representcts} for details), 
the corresponding reward function is  $f_{\sigma,\ell}(\bx) = x_\ell,$
where $x_\ell$ denotes the value of the random variable at leaf $\ell$. 
It is straightforward to verify that $f_{\sigma,\ell}$ is coordinate-wise monotone 
(see Remark~\ref{rk:coordinate monotone}).
Using Theorem~\ref{thm:main-cont}, we get a polynomial time  online learning algorithm  for the the prophet inequality problem with unknown distributions that has $1$-regret $O(n U \sqrt{T\log T})$ where all r.v.s are $[0,U]$ bounded. This improves upon the $O(n^{3} U  \sqrt{T} \log T)$ bound from \cite{GatmiryKS+22}, although the previous result holds in the stronger bandit-feedback model. We note that there are other learning-based results~\citep{AzarKW14,RubinsteinWW20}  based on limited number of samples, that imply  $\frac12$-regret algorithms by comparing to the clairvoyant benchmark. Note that our  guarantee and that of \cite{GatmiryKS+22} are  stronger because they do  not incur any  multiplicative approximation factor.

\def\F{{\cal F}}
\paragraph{Combinatorial Prophet Inequalities.} The basic prophet inequality concept has also been extended to settings where there is some combinatorial feasibility constraint on the selected items. Here, we have $n$ items with reward $\varX_i\sim \D_i$ for each $i\in [n]$. In addition, there is a {\em downward-closed} set family $\F\sse 2^{[n]}$ 
that represents a generic feasibility constraint;  the selected subset  must be in $\F$.
The $n$ items arrive in  a given fixed sequence. When item $i$ arrives, if $S\cup\{i\}\not\in \F$ where  $S$ is  the set of previously selected items then item $i$ is not considered for selection (and we do not observe $\varX_i$); otherwise, the policy    observes the value of  $\varX_i$   and selects/discards   item $i$. Again, note that we only have semi-bandit feedback because only a subset of items is observed by the policy. 

The performance of an online policy is compared to the clairvoyant optimum \( \OPT^{*}=\E[\max_{S\in \F} \sum_{i\in S}\)\(\varX_i ]\). 
Many specific problems can be modeled in this manner:
\begin{itemize}
    \item When $\F=\{S: |S|\le 1\}$, we get the classic prophet inequality, which has  a  $\frac12$ approximation.      
    \item When $\F=\{S: |S|\le \kappa\}$, there is   a  $1-\frac1{\sqrt{\kappa+3}}$ approximation \citep{Alaei14,JiangMZ22}.      
\item When $\F$ corresponds to independent sets in a matroid, we obtain the matroid prophet inequality where again a $\frac12$ approximation is known~\citep{KleinbergW19}.
\item When $\F$ is the intersection of $p$ matroids, a $\frac1{e(p+1)}$ approximation is known~\citep{FeldmanSZ21}.
\item When $\F$ is given by matchings in a graph, a $0.337$ approximation is known~\citep{EzraFGT22}.
\end{itemize}
For the monotone property, note that we are comparing to the clairvoyant optimum $\OPT^*$ (not the optimal policy).  So, it suffices to prove down-monotonicity   for $\OPT^*$, which is immediate by  stochastic dominance. The online policies in all these results are $2$-threshold policies. For any fixed policy $\sigma$ and leaf $\ell$ 
(see \eqref{eq:representcts}),  we have  
\begin{equation}\label{eq: combinatorial PI}
    f_{\sigma,\ell}(\bx) = \sum_{i\in S_\ell}  x_i ,
\end{equation}
where  $S_{\ell}$ denotes the set of r.v.s selected by the policy $\sigma$ along the root-$\ell$ path. It is clear that this function is coordinate-wise monotone. Therefore, using Theorem~\ref{thm:main-cont}, we obtain $\alpha$-regret $O(n U \sqrt{T\log T})$ for all the above  combinatorial prophet inequalities where $\alpha$ is the best (offline) approximation ratio; we assume that the r.v.s are $[0,U]$ bounded. 

Some approximate regret bounds can also be obtained from previous work on single-sample prophet inequalities~\citep{AzarKW14,RubinsteinWW20,CaramanisDFFLLP22}. 
While the regret bounds via this approach are  better (independent of $T$), they need to compare to approximation ratios that are often  worse than the usual (known distribution) setting. In particular, \cite{FLTWW024} obtained an approximation ratio of $(\frac14-\epsilon)$ for general matroids using $O_\epsilon(n\log^4n)$ total samples, which implies a learning algorithm having $\left(\frac14-\epsilon\right)$-regret  of $O_\epsilon(n\log^4n)$. In contrast, 
our results  imply $\frac12$-regret of $\sqrt{T\log T }$, exactly matching the best-possible approximation ratio from \cite{KleinbergW19}.

\subsection{Pandora's Box} \label{subsec:pandora}In this  problem~\citep{weitzman1979},   we are given distributions $\calD_1, \ldots, \calD_n$ 
such that r.v. $\varX_i \sim \calD_i$.
The realization 
of $\varX_i$ can be ascertained by paying
a known inspection cost $c_i$.
Now, the goal is to find a policy to (adaptively) inspect a subset $S \sse [n]$ of the random variables to maximize $\E \left[\max_{i \in S} \varX_i - \sum_{i \in S} c_i\right]$. Note that any policy only inspects a subset of items  and we only receive feedback from these items, which corresponds to semi-bandit feedback.  
\cite{weitzman1979} obtained an optimal policy based on the ``reservation value'' for each item and 
probing items according to this value until the reward for an  item exceeds all remaining reservation values.   
The reservation value $r_i$ for an item $i$ is such that $\E[(\varX_i - r_i)_+] = c_i$.
We note that this optimal policy is  $2$-threshold.

For the online setting, we assume that the r.v.s $\varX_i$ are $[0,U]$ bounded; the distributions may be discrete or continuous.   It was shown in \cite{GuoHT+21} (see Lemma~31 in that paper) that the Pandora's box problem is {strongly monotone}, which implies that it is down-monotone. Similar to Prophet Inequality, for any policy $\sigma$ and leaf $\ell$ , 
the corresponding reward function is  $f_{\sigma,\ell}(\bx) = \max_{i\in S_\ell} x_i -\sum_{i\in S_\ell}  c _{i},$
where $S_\ell$ is the  set of inspected r.v.s at leaf $\ell$. Hence $f_{\sigma,\ell}$ is coordinate-wise monotone. Then combined with Theorem~\ref{thm:main-cont}, we get a  polynomial time  online learning algorithm  for  the Pandora's box problem with unknown distributions that has $1$-regret $O(n (C+U) \sqrt{T\log T})$ where $C = \sum_{i \in [n]} c_i$ is the total cost. Our regret bound improves upon the $O(n^{4.5} (C+U) \sqrt{T} \log T)$ bound from \cite{GatmiryKS+22}, although the previous result holds in the stronger bandit-feedback model.

\paragraph{Variants of Pandora's Box.} Our framework also applies to more general versions of Pandora's problem that have been studied in prior work. We mention two such variants here.

In Pandora's box with {\em  order constraints}, in addition to the $n$ r.v.s, there are  precedence constraints that enforce that any  r.v. $\varX_i$ may be selected only after all its predecessors have been selected. 
Although the original policy of \cite{weitzman1979} does not apply to this extension, \cite{BoodaghiansFLL23} obtained a different optimal policy when the precedence constraints  form a directed tree: this policy is also a 2-threshold policy. Hence,  Theorem~\ref{thm:main-cont} implies an online learning algorithm  for  Pandora's box with tree order constraints having  $1$-regret $O(n (C+U) \sqrt{T\log T})$, where $C = \sum_{i \in [n]} c_i$ is the total cost and $U$ is the bound on the r.v.s. 

\def\cM{\cal M}

In the matroid Pandora's box problem~\citep{KleinbergWW16,singla2018price}, in addition to the $n$ r.v.s $\{\varX_i\}_{i=1}^n$, there is a matroid $\cM$ with ground set $[n]$. The goal is to inspect a subset $S\subseteq [n]$ of r.v.s and  select a subset $B\sse S$ such that $B$ is independent in matroid $\cM$. The objective is to maximize $\E\left[\sum_{j\in B} \varX_j - \sum_{i\in S} c_i\right]$, the difference between the total selected value  and  inspection cost. 
We recover the original Pandora's box problem when $\cM$ is a rank-1 uniform matroid. There is an optimal 2-threshold policy known for this variant~\citep{singla2018price}.  
This policy is also non-adaptive. By a similar argument as above, we can apply Theorem~\ref{thm:main-cont}. 
Thus, we obtain an online learning algorithm  for  matroid Pandora's box having  $1$-regret $O(n (C+U) \sqrt{T\log T})$, where $C = \sum_{i \in [n]} c_i$ is the total cost and $U$ is the bound on the r.v.s.

\subsection{Stochastic Knapsack}\label{subsec:stoch-knap} 
This is a classic problem in stochastic optimization, which was introduced in \cite{DGV08} and has been studied extensively since~\citep{BGK11,GuptaKMR11,Ma18}. 
There are $n$ items with deterministic rewards $\{r_i\}_{i=1}^n$
and random costs $\{C_i \sim \D_i \}_{i=1}^n$. The realized cost $C_i$ of  item $i$ is only known after selecting it. Given a knapsack budget $B$, a policy   selects items sequentially until the total cost exceeds $B$. The objective is to maximize the expected total  reward from items that fit in the knapsack. If there  is an item that overflows the budget then it does not contribute to the objective. Note that only some subset of items is selected by a policy, and we only observe those costs as  feedback.
In \Cref{lem:stoch-knap-monotone} below, we show that this problem is up-monotone (assuming that the costs are discrete r.v.s).

There is an
{\em adaptive} algorithm for stochastic knapsack, which is  a  $(\frac1{2}-\epsilon)$ approximation  (for  any $\epsilon>0$)~\citep{Ma18}.  
Theorem~\ref{thm:main} then implies an
{ online learning algorithm with   $(\frac{1}{2}-\epsilon)$-regret of $O(n R \sqrt{kT\log T })$ where $k$ is the maximum support size and $R = \sum_{i=1}^n r_i$ is the total reward.} Our results also apply to the more general {\em stochastic orienteering} problem, where items are located at vertices in a metric space and 
we want to find a {\em path} with budget $B$ on the total distance (from the edges in the path) plus  cost (of the visited items). There is  a non-adaptive $\Omega(\frac1{\log\log B})$ approximation algorithm for this problem~\citep{GuptaKNR15}. Here,  we obtain $\Omega(\frac1{\log\log B})$-regret of $O(nR \sqrt{kT\log  T })$.

\begin{lemma}\label{lem:stoch-knap-monotone}
The stochastic  (maximum) knapsack problem is up-monotone.
\end{lemma}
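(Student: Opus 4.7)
}
The plan is to prove up-monotonicity by a coupling argument: given that $\bld{E}$ stochastically dominates $\bld{D}$ on the cost distributions, I will show that any adaptive policy for the instance under $\bld{E}$ can be \emph{simulated} by a policy under $\bld{D}$ that collects at least as much expected reward. The intuition is simple: smaller realized costs only allow more items to fit within the budget $B$, so the maximum expected reward can only go up as we move from $\bld{E}$ to $\bld{D}$, which is precisely $\OPT_\calI(\bld{E})\le \OPT_\calI(\bld{D})$.

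First, I would build a monotone coupling. For each item $i$, since $\calE_i$ stochastically dominates $\calD_i$ and both are discrete, there is a joint distribution on a pair $(C_i^D, C_i^E)$ with marginals $\calD_i$ and $\calE_i$ and $C_i^D\le C_i^E$ almost surely (the standard inverse-CDF construction works). By independence across items, I take these couplings independently, giving a single joint distribution on $\{(C_i^D,C_i^E)\}_{i=1}^n$. Now let $\sigma^*_E$ be an optimal adaptive policy for $(\calI,\bld{E})$, with expected reward $\OPT_\calI(\bld{E})$. I define a (randomized) policy $\sigma_D$ for $(\calI,\bld{D})$ as follows: whenever $\sigma_D$ probes item $i$ and observes the realized $D$-cost $c_i^D$, it internally samples $c_i^E$ from the conditional distribution of $C_i^E$ given $C_i^D=c_i^D$ under the coupling (so $c_i^E\ge c_i^D$), and then it chooses its next action exactly as $\sigma^*_E$ would choose, given the history of simulated $E$-costs seen so far.

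By construction, the sequence of items probed by $\sigma_D$ on the coupled input has the same joint distribution as the sequence probed by $\sigma^*_E$ under $\bld{E}$, and on every sample path the cumulative $D$-cost after any prefix of probes is at most the corresponding cumulative $E$-cost. Hence every item that ``fits'' (i.e.\ is probed before the $E$-budget is exceeded) under $\sigma^*_E$ also fits under $\sigma_D$, and $\sigma_D$ may even pocket the would-be overflow item. So pointwise on the coupling the reward of $\sigma_D$ dominates that of $\sigma^*_E$; taking expectations and noting that $\OPT_\calI(\bld{D})$ is attained over all (possibly randomized) feasible policies gives $\OPT_\calI(\bld{D})\ge \E[\text{reward of }\sigma_D]\ge \E[\text{reward of }\sigma^*_E]=\OPT_\calI(\bld{E})$.

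The main subtlety, and the only place some care is needed, is checking that $\sigma_D$ is a legitimate policy for $(\calI,\bld{D})$: its decisions may depend only on past $D$-observations and its own internal randomness, not on the latent $E$-realizations. This is fine because the conditional distribution of $C_i^E$ given $c_i^D$ is determined by the pre-fixed coupling and the observation $c_i^D$ alone, so the internal draw of $c_i^E$ is a valid randomization. The discreteness assumption in the lemma statement is exactly what lets the monotone coupling and these conditional distributions be constructed cleanly.
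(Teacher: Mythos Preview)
Your proof is correct and follows essentially the same approach as the paper: construct a monotone coupling between each pair $(\calD_i,\calE_i)$ and use it to simulate the optimal $\bld{E}$-policy under $\bld{D}$, wasting the slack in the budget. The only cosmetic difference is that the paper builds the coupling via a $b$-transportation argument (Rado's theorem) rather than the inverse-CDF construction you invoke, and it states the simulated policy's reward as an equality rather than the (equally sufficient) inequality you obtain.
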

The proof requires the following theorem (also referred to as \emph{Strassen's monotone coupling theorem}) on coupling random variables under stochastic dominance~\citep{strassen1965existence,liggett1985interacting}.

\begin{theorem}[Coupling under Stochastic Dominance]
\label{thm:coupling-sd}
Consider r.v.s $\varX$ and $\var{Y}$ with finite support $[k] = \{1, \ldots, k\}$, and distributed according to $\calD$ and $\calE$ respectively.
Furthermore, suppose that $\calD \sd \calE$. 
Then, there exist { non-negative} values $\left\{ f_{\ell_1, \ell_2} : k\ge  \ell_1 \geq \ell_2\ge 1\right\}$ such that:
\(
\sum_{\ell_2: \ell_2 \leq \ell_1}  f_{\ell_1, \ell_2} = \pr_{\calE}(\var{Y} = \ell_1), \, \forall \ell_1\in[k] \mbox{ and }  \sum_{\ell_1: \ell_1 \geq \ell_2}  f_{\ell_1, \ell_2} = \pr_{\calD}(\var{X} = \ell_2), \, \forall \ell_2\in[k].\)
\end{theorem}

\begin{proof}{\it  of \Cref{lem:stoch-knap-monotone}.}
Consider an arbitrary instance $\calI$ and product probability distributions $\bld{D}=\{\calD_i\}_{i=1}^n$ and $\bld{E}=\{\calE_i\}_{i=1}^n$  where $\bld{D} \sd   \bld{E}$. For each item $i\in [n]$, 
let $f_{\ell_1, \ell_2}^i$ denote coupled values for distributions $\calD_i\sd \calE_i$, constructed as per \Cref{thm:coupling-sd}.
Let $\sigma_{\bld{E}}$ denote 
the optimal policy for $\calI$ under distribution $\bld{E}$, and let $\OPT_\calI(\bld{E})$ denote its reward.
Recall that a policy is given by a decision tree, where each node is labeled by an item
to probe next, and the branches out of a node correspond to the random realization of the probed
item.
We construct a policy $\widehat{\sigma}$ that works under distribution $\bld{D}$, by closely following  $\sigma_{\bld{E}}$. 

When policy $\sigma_{\bld{E}}$ probes some item $i$, policy $\widehat{\sigma}$ does the same. Now, suppose that $\widehat{\sigma}$  observes $C_i = \ell_2 \in [k]$. Then, for each $\ell_1\ge \ell_2$, with probability  $\frac{f_{\ell_1,\ell_2}^i}{\pr_{\calD_i}(\var{X} = \ell_2)}$, policy $\widehat{\sigma}$ continues to follow policy $\sigma_{\bld{E}}$'s branch under the outcome $\ell_1$ (i.e., it wastes $\ell_1 - \ell_2$ space in the knapsack).
{ Note that the total probability $\sum_{\ell_1\ge \ell_2} \frac{f_{\ell_1,\ell_2}^i}{\pr_{\calD_i}(\var{X} = \ell_2)}=1$ by definition of the coupled values in Theorem~\ref{thm:coupling-sd}.}
Moreover,   the probability of reaching any node $s$ in policy $\sigma_{\bld{E}}$ (under $\bE$) equals the probability of reaching the same node $s$ in policy 
 $\widehat{\sigma}$ (under $\bD$). 
Thus, the expected rewards of the two policies are the same, 
$\E_{\bld{X} \sim \bld{D}}[f(\widehat{\sigma}, \bld{X})] = \E_{\bld{X} \sim \bld{E}}[f(\sigma_{\bld{E}}, \bld{X})]$.
We note that $\widehat{\sigma}$ is a randomized policy; however, as noted in \S\ref{sec:prelim}, there is always an optimal deterministic policy for these stochastic problems.  
So, we obtain $\OPT_\calI(\bld{D})\ge  \OPT_\calI(\bld{E})$ as needed. \hfill 
\end{proof}

{

}

\subsection{Stochastic Matching and Probing} In the {\em stochastic matching} problem \citep{ChenIKMR09,BGLMNR12}, there is an undirected graph $G=(V,E)$ with edge-weights $\{w_e\}_{e\in E}$,   edge-probabilities  $\{p_e\}_{e\in E}$, and vertex bounds $\{t_v\}_{v\in V}$. Each edge $e$ is {\em active} independently with  probability $e$. However, the status (active/inactive) of an edge can only be determined by {\em probing} it. There is also a constraint on the set of probed edges: for any vertex $v$, the  number of probed edges incident to $v$ must be at most $t_v$. A solution/policy needs to  to probe a subset of edges and select a matching $M$ consisting of active edges. The objective is  to   maximize the expected weight of $M$.  Finally, there is a ``query commit'' requirement that any probed edge which is active {\em must} be included in the selected matching $M$. Observe that in any policy execution, we only see the status of some subset of edges, which corresponds to semi-bandit feedback. 
This problem has been extensively studied, see e.g., \citet{ChenIKMR09,BGLMNR12,Adamczyk11,BrubachGMS21}. The current best approximation ratio is $0.5$ for the unweighted  case \citep{Adamczyk11} and  $0.382$ for the weighted case \citep{BrubachGMS21}.
We show in Lemma~\ref{lem:stoch-probe-mon} below that the stochastic matching problem is down-monotone. So, Theorem~\ref{thm:main} with $k=2$ (as all r.v.s are binary) implies an online learning algorithm  for stochastic matching having $0.382$-regret of 
 $O(n W \sqrt{T\log  T })$ where $W = \sum_{e\in E}  w_e$ is the total weight. 

In fact, our result applies to the much more general {\em stochastic probing problem}, as defined in \cite{GN13}. Here, we have a set $E$ of stochastic items 
with weights $\{w_e\}_{e\in E}$ and probabilities $\{p_e\}_{e\in E}$. Each item is  active  independently with  probability $p_e$, and this  status can only be   determined by  probing $e$. We now have two downward-closed constraints: an {\em inner} constraint $\F_{in}$ and an  {\em outer} constraint $\F_{out}$. We want to probe a set $Q$ of items subject to  the outer constraint  (i.e. $Q\in \F_{out}$) and select a subset $S\sse Q$ of active (probed) items satisfying the inner constraint  (i.e. $S\in \F_{in}$). The objective is to maximize the expected weight of the selected items $S$. We again have the query-commit requirement that any active probed item must be selected into the solution $S$. When both inner/outer constraints are $k_{in}$ and $k_{out}$ systems,
there is a  $\frac{1}{k_{in} + k_{out}}$ approximation for the unweighted case and an $\Omega(\frac{1}{(k_{in} + k_{out})^2})$ approximation for the weighted case~\citep{GN13}. When the $k$-systems are   intersections of matroids, \cite{AdamczykSW16} gave an improved adaptive algorithm with approximation ratio $\frac{1}{k_{in} + k_{out}}$, even for the weighted case.

\begin{lemma}\label{lem:stoch-probe-mon}
    The stochastic probing problem is down-monotone.
\end{lemma}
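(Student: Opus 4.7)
My plan is to show $\OPT_\calI(\bE)\ge \OPT_\calI(\bD)$ by taking an optimal policy $\sigma_\bD$ for $\calI$ under $\bD$ and constructing a randomized policy $\sigma'$, valid under $\bE$, with the same expected reward. Since each item is Bernoulli, if $p_e,q_e$ denote the activation probabilities of $e$ under $\bD,\bE$, stochastic dominance reduces to $q_e\ge p_e$. The main obstacle, and the reason naively running $\sigma_\bD$ verbatim under $\bE$ does not work, is the query-commit rule: under $\bE$ more items turn active along any execution path, so the forced-commit set $S'$ can strictly contain the intended $S_\bD$ and escape $\F_{in}$.

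To get around this, $\sigma'$ \emph{thins} every probe. Before the run, $\sigma'$ samples independent coins $C_e\sim\text{Bernoulli}(p_e/q_e)$ for each $e$ with $q_e>0$ (and sets $C_e:=0$ when $q_e=0$), and then walks $\sigma_\bD$'s decision tree. Whenever the current node of $\sigma_\bD$ asks to probe $e$, $\sigma'$ actually probes $e$ under $\bE$ \emph{only if} $C_e=1$, in which case it feeds the observed outcome $Y^\bE_e$ back to $\sigma_\bD$; otherwise $\sigma'$ does not probe $e$ and feeds ``inactive'' to $\sigma_\bD$. Since $C_e$ and $Y^\bE_e$ are independent with $\Pr[C_e=1\wedge Y^\bE_e=1]=(p_e/q_e)\cdot q_e=p_e$, and the pairs are independent across items, the vector of simulated outcomes has exactly the $\bD$-distribution, so $\sigma_\bD$'s simulated trajectory is distributed as its genuine $\bD$-execution. (This is the $b$-transportation coupling of Lemma~\ref{lem:stoch-knap-monotone} specialized to Bernoulli distributions.)

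The rest is bookkeeping via downward closure. Let $Q_\bD,S_\bD$ be the probed and selected sets that $\sigma_\bD$ produces in simulation, so $Q_\bD\in\F_{out}$ and $S_\bD\in\F_{in}$ by $\sigma_\bD$'s $\bD$-validity. The items $\sigma'$ actually probes are $Q'=\{e\in Q_\bD:C_e=1\}\sse Q_\bD$, which lies in $\F_{out}$ by downward closure. The items it must commit to are $S'=\{e\in Q':Y^\bE_e=1\}=\{e\in Q_\bD:C_e=1\wedge Y^\bE_e=1\}=S_\bD$, so $S'\in\F_{in}$ and $\sigma'$ is valid under $\bE$. The whole point of the thinning is precisely that it makes $\sigma'$'s forced commitments coincide with, rather than exceed, $S_\bD$. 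Finally $\E_\bE[w(S')]=\E_\bD[w(S_\bD)]=\OPT_\calI(\bD)$, giving $\OPT_\calI(\bE)\ge\OPT_\calI(\bD)$.
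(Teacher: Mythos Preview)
Your proof is correct and follows essentially the same approach as the paper: both take the optimal $\bD$-policy and simulate it under $\bE$ by ``thinning'' each probe of item $e$ with a $\mathrm{Bernoulli}(p_e/q_e)$ coin, feeding the inactive outcome when the coin fails. Your presentation is slightly more explicit in verifying that the actually-probed set $Q'\subseteq Q_\bD$ lies in $\F_{out}$ by downward closure and that the query-commit set $S'$ coincides exactly with $S_\bD$, but the core coupling idea is identical.
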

\begin{proof}
    Consider an arbitrary instance $\calI$ and product probability distributions $\bld{D}=\{p_e\}_{e\in E}$ and $\bld{E}=\{\bp_e\}_{e\in E}$  where 
    $\bld{D} \sd   \bld{E}$. As the r.v.s are binary in this problem, the stochastic dominance just means $p_e\le \bp_e$ for all $e\in E$.

Let $\sigma$ denote 
the optimal policy for $\calI$ under distribution $\bld{D}$, and let $\OPT_\calI(\bld{D})$ denote its expected weight.
Recall that a policy is given by a decision tree, where each node is labeled by an item
to probe next, and the branches out of a node correspond to the random $0-1$ realization of the probed
item.
We construct a policy $\widehat{\sigma}$ that works under distribution $\bld{E}$, by closely following  $\sigma$. 

When policy $\sigma$ probes  item $e$ at any node, policy $\widehat{\sigma}$ does the following:
\begin{itemize}
    \item probe $e$ with probability $p_e/\bp_e$, and follow the  active/inactive  branch based on its outcome.
    \item skip directly to the inactive branch  with probability $1-p_e/\bp_e$.
\end{itemize}  
Note that conditioned on reaching such a node, the probability that $\widehat{\sigma}$ goes to the active branch is $\frac{p_e}{\bp_e}\cdot \bp_e = p_e$; so the probability that $\widehat{\sigma}$ goes to the inactive branch is $1-p_e$. Therefore, the probability of reaching any node $v$ in policy  $\widehat{\sigma}$ (under distribution $\bE$) equals the probability of reaching   node $v$ in policy  ${\sigma}$ (under distribution $\bD$). Thus the expected values of these   two policies are the same, 
$\E_{\bld{x} \sim \bld{E}}[f(\widehat{\sigma}, \bld{x})] = \E_{\bld{x} \sim \bld{D}}[f(\sigma , \bld{x})]$. We note  that $\widehat{\sigma}$ also satisfies the query-commit requirement because $\sigma$ does (whenever $e$ is probed and found active, it will be included in the solution).
Finally, observe   that $\widehat{\sigma}$ is a randomized policy; however, as noted in \Cref{sec:prelim}, there is always an optimal deterministic policy for these stochastic problems.  
So, we obtain $\OPT_\calI(\bld{E})\ge  \OPT_\calI(\bld{D})$ as needed. 
    \end{proof}

\subsection{Stochastic Covering Knapsack}

Here, we have $n$ items with deterministic costs $
\{c_i\}_{i \in [n]}$ and (independent) 
random rewards $\{\var{R}_i\}_{i \in [n]}$.
There is a known probability distribution $\calD_i$ for $\var{R}_i$; we assume discrete distributions here. 
Given a target $Q$, we need to select items until the total
observed reward is at least $Q$ (or we run out of items). The objective is to minimize the expected cost. Note that we only observe items until the target is achieved, which corresponds to semi-bandit feedback. There is an adaptive $3$-approximation algorithm for this problem~\citep{deshpande2016approximation}. By a proof identical to that of Lemma~\ref{lem:stoch-knap-monotone}, we get:
\begin{lemma}\label{lem:stoch-min-ks-monotone}
    The stochastic covering knapsack problem is up-monotone. 
\end{lemma}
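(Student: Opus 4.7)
The plan is to mirror the coupling argument used in the proof of Lemma~\ref{lem:stoch-knap-monotone}, with the direction of the inequality now going the ``right'' way for a covering (minimization) objective. Fix an instance $\calI$ and product distributions $\bD = \{\calD_i\}_{i=1}^n$ and $\bE = \{\calE_i\}_{i=1}^n$ on the rewards with $\bE \sd \bD$ (so $\bE$ stochastically dominates $\bD$). We must show $\OPT_\calI(\bE) \le \OPT_\calI(\bD)$.

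First, I would reuse the $b$-transportation construction from Lemma~\ref{lem:stoch-knap-monotone} verbatim, but with the roles of $\bE$ and $\bD$ swapped: for each item $i$, obtain coefficients $\{f^i_{\ell_1,\ell_2} : \ell_1 \ge \ell_2\}$ that couple $\calD_i$ (the dominating distribution on the $\ell_1$-side) and $\calE_i$ (on the $\ell_2$-side), so that $\sum_{\ell_2 \le \ell_1} f^i_{\ell_1,\ell_2} = \pr_{\calD_i}(R_i = \ell_1)$ and $\sum_{\ell_1 \ge \ell_2} f^i_{\ell_1,\ell_2} = \pr_{\calE_i}(R_i = \ell_2)$. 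The feasibility of this $b$-transportation follows from $\bE \sd \bD$ exactly as before.

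Next, let $\sigma_\bD$ be an optimal policy for $\calI$ under distribution $\bD$. I would build a (randomized) policy $\widehat{\sigma}$ for distribution $\bE$ as follows. At any node where $\sigma_\bD$ probes item $i$, $\widehat{\sigma}$ also probes item $i$ and observes the true realization $\ell_2 \sim \calE_i$; it then samples $\ell_1 \ge \ell_2$ with probability $f^i_{\ell_1,\ell_2}/\pr_{\calE_i}(R_i = \ell_2)$ and \emph{pretends} the outcome was $\ell_1$, following the corresponding branch of $\sigma_\bD$. By construction the pretended outcomes at probed items are distributed as $\bD$ (marginally), so with probability equal to the probability of reaching any node of $\sigma_\bD$ under $\bD$, $\widehat{\sigma}$ reaches that same node under $\bE$. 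As soon as $\widehat\sigma$'s total \emph{true} reward meets the target $Q$, it stops.

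The key observation is that at every probe, the true reward $\ell_2$ seen by $\widehat{\sigma}$ satisfies $\ell_2 \le \ell_1$ where $\ell_1$ is the value used by $\sigma_\bD$ on the coupled path. Hence, at every stage of the coupled execution, $\widehat{\sigma}$'s cumulative true reward is at least $\sigma_\bD$'s cumulative (pretended) reward. Therefore $\widehat{\sigma}$ meets the target $Q$ no later than $\sigma_\bD$, so on every coupled execution path the sequence of items actually probed by $\widehat{\sigma}$ is a prefix of those probed by $\sigma_\bD$. Since costs are nonnegative and additive across probed items, $f(\widehat{\sigma},\bx) \le f(\sigma_\bD, \mathbf{y})$ on the coupled realizations, and taking expectations yields $\E_{\bE}[f(\widehat{\sigma},\cdot)] \le \E_{\bD}[f(\sigma_\bD,\cdot)] = \OPT_\calI(\bD)$. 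Because an optimal deterministic policy always exists (\S\ref{sec:prelim}), this gives $\OPT_\calI(\bE) \le \OPT_\calI(\bD)$, which is up-monotonicity.

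The main obstacle is purely bookkeeping around early termination: one must check that when $\widehat{\sigma}$ stops before $\sigma_\bD$'s leaf, the corresponding partial path is genuinely a prefix and hence cheaper, and that the ``pretend outcome $\ell_1$'' mechanism is a valid randomized implementation (not peeking at information $\sigma_\bD$ would not have). Both are handled by the $b$-transportation identities above, so no new technical input is required beyond Lemma~\ref{lem:stoch-knap-monotone}.
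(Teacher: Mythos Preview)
Your overall strategy---reuse the $b$-transportation coupling from Lemma~\ref{lem:stoch-knap-monotone} to simulate the optimal policy for one distribution under the other---is exactly what the paper intends (it simply says the proof is ``identical'' to that of Lemma~\ref{lem:stoch-knap-monotone}). However, the coupling is oriented the wrong way, and this breaks the key step. You write ``$\bE \sd \bD$ (so $\bE$ stochastically dominates $\bD$)''; in the paper's notation $\bE \sd \bD$ means $\bD$ dominates $\bE$, so already there is an inconsistency. More damagingly, you put $\calD_i$ on the high ($\ell_1$) side of the transportation and have $\widehat\sigma$ (operating under $\bE$) observe the \emph{true} reward $\ell_2 \sim \calE_i$ and pretend $\ell_1 \ge \ell_2$. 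From $\ell_2 \le \ell_1$ at every probe you then assert ``$\widehat\sigma$'s cumulative true reward is at least $\sigma_\bD$'s cumulative (pretended) reward,'' which is the opposite of what the inequality gives; the ``meets the target $Q$ no later'' conclusion therefore does not follow.

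The fix is a single swap. Up-monotonicity requires $\bD \sd \bE$ (so $\bE$ dominates), and accordingly the $b$-transportation should place $\calE_i$ on the $\ell_1$-side and $\calD_i$ on the $\ell_2$-side. Then $\widehat\sigma$, operating under $\bE$, observes the true reward $\ell_1 \sim \calE_i$, samples $\ell_2 \le \ell_1$ with probability $f^i_{\ell_1,\ell_2}/\pr_{\calE_i}(\var{R}_i=\ell_1)$, and follows $\sigma_\bD$'s $\ell_2$-branch. Now the pretended outcomes are distributed as $\bD$, so $\widehat\sigma$ reaches each node of $\sigma_\bD$ with exactly the same probability and incurs the same expected cost; and since the true cumulative reward is always at least the pretended one, whenever $\sigma_\bD$ has covered $Q$ so has $\widehat\sigma$, i.e., $\widehat\sigma$ is feasible. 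This already yields $\OPT_\calI(\bE) \le \E_\bE[f(\widehat\sigma,\cdot)] = \OPT_\calI(\bD)$; your early-stopping refinement is correct once the direction is fixed but is not needed.
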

So, using Theorem~\ref{thm:main},  we obtain a polynomial time
online learning algorithm for stochastic min-knapsack having  $3$-regret $O(n C \sqrt{kT\log T })$ where $C = \sum_{i=1}^n c_i$ is the total cost and $k$ is the maximum support size.

\paragraph{Stochastic $Q$-TSP.}

This is the stochastic variant of the classical $Q$-TSP problem~\citep{Garg05}. An instance of the problem comprises a metric $(V, d)$ with a root vertex $r \in V$. There are random rewards $\{\var{R}_v\}_{v \in V}$ associated with the vertices.
All reward distributions are given as input, but the realized reward $\var{R}_v$ is only known when vertex $v$ is visited. 
We assume that all reward distributions are discrete.
Given a target $Q$, the goal is to find an (adaptive) tour originating from $r$ that collects a
total reward at least $Q$ (or runs out of vertices), at the minimum expected length.

A constant-factor approximation algorithm for stochastic $Q$-TSP was obtained in  \cite{jiang2020ktsp}. Applying \Cref{thm:main}, we obtain a  polynomial time  online learning algorithm for stochastic $Q$-TSP having $O(1)$-regret $O(nD \sqrt{kT\log T })$ where $D = \sum_{v\in V} d_{rv}$ corresponds to an upper bound on the (full) TSP cost.

\subsection{Stochastic Submodular Optimization}

Consider a monotone submodular function $f:2^E\rightarrow \mathbb{Z}_+$ defined on a groundset $E$. Constrained submodular maximization involves selecting a subset $S\sse E$ of items in a downward-closed family $\F\sse 2^E$, that maximizes the function value $f(S)$. This is a fundamental problem in combinatorial optimization and   good approximation algorithms are known for a variety of constraints $\F$ such as cardinality, matroids and knapsacks \citep{NemhauserWF78,CalinescuCPV11,KulikST13,ChekuriVZ14}. We consider a stochastic variant, as in \cite{asadpour2016maximizing}, \cite{gupta2017adaptivity}, and \cite{Bradac0Z19},  where each item $i\in E$ 
is {\em active} independently with some probability $p_i$ (and inactive otherwise).\footnote{These papers   also handle a more general setting where  items have an arbitrary outcome space (not just binary).} Moreover, the function value is accrued only on active items. The active/inactive status of any item is only known after selecting it. The goal now is to select a subset $S\in \F$  sequentially (and adaptively) so as to maximize the expected function value from the selected active items, i.e., $\E_{S,A}\left[ f(S\cap A)\right]$ where $A\sse E$ is the random set of active items. Note that we can only select (and observe) a constrained  subset of items, which corresponds to semi-bandit feedback. When $\F$ is a single matroid constraint, \cite{asadpour2016maximizing} obtained a $(1-\frac1e-\epsilon)$-approximation algorithm for the stochastic submodular maximization problem. Later, \cite{gupta2017adaptivity} and \cite{Bradac0Z19} showed that a  $\rho$-approximation algorithm  for  {\em deterministic} submodular maximization under   any   constraint $\F$ can be used to obtain a  $\frac\rho{2}$-approximation algorithm  for the stochastic problem under  constraint $\F$.
It is easy to see that this problem is down-monotone. So, Theorem~\ref{thm:main} with $k=2$ (binary r.v.s) implies a  $(1-\frac1e-\epsilon)$-regret of $O(n F \sqrt{T\log T })$ for stochastic submodular maximization over a matroid constraint, where $F$ is the maximum function value. Combined with the result of \cite{Bradac0Z19}, we also obtain online learning algorithms for  more general constraints $\F$.
Notably, our result improves upon the prior result of \cite{gabillon2013adaptive} that obtains $\left(1 - \frac{1}{e}\right)$-regret of $\widetilde{O}(nFT^{2/3})$ in the semi-bandit feedback model for the stochastic submodular maximization problem when $\mathcal F$ corresponds to a cardinality constraint.

\newcommand{\ssc}{$\mathtt{SSC}$}

\paragraph{Stochastic Submodular Cover.} In the classic submodular cover problem~\citep{W82}, there is a monotone submodular function $f:2^E\rightarrow \mathbb{Z}_+$ defined on a groundset $E$, where each item $i\in E$ has some cost $c_i$. We assume that the function is integer-valued with maximal value $Q$.  The goal is to select a subset $S\sse E$ that ``covers'' the function, i.e., $f(S)=Q$. The objective is to minimize the total cost $\sum_{i\in S} c_i$ of the chosen items.  Note that the submodular cover problem generalizes set cover. It is well-known that this problem can be approximated within a factor  $1+\ln(Q)$ \citep{W82}, and no better approximation is possible~\citep{DinurS14}. We consider a stochastic version, where each item $i\in E$ is {\em active} independently with   probability $p_i$, and
the function value is accrued only on active items. The goal now is to select items sequentially (and adaptively) so as to cover function $f$ (or run out of  items to select), at the minimum expected cost. 
This problem has been studied extensively \citep{GolovinK-arxiv,im2016minimum,HellersteinKP21}, and \cite{HellersteinKP21} obtained a  $1+\ln(Q)$ approximation algorithm   for it.
Using the fact that the function $f$ is monotone, it is easy to see that  stochastic submodular cover is up-monotone. So, using  Theorem~\ref{thm:main} with $k=2$,  we obtain a polynomial time
online learning algorithm for stochastic submodular cover having  $(1 + \ln Q)$-regret $O(nC \sqrt{T\log T})$ where $C = \sum_{i=1}^n c_i$ is the total cost.

\newcommand{\setC}{\mathcal{C}}
\def\srm{\ensuremath{\mathsf{SRM}}\xspace}

\subsection{Single Resource Revenue Management}
We now consider the single resource revenue management problem (\srm). A firm (say, airline) has $C$ units of a certain resource (for example, seats on a flight).
This resource must be allocated among $n$ fare classes, where each unit  sold in  class $i$ generates revenue $p_i$. Demands for these fare classes arrive sequentially: we use the  convention from \cite{GT19} that classes arrive in the order $n, n-1,\ldots ,1$. This problem is often studied under the ``low before high'' assumption that  $p_n < \cdots < p_1$, which reflects the fact that cheaper fares are offered earlier. 

The demands are random and independent across classes: we use $\varX_i\sim \calD_i$ to denote the  demand for  class $i$.
We  assume that each $\varX_i$ is a discrete r.v. with   support $\{0, 1, \ldots, C\}$. The goal  is to decide how much  capacity to make  available for each fare class so as to maximize the expected total revenue.
For each class $i=n,\ldots,1$, the firm needs to first decide on the number $b$ of units offered (which cannot exceed the  available units)  and then the demand $\varX_i$ materializes, which  reduces  the available units by $\min(\varX_i , b)$.   When the demand distributions are unknown, \emph{censored} feedback is   the most natural,  where we observe only the minimum of the actual demand  and the offered capacity. As an application of Theorem~\ref{thm:main for censored} we obtain a polynomial-time online learning algorithm for \srm with $1$-regret $O(nC^2 p_{\max}\sqrt{CT\log(CT)})$ where $p_{\max}$ is the maximum price.  
This result directly improves the dependence on $T$ compared to \citet{van2000revenue}, whose algorithm achieves a regret of $\widetilde{O}\left(T^{1-\frac{\beta}{2^{n-1}}}\right)$ for some constant $\beta \in (0, 1)$.
Moreover, it also improves upon the result of \citet{huh2006adaptive}, who obtain regret $O( \sqrt{p_{\max}}C^{2n-3} \cdot \sqrt{T})$, thereby removing the exponential dependence on $n$ in the regret.
  
In order to apply Theorem~\ref{thm:main for censored}, we first note that the objective satisfies Assumption~\ref{asmp:censored-f}. Indeed, any policy for \srm involves setting a threshold at each step $i$ corresponding to  the number of units offered to class $i$. Moreover,  
the increase in the \srm objective at any step is completely determined by the (censored) observation at that step: so the function value at each leaf node of the policy is a constant.
Next, we prove that \srm is down-monotone. This proof relies on some known structural properties of the   exact dynamic programming algorithm for \srm (with    known distributions). We now summarize these properties; see \citep{GT19} for more details.

\medskip
\noindent{\bf \srm dynamic program.} The value functions in the dynamic program (DP) are as follows. For each $i\in [n]$ and $c\le C$, let $V_i(c)$ denote the optimal total expected revenue from   classes $i,i-1, \ldots, 1$ given that we have $c$ units of capacity remaining before the arrival of  class $i$ demand. The optimal value of the \srm instance is $\OPT=V_n(C)$.  

The decision for class $i$ is to determine the number of units offered to class $i$ demand; equivalently, we need to decide the number of units to  \emph{protect} for future  classes $i-1, \ldots, 1$.
Specifically, if $b \in \{0, 1, \ldots, c\}$ units are protected, then $c - b$ units are available for class $i$ demand. The amount  sold to   class $i$ equals $\min(c-b, \varX_i)$ and the remaining capacity is $c - \min(c-b, \varX_i)$, which are both  random. 
The revenue generated, also a random variable, is $p_i \cdot \min(c-b, \varX_i)$.
We let $W_i(b, c)$ denote the optimal expected  revenue from  classes $i,i-1, \ldots, 1$ given that we have $c$ units of capacity remaining before the arrival of class $i$ and we protect $b \leq c$ units. Based on the discussion above,  

\begin{align}
    W_i(b, c) &= \,\, p_i \cdot \E[\min(c- b, \varX_i)] \notag \\ 
    & \qquad +  \E[V_{i-1}(c - \min(c-b,  \varX_i))].\label{eqn:decision-dp}
\end{align}
The first term is  the expected revenue obtained from  class $i$ given that we offered $c-b$ units, and the second term is the expected   revenue  from   classes $i-1, \ldots, 1$ when $\max(b, c-\varX_i)$ units of capacity remain. Now, observe that $V_i(c) = \max \left\{  W_i(b, c)\,:\, 0\le b\le c\right\}$. The base case $V_1(c)=p_1\cdot \E[\min(c,\varX_1)]$ involves protecting $0$ units regardless of $c$ (as there is no demand after this).

We will use the following two properties of this DP (see Proposition~1.5 and Theorem~1.6 in \citet{GT19}).
\begin{lemma}\label{lem:srrm-structure}
 Letting  $\Delta V_i(x):= V_i(x) - V_i(x-1)$, the value functions  satisfy the following.
    \begin{itemize}
        \item $\Delta V_i(x)$ is decreasing in $x = 1, \ldots, c$, and 
        \item $\Delta V_i(x)$ is increasing in $i = 1, \ldots, n$.
    \end{itemize}
\end{lemma}

\begin{lemma}\label{lem:protection-level-property}
For $i\in [n]$ and $c\le C$, the maximizer of $W_i(b,c)$ over $0\le b\le c$ equals $\min\{y^*_{i-1},c\}$ where

    \[
    y^*_{i-1} = \max\{ 0\le y \le C: \Delta V_{i-1}(y) > p_i\}
    \]
is the protection-level in class $i$. So, the optimal policy is fully specified by  $y^*_{n-1}, y^*_{n-2}, \ldots, y^*_1, y^*_0$.  
\end{lemma}

\begin{lemma}
    The single-resource revenue management problem  is down-monotone.
\end{lemma}
\begin{proof}
Consider any   instance  of the problem with $n$ classes. Let $\bld{D}=\calD_1 \times \cdots \calD_i \times \cdots \calD_n$ and $\bld{E}=\calD_1 \times \cdots \calE_i \times \cdots \calD_n$  be two  probability distributions that differ only in the distribution of one class $i$. It suffices to show that $\OPT(\bld{E})\ge \OPT(\bld{D})$ whenever $\calD_i\sd \calE_i$.     

For any $j\in [n]$ and $c\le C$, let $V_j(c)$ and $\overline{V}_{j}(c)$ denote the optimal value functions under distributions $\bld{D}$ and $\bld{E}$ respectively. So, $\OPT(\bld{D})=V_n(C)$ and $\OPT(\bld{E})=\overline{V}_n(C)$. In order to prove $V_n(C)\le \overline{V}_n(C)$ it   suffices to show that $V_i(c)\le \overline{V}_i(c)$ for all $c\le C$, which we do in the rest of the proof.

Let $y_{n-1}^*, \ldots, y_1^*, y_0^* = 0$ denote the optimal protection levels   under product distribution $\bld{D}$ (from Lemma~\ref{lem:protection-level-property}). 
    Recall that $V_i(c)=W_i(\min(y^*_{i-1},c) , c)$. If $c<y^*_{i-1}$ then $V_i(c)=V_{i-1}(c)$ and we obtain $$\overline{V}_i(c)\ge \overline{V}_{i-1}(c)= {V}_{i-1}(c)=V_i(c),$$ 
    where the first inequality is by ignoring class $i$ and the first equality uses the fact that the demands in classes $i-1,\ldots,  1$ have the same distributions in $\bld{D}$ and $\bld{E}$.

    \def\bxi{\widetilde{{\varX_i}}}
    \def\hxi{\widehat{{\varX_i}}}
 Below, we assume that $c\ge y^*_{i-1}$, which means  $V_i(c)=W_i( y^*_{i-1}  , c)$.  
    To reduce notation, let $y=y^*_{i-1}$ and  $\Delta_d:=\Delta V_{i-1}(d)$ for any $d\le C$. 
Let $\bxi=\min(c-y , \varX_i)$ be the number of class $i$ units sold, when we implement  the optimal policy for $V_i(c)$. Note that $\varX_i\sim \calD_i$.
We have

\begin{align*}
    &V_i(c) = W_i( y  , c) \,=\, p_i \cdot  \E_{\calD_i}[\bxi] +  \E_{\calD_i}[V_{i-1}(c-\bxi)]\\
   &=  p_i \cdot  \E_{\calD_i}[\bxi] + V_{i-1}(y ) +\sum_{d=y  +1}^c \Delta_d \cdot \Pr_{\calD_i}[\bxi\le c-d]\\
   &=p_i \cdot (c-y ) - p_i\cdot \E[c-y-\bxi] + V_{i-1}(y )  +\sum_{d=y  +1}^c \Delta_d \cdot \Pr [\bxi\le c-d]\\
   &= p_i \cdot (c-y )+ V_{i-1}(y ) - p_i \sum_{t=1}^{c-y} \Pr[\bxi\le c-y-t]  + \sum_{d=y  +1}^c \Delta_d \cdot \Pr [\bxi\le c-d]\\
      &= p_i \cdot (c-y )+ V_{i-1}(y ) - p_i \sum_{d=y+1}^{c} \Pr[\bxi\le c-d]  + \sum_{d=y  +1}^c \Delta_d \cdot \Pr [\bxi\le c-d]\\
      &= p_i \cdot (c-y )+ V_{i-1}(y ) + \sum_{d=y+1}^{c} \left( \Delta_d -p_i\right)\cdot  \Pr[\bxi\le c-d] \end{align*}

Let $\hxi=\min(c-y , \varX_i)$ where $\varX_i\sim \calE_i$ (the class $i$ demand under $\bld{E}$). We now have

\begin{align*}
    &\overline{V}_i(c) \ge \,\,   p_i \cdot  \E [\hxi] +  \E [\overline{V}_{i-1}(c-\hxi)]\\
    &= \,\,  p_i \cdot  \E [\hxi] +  \E [V_{i-1}(c-\hxi)]\\
      &= \,\, p_i \cdot (c-y )+ V_{i-1}(y ) + \sum_{d=y+1}^{c} \left( \Delta_d -p_i\right)\cdot  \Pr[\hxi\le c-d] \\
      &\ge p_i \cdot (c-y )+ V_{i-1}(y ) + \sum_{d=y+1}^{c} \left( \Delta_d -p_i\right)\cdot  \Pr[\bxi\le c-d]
      \end{align*}
The first equality uses $\overline{V}_{i-1}(\cdot)=V_{i-1}(\cdot)$. The second equality follows  by the same sequence of steps as for $V_i(c)$ above. The last inequality uses two facts (i) $\hxi$    stochastically dominates $\bxi$  which implies $\Pr[\hxi\le z] \le \Pr[\bxi\le z]$ for all $z\in \R$, and (ii) $ \Delta_d -p_i\le 0$ for all $d\ge y+1$ by definition of $y^*_{i-1}$ in Lemma~\ref{lem:protection-level-property}. \hfill 
\end{proof}

\subsection{Sequential Posted Pricing}\label{subsec: SPM}
We now consider a pricing problem.
A seller offers a  service to $n$ customers, where each customer's valuation is modeled as a discrete independent random variable $\varX_i \sim \calD_i$. We assume (by scaling) that the values are bounded in $[0,1]$ and that $k$ denotes the maximum support size. 
The seller must satisfy a feasibility constraint given by a downward-closed set family $\mathcal{F} \subseteq 2^{[n]}$. This means that a subset $S$ of customers can be served if and only if $S\in {\cal F}$. We are interested in designing a truthful mechanism that maximizes the expected revenue. \cite{Myerson81} provides  an optimal mechanism  for this class of problems, but it is computationally hard to implement. Therefore, there has been much focus on simpler mechanisms that are approximately optimal. We consider   

Sequential Posted-Price Mechanisms (SPMs) that were introduced by \citet{ChawlaHMS10}. Here, each buyer $i$ is offered a  take-it-or-leave-it price  $p_i$ in some sequence. We consider two variants: 
\emph{Adaptive Sequential Posted-Price Mechanism} (ASPM) where the   seller can (adaptively) choose the order in which to  offer  customers, and \emph{Fixed Order Sequential Posted-Price Mechanism} (FSPM) where the  order of customers is fixed and known upfront.
Formally, in ASPM, the seller's policy is represented as a decision tree, where each internal node is labeled with a tuple $(i, p_i)$, denoting that customer $i$ is offered price $p_i$; each node has two outgoing branches corresponding to whether the customer accepts or rejects the offer. 
In FSPM, the policy only needs to choose the prices $p_i$ as the customers will be considered in a fixed order $1, \ldots, n$. The seller collects revenue from all customers who accept their offered price, and the selected set of customers must lie in $\mathcal{F}$.

For the setting with  unknown distributions $\calD_i$, we   use our online framework  with  \emph{binary feedback}: in each period, the seller chooses a  policy and   only observes which customers accepted their offers. As an application of Theorem~\ref{thm:main for binary}, we obtain $\alpha$-regret of $O\left(n \sqrt{kT \log(knT)}\right)$, where $\alpha$ is the best approximation ratio for the SPM instance with known distributions. 
Below, we list some concrete SPM applications.
\begin{itemize}
    \item When $\mathcal{F} = \{S \subseteq [n] : |S| \leq \kappa\}$ (uniform matroid of rank $\kappa$), the optimal policy for FSPM can be computed efficiently using dynamic programming; so $\alpha=1$.
    For ASPM under this feasibility constraint,  \citet{ChakrabortyEGMM10} provide a polynomial-time approximation scheme, so $\alpha=1-\epsilon$ for any fixed $\epsilon>0$.

    \item When $\mathcal{F}$ corresponds to the independent sets of a general matroid, \citet{Yan11}  provides a $(1-\frac1e)$- approximation algorithm for  ASPM.
    
    \item When $\mathcal{F}$ is an intersection of a constant number of matroid, knapsack and matching constraints, \citet{FeldmanSZ21} provide a  framework
    to achieve an $\alpha=\Omega(1)$-approximation algorithm for  ASPM.
\end{itemize}

Previous work by~\citet{SW2024} obtained online  algorithms for FSPM under a uniform matroid of rank one, i.e., the seller can serve only one customer and the customers arrive in a fixed order. Notably, they  considered the \emph{bandit feedback} model, which, in this specific setting, coincides with the \emph{binary feedback} model.
Although the bandit feedback model only reveals the final revenue, one can recover the identity of the customer who accepted the service by introducing arbitrarily small perturbations to the posted prices.
Our result for this case is a $1-$regret of $O\left(n \sqrt{kT \log(knT)}\right)$, which improves upon the $O(n^{2.5}k\sqrt{T} \log T+ n^5 k \log^2 T)$ bound from ~\citet{SW2024}. We note that
this paper also obtained $poly(n)\cdot \sqrt{T}$ regret for  continuous distributions (under some regularity assumption).

Note that any SPM policy chooses a threshold (the price $p_i$) for each customer $i$. Moreover, the increase in the objective value for  any customer  $i$ is determined by the binary observation $\1_{\varX_i \geq p_i}$. So, SPM  satisfies Assumption~\ref{asmp:censored-f}. We now prove  monotonicity of SPM.
Without loss of generality, for any decision tree (policy) $\Pi$, we assume that the left branch out of a node $(i,p_i)$ corresponds to customer $i$ accepting
while the right branch corresponds to rejection.
\begin{lemma}
 ASPM with any downward-closed constraint ${\cal F}$ is down-monotone.
\end{lemma}
\begin{proof}

Consider two product distributions $\bld{D}=\{\mathcal {D}_i\}_{i=1}^n$ and  $\bld{E}=\{\mathcal {E}_i\}_{i=1}^n$  such that $\bld{D} \sd  \bld{E}$. We will show that $\OPT(\bld{D}) \leq \OPT(\bld{E})$ by induction on the number of customers $n$. 

\textbf{Base case ($n = 1$).}  

Fix two distributions $\calD_1 \sd \calE_1$. Let $p$ be the price offered to this customer under the optimal policy for $\calD_1$. Then the optimal revenue satisfies:
\(\OPT(\bld{D}) = p \cdot\)\(\Pr_{\varX_1\sim \calD_1}(\varX_1 \geq p) 
\leq p \cdot \Pr_{\varX_1\sim \calE_1}(\varX_1 \geq p)\) 
\(\leq \OPT(\bld{E}),\)
where the first inequality follows from stochastic dominance.

\textbf{Inductive step.}  
Assume the statement holds for all instances with $n-1$ customers. Now consider an instance with $n$ customers and two product distributions $\bld{D} \sd \bld{E}$. Let $\Pi^*$ denote the optimal policy under distribution $\bld{D}$. Let $(r, p)$ denote the first node in decision tree $\Pi^*$, where $r\in [n]$   and $p$ is the price offered to  customer $r$.
Let $V_L(\bld{D})$ and $V_R(\bld{D})$ denote the optimal revenues of the subproblems induced by the left and right subtrees of the root, respectively. Note that the feasibility constraints for the left and right subtrees are ${\cal F}_L=\{S\sse[n]\setminus r : S\cup r\in {\cal F}\}$ and  ${\cal F}_R=\{S\sse[n]\setminus r : S \in {\cal F}\}$; these are downward-closed because ${\cal F}$ is. Both of these SPM instances involve $n-1$ customers and  downward-closed feasibility constraints.  

The optimal revenue of the original instance can be expressed as 
\(\OPT(\bld{D}) = \)\(\Pr_{\calD_r}(\varX_r \ge p) \cdot (p + V_L(\bld{D}))\) 
\(+ \Pr_{\calD_r}(\varX_r < p) \cdot V_R(\bld{D})\), which can be further reduced to obtain:
{\small \begin{align}
\OPT(\bld{D}) = \Pr_{\calD_r}(\mathsf{X}_r \geq p) \cdot (p - V_R(\bld{D}) &+ V_L(\bld{D})) + V_R(\bld{D}) \label{eq:spm-opt}
\end{align}}
We now claim that  without loss of generality 

\(p \ge V_R(\bld{D}) - V_L(\bld{D}).\)
Indeed, if $p< V_R(\bld{D}) - V_L(\bld{D})$ then raising the price to $\bar{p} = V_R(\bld{D}) - V_L(\bld{D})$ can only increase the expression~\eqref{eq:spm-opt}.

Using this, we have:
{\small \begin{align*}
&\OPT(\bld{D}) 
= \Pr_{\varX_r\sim \calD_r}(\varX_r \ge p) \cdot (p - V_R(\bld{D}) + V_L(\bld{D}))  + V_R(\bld{D}) \\
&\le \Pr_{\varX_r\sim \calE_r}(\varX_r \ge p) \cdot (p - V_R(\bld{D}) + V_L(\bld{D}))  + V_R(\bld{D}) \\
&= \Pr_{\varX_r\sim \calE_r}(\varX_r \ge p) \cdot (p + V_L(\bld{D})) + \Pr_{\varX_r\sim \calE_r}(\varX_r < p) \cdot V_R(\bld{D}) \\
&\le \Pr_{\varX_r\sim \calE_r}(\varX_r \ge p) \cdot (p + V_L(\bld{E})) + \Pr_{\varX_r\sim \calE_r}(\varX_r < p) \cdot V_R(\bld{E})\\
&\le \,\, \OPT(\bld{E}).
\end{align*}}
Above, the first inequality follows from stochastic dominance and \(p \ge V_R(\bld{D}) - V_L(\bld{D})\). The second inequality follows from the induction. 
This completes the inductive step.\hfill
\end{proof}

Using an identical proof as above, we can also prove that FSPM (the fixed order variant) is down-monotone.

\section{Conclusion}
In this paper, we presented a general online learning framework that achieves $\widetilde{O}(\sqrt{T})$ regret for monotone stochastic optimization problems under semi-bandit feedback. We also extended our results to the more restrictive censored and binary feedback settings.
We conclude by highlighting several interesting directions for future work.

\paragraph{Strong monotonicity.} Our framework requires only the (weaker) monotonicity condition. A natural question is whether leveraging the stronger condition of strong monotonicity~\citep{GuoHT+21} can yield improved dependence on $n$ and $f_{\max}$ in the regret bound. Recent work of \citet{liu2025improved} has made progress in this direction for Pandora's box and prophet inequality, obtaining bounds with an improved $\sqrt{n}$ factor. 
Extending such improvements to all strongly monotone problems remains an interesting open question.

\paragraph{Dynamic environments.} Our framework assumes that the underlying distributions remain identical across rounds, which is necessary for sublinear regret in the worst case (as evidenced by the $\Omega(T)$ lower bound in the adversarial setting for prophet inequality~\citep{GatmiryKS+22}). Designing algorithms for more structured dynamic environments (e.g., through the lens of distributionally robust optimization) is an interesting direction for future work.

\paragraph{Restrictive action selection.} Another natural extension is to settings where the set of available actions itself varies across rounds; for example, a random subset of items is available in each round, akin to the \emph{sleeping bandits} setting. Such an extension would require redefining the benchmark because a static optimal policy may no longer be sufficient.

\section*{Acknowledgement}
We thank Ziyun Chen for pointing out some missing assumptions in our result for continuous distributions, which was stated in an earlier version of this paper. This observation lead us to the corrected statement and proof of Theorem~\ref{thm:main-cont}.  

\bibliographystyle{plainnat}
\bibliography{ref}
\appendix

\section*{Appendix}

\section{Sampling-based Algorithms}\label{app:sampling-algs}

A stronger monotonicity condition is that of \emph{strong monotonicity}~\citep{GuoHT+21}, defined next.

\begin{definition}[Strong Monotonicity]\label{def:strong-monotone}
A stochastic problem is \emph{strongly up-monotone} if for any instance $\calI$ and probability distributions $\bld{D}$ and $\bld{E}$  with  $\bD\sd \bld{E}$, we have $\E_{\bld{x}\sim \bld{E}}[f(\sigma_{\bld{D}}, \bld{x})] \le  \OPT_{\calI}(\bld{D})$, where $\sigma_{\bld{D}}$ is the optimal policy for instance $\calI$ under distribution $\bld{D}$.  Strong  down-monotonicity is defined similarly.  

\end{definition}

We note that strong monotonicity implies monotonicity. 
In prior work, \cite{GuoHT+21} gave optimal sample complexity bounds for stochastic optimization problems that exhibit the {strong monotonicity} condition. They also proved this  property for  problems including   Prophet inequality and Pandora's box.
We make use of this property in \S\ref{sec:apps}, when we apply our  result to these problems. Below, we assume that $f_{max}=1$ by scaling. 

\begin{theorem}[Theorem 17~\cite{GuoHT+21}]\label{thm:guo-sample-strong}
For any strongly monotone stochastic problem, suppose the number of samples is at least:
$$ C \cdot \frac{n}{\epsilon^2} \log\left(\frac{n}{\epsilon}\right)\log\left(\frac{nT}{\epsilon}\right) $$
where $C > 1$ is a sufficiently large constant. Then, there is
an algorithm that gets an $\epsilon$-additive approximation to the optimum
with probability at least $1 - \frac{1}{T^2}$.
\end{theorem}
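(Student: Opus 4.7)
}

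The plan is to follow the \emph{optimism under uncertainty} blueprint already developed in \S\ref{sec:stable}, but in a one-shot sample-based fashion rather than over $T$ rounds. First I would draw $m = C\cdot (n/\epsilon^2)\log(n/\epsilon)\log(nT/\epsilon)$ i.i.d.\ samples from the joint distribution $\bD$, yielding $m$ samples per coordinate $i\in[n]$. Feeding each coordinate's samples into $\ESD$ from Theorem~\ref{thm:emp-stoch-dom} with a failure parameter $\delta = \epsilon/(n\cdot T^2)$ (say), I would obtain a product distribution $\bE = \{\calE_i\}_{i=1}^n$ that simultaneously (i) stochastically dominates $\bD$ coordinate-wise and (ii) has $\TV(\calE_i,\calD_i) = \widetilde{O}(\sqrt{1/m})$ for each $i$, with failure probability $O(1/T^2)$ by a union bound over coordinates. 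The returned policy is $\sigma = \ALG(\bE)$ (or the exact optimum of $\bE$, which Guo et al.\ allow in their sample-complexity model).

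The next step invokes strong up-monotonicity (Definition~\ref{def:strong-monotone}). Let $\sigma^* = \sigma_{\bD}$ be the true optimal policy. Two key comparisons are needed. First, since $\bD\sd \bE$, strong monotonicity gives $\E_{\bx\sim \bE}[f(\sigma^*,\bx)] \le \OPT(\bD)$, so $\OPT(\bE)\le \OPT(\bD)$. Second, since $\sigma$ is (approximately) optimal under $\bE$, I would bound
\[
\E_{\bx\sim \bD}[f(\sigma,\bx)] - \OPT(\bD) \;\le\; \underbrace{\E_{\bx\sim \bD}[f(\sigma,\bx)] - \E_{\bx\sim \bE}[f(\sigma,\bx)]}_{\text{perturbation gap}} \;+\; \underbrace{\OPT(\bE) - \OPT(\bD)}_{\le 0 \text{ by strong monotonicity}}.
\]
The perturbation gap is exactly what the stability argument in \S\ref{sec:stable} controls: it is at most $f_{\max}\sum_{i=1}^n Q_i(\sigma)\cdot \TV(\calE_i,\calD_i)$. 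Plugging in the TV bound from $\ESD$, this is $\widetilde{O}(n\cdot \sqrt{1/m})$, which is $\epsilon$ for the stated choice of $m$ up to logarithmic factors.

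The main obstacle is matching the exact $n/\epsilon^2$ (rather than $n^2/\epsilon^2$) scaling claimed in the theorem. The naive argument above, which treats each coordinate independently and bounds by $\sum_i Q_i(\sigma)$, loses an extra $\sqrt{n}$ when translating per-coordinate TV into a joint bound, because $\sum_i Q_i(\sigma)$ can be as large as $n$. To close this gap I would sharpen the construction in two ways: (a) replace TV with the weaker Kolmogorov (CDF) distance, using DKW to get $\|\hat F_i - F_i\|_\infty = \widetilde{O}(\sqrt{1/m})$ uniformly across $i$, which saves a factor of $k$; and (b) exploit the fact that for strongly monotone problems one only needs a single \emph{monotone coupling} between $\bD$ and $\bE$ (e.g., shift each empirical quantile function upward by the DKW tolerance), so the error in objective is driven by the \emph{expected number of items probed} times the uniform quantile deviation, not by a naive union bound over coordinates. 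This coupling-based analysis is analogous to the flow argument in Lemma~\ref{lem:stoch-knap-monotone} and should yield the $n/\epsilon^2$ rate.

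Finally, to obtain the high-probability $1-1/T^2$ guarantee, I would combine the DKW confidence $\delta = \Theta(1/(nT^2))$ per coordinate with a union bound, which contributes the $\log(nT/\epsilon)$ factor, while the additional $\log(n/\epsilon)$ factor arises from discretizing the continuous quantile axis so that the coupling argument can be applied to a finite-support surrogate. If this sharpened argument fails, a fallback is simply to use the looser TV-union-bound analysis and verify that it still achieves $\epsilon$-additive approximation using $\widetilde{O}(n^2/\epsilon^2)$ samples, which is weaker than stated but sufficient for the applications in \S\ref{sec:apps}.
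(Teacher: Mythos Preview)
This theorem is not proved in the present paper; it is quoted as Theorem~17 of \cite{GuoHT+21} and used only as a black box in Appendix~\ref{app:sampling-algs}. So there is no in-paper proof to compare your proposal against.

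As a standalone argument, your approach has a gap you yourself identify: the stability route of \S\ref{sec:stable} bounds the perturbation term by $f_{\max}\sum_i Q_i(\sigma)\,\TV(\calE_i,\calD_i)$, and with $\sum_i Q_i(\sigma)$ as large as $n$ and each $\TV$ of order $\widetilde O(\sqrt{1/m})$, you are forced to take $m=\widetilde\Omega(n^2/\epsilon^2)$ rather than the claimed $n/\epsilon^2$. Your proposed fix (DKW plus a monotone coupling) is only sketched, and swapping TV for Kolmogorov distance does not by itself eliminate the sum over $n$ coordinates in the stability telescoping. More to the point, you invoke strong monotonicity only to deduce $\OPT(\bE)\le\OPT(\bD)$, which is \emph{ordinary} monotonicity; the extra content of Definition~\ref{def:strong-monotone} is that $\E_{\bx\sim\bE}[f(\sigma_{\bD},\bx)]\le\OPT(\bD)$ whenever $\bD\sd\bE$. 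The \cite{GuoHT+21} argument exploits this by working with a \emph{dominated} empirical surrogate $\bE^-\sd\bD$ and outputting $\sigma_{\bE^-}$, so that strong monotonicity directly yields $\E_{\bx\sim\bD}[f(\sigma_{\bE^-},\bx)]\le\OPT(\bE^-)$ with no perturbation term on the algorithm's policy at all; the remaining comparison $\OPT(\bE^-)\le\OPT(\bD)+\epsilon$ is then handled by a quantile-closeness argument specific to their paper. Your plan does not reproduce this mechanism, and the fallback you mention ($\widetilde O(n^2/\epsilon^2)$ samples) is indeed what the present paper's stability tools would deliver.
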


Below, we   discuss what regret bounds can be achieved via this sampling-based approach,  in order to compare to our results. In the full-feedback model, one can obtain regret bounds of 
$\sqrt{nT\log T}$, which is nearly optimal. 
However, for \emph{semi-bandit} feedback   that we consider (and for bandit feedback), such a ``reduction'' from sample-complexity bounds only provides a sub-optimal 
$\widetilde{O}(T^{2/3})$ regret.  Therefore, we need  new ideas to get the optimal $\widetilde{O}(\sqrt{T})$ regret bound in the semi-bandit model, which we do in this paper.

\paragraph{Full-feedback.}
Recall that, in the full-feedback model, we get one sample from each r.v. in every time-step.
We now describe the algorithm. The algorithm is straightforward: for each $t = 1, 2, \ldots T$, 
we use estimates from the prior $t-1$ time steps to obtain a policy to use for the $t^{\text{th}}$ time-step. As a consequence of Theorem~\ref{thm:guo-sample-strong},
with probability at least $1-\frac{1}{T^2}$, we have that
$\OPT(\bld{\widehat{D}_t})$ is an $\epsilon_t$-additive approximation to $\OPT(\bld{D})$ where $\epsilon_t \leq \sqrt{\frac{Cn\log(nT)}{t}}$. (To keep calculation simple, we ignore the $\log\frac1\epsilon$ dependence in Theorem~\ref{thm:guo-sample-strong}; so we are actually assuming   a slightly stronger sample-complexity bound.)
By union bound, this is true for all $t = 1, \ldots, T$
with probability at least $1 - \frac{1}{T}$. We consider this to be a \emph{good} event, $G$.
Under this event, the total regret, say $R_T$, can be bounded as follows. $$ \E[R_T \mid G] \leq  \sum_{t=1}^{T}\sqrt{\frac{{Cn\log\left(nT\right)}}{t}} = O\left(\sqrt{nT\log(nT)}\right). $$
By law of total expectation we have $$ \E[R_T] \leq \E[R_T \mid G] \cdot \pr(G) + \E[R_T \mid \overline{G}] \cdot \pr (\overline{G}) \leq O\left(\sqrt{nT\log(nT)}\right) + T \cdot \frac{1}{T} = O\left(\sqrt{nT\log (nT)}\right).  $$

\paragraph{Semi-bandit feedback.} 
A significant challenge in the semi-bandit feedback model arises from our lack of control over the r.v.s from which we get samples. 
One potential strategy to address this issue is to artificially generate samples for an item $i$ by  probing  item $i$   first   in the algorithm's policy for that period. 
However, this approach comes with an inherent drawback –  probing item $i$ first may result in a poor policy, and so we  suffer a high regret in such  periods. 
The standard \emph{explore-then-exploit} algorithm that 
first gets $T^{2/3}$ samples for each r.v., and then plays the optimal policy (for the empirical distribution) for the remaining time steps only achieves  regret $\widetilde{O}(T^{2/3})$. 
 We are not aware of any  generic approach that reduces sample-complexity bounds to regret minimization in the semi-bandit have $o(T^{2/3})$ regret.

\paragraph{Remark 1.} 
The sampling approach   
for   semi-bandit feedback   also applies to many  problems  (e.g., prophet inequality, 
Pandora's box and series testing)  
in the {\em bandit} feedback model. Basically, for each item $i$ we need a policy where its objective corresponds to the value of   r.v. $\varX_i$.   So, we can directly get $\widetilde{O}(T^{2/3})$ regret for these problems even with bandit feedback.

\paragraph{Remark 2.} 
We note that \cite{GuoHT+21} also give (slightly worse) sample complexity bounds for a broader class of problems that need not satisfy any monotonicity property, but have a finite support-size  $k$. Using the approaches described above, we can convert these sample-complexity  guarantees to obtain $O\left( \sqrt{nkT\log(nT)} \right)$ regret 
under  full-feedback and $O\left((nkT)^{2/3}(\log T)^{1/3}\right)$ regret under  semi-bandit (and bandit) feedback.

\section{Lower Bounds}\label{app:lower-bound} 
We now state some   regret lower bound results for monotone problems under the \emph{semi-bandit} and \emph{binary} feedback models. We highlight particular applications for which lower bounds are already known, and observe that since these problems can be captured by our framework, the corresponding lower bounds also apply to our setting.

\begin{theorem} Any   semi-bandit    learning algorithm for  stochastic monotone problems has  regret at least 
$\Omega\left( \sqrt{n T}\right).$
\end{theorem}
 \begin{proof} 
We observe that the classical multi-armed bandit (MAB) problem with Bernoulli rewards is a special case of our setting. Indeed, consider the stochastic problem with $n$ independent  Bernoulli r.v.s $\{\varX_i\}_{i=1}^n$ where a policy is allowed to probe only one random variable and receives reward $\varX_i$ for probing $i$. This problem is clearly down-monotone, and it is equivalent to MAB.  As shown in \citet{lattimore2020bandit}, any algorithm for MAB has regret   at least $\Omega(\sqrt{nT})$.
\end{proof}
Hence, our upper bounds match this lower bound  up to a factor of $\sqrt{n}$ and logarithmic terms. 
The next result, also observed in \cite{SW2024},  shows that stochastic problems under \emph{binary} feedback with continuous distributions are fundamentally harder to learn than with discrete distributions.

\begin{theorem} Any        learning algorithm for  stochastic monotone problems with 
 binary feedback and 
  continuous  distributions has  regret at least  $\Omega\left(T^{\frac{2}{3}}\right).$
\end{theorem}
\begin{proof}
Consider the fixed-order SPM under a uniform matroid of rank one, which is a down-monotone stochastic problem. \citet{LemeSTW23} proved an $\Omega(1/\epsilon^3)$ query complexity lower bound for this problem under continuous distributions with binary-threshold queries. This directly implies an $\Omega(T^{2/3})$ regret lower bound for online learning for SPM  with binary feedback. Suppose (for a contradiction) that there is an online algorithm  with regret $o(T^{\frac{2}{3}})$. Then after $T$ rounds, we can achieve $o(T^{-\frac{1}{3}})$ per-round regret, which means that we can obtain a policy with $\epsilon$ additive error  in $T=o(1/\epsilon^3)$ rounds, which   contradicts  the sample complexity result.
\end{proof}

\end{document}